\documentclass{article}

\usepackage[final]{neurips_2024}

\usepackage{tabto}  %
\usepackage{stackengine}  %

\usepackage[utf8]{inputenc} %
\usepackage[T1]{fontenc}    %
\usepackage{hyperref}       %
\usepackage{url}            %
\usepackage{booktabs}       %
\usepackage{amsfonts}       %
\usepackage{nicefrac}       %
\usepackage{microtype}      %
\usepackage[table]{xcolor} %
\usepackage{subcaption}     %

\usepackage{natbib}  %

\usepackage{graphicx}

\usepackage{amsmath}
\usepackage{amssymb}
\usepackage{mathtools}
\usepackage{amsthm}
\usepackage{bm}
\usepackage{enumitem}
\usepackage{pifont}
\newcommand{\cmark}{\ding{51}}%
\newcommand{\xmark}{\ding{55}}%

\usepackage{algpseudocode}
\usepackage{algcompatible}
\usepackage{algorithm}

\algdef{SE}[SUBALG]{Indent}{EndIndent}{}{\algorithmicend\ }%
\algtext*{Indent}
\algtext*{EndIndent}

\usepackage{pifont}
\usepackage{xspace}

\usepackage{tabularx}
\usepackage{multirow}
\usepackage{siunitx}
\usepackage{array}
\usepackage{makecell}
\newcolumntype{R}[1]{>{\raggedright\arraybackslash}p{#1}}
\newcolumntype{C}[1]{>{\centering\arraybackslash}p{#1}}
\newcolumntype{L}[1]{>{\raggedleft\arraybackslash}p{#1}}
\sisetup{group-separator={,},group-minimum-digits={3},retain-zero-uncertainty=true}

\theoremstyle{plain}
\newtheorem{theorem}{Theorem}[section]

\newtheorem{lemma}[theorem]{Lemma}

\theoremstyle{definition}

\theoremstyle{remark}

\definecolor{codeblue}{RGB}{64,165,155}
\definecolor{mathblue}{RGB}{71,184,172}
\definecolor{JKUblue}{RGB}{0,132,187} 
\hypersetup{
   linkcolor=JKUblue,
   citecolor=JKUblue,
   filecolor=JKUblue,
   urlcolor=codeblue,
   colorlinks=true,
}

\newcommand\Ba{\bm{a}}

\newcommand\Be{\bm{e}}

\newcommand\Bm{\bm{m}}

\newcommand\Bo{\bm{o}}

\newcommand\Bw{\bm{w}}
\newcommand\Bx{\bm{x}}

\newcommand\Bz{\bm{z}}

\newcommand\BI{\bm{I}}

\newcommand\BO{\bm{O}}

\newcommand\BX{\bm{X}}
\newcommand\BY{\bm{Y}}
\newcommand\BZ{\bm{Z}}

\newcommand\Bet{\bm{\eta}}

\newcommand\Bmu{\bm{\mu}}

\newcommand\Bth{\bm{\theta}}
\newcommand\Bxi{\bm{\xi}}

\newcommand\Bom{\bm{\omega}}

\newcommand\BXi{\bm{\Xi}}

 \newcommand{\dP}{\mathbb{P}} 
 \newcommand{\dR}{\mathbb{R}}

\newcommand{\rC}{\mathrm{C}}  
\newcommand{\rE}{\mathrm{E}}

 \newcommand{\cN}{\mathcal{N}}

\newcommand\sgn{\mathop{\mathrm{sgn}\,}}
\newcommand\argmax{\mathop{\mathrm{argmax}\,}}

\newcommand\nn{\mathrm{new}}

\newcommand\lse{\mathrm{lse}}
\newcommand\sm{\mathrm{softmax}}

\newcommand{\NRM}[1]{{{\left\| #1\right\|}}}

\newcommand{\soft}{\mathrm{softmax}}

\newcommand{\concat}{\mathbin\Vert}
\newcommand{\cequals}{\stackrel{C}{=}}
\newcommand*\diff{\mathop{}\!\mathrm{d}}

\newcommand\videolink{\url{https://youtu.be/H5tGdL-0fok}} %

\definecolor{YColour}{RGB}{212, 101, 209}
\definecolor{RColour}{RGB}{140, 195, 98}
\definecolor{ZColour}{RGB}{46, 91, 207}
\definecolor{SColour}{RGB}{213, 228, 173}
\definecolor{IOColour}{RGB}{189, 46, 98}
\definecolor{RSColour}{RGB}{247, 206, 70}
\definecolor{REColour}{RGB}{243, 176, 138}
\definecolor{HColour}{RGB}{150, 204, 220}

\makeatletter
\newcommand\footnoteref[1]{\protected@xdef\@thefnmark{\ref{#1}}\@footnotemark}
\makeatother

\def\ourmethod{Hopfield Boosting\xspace}
\def\longourmethod{Energy-based Hopfield Boosting for Out-of-Distribution Detection}

\def\aux{AUX\xspace}
\def\longaux{auxiliary outlier data\xspace}

\def\mhe{MHE\xspace}
\def\longmhe{modern Hopfield energy\xspace}

\newcommand{\Loss}[1]{\mathcal{L}_{\text{#1}}}

\title{\longourmethod}

\author{\vspace{0.2cm}
    Claus Hofmann$~^{1}$ \quad
    Simon Schmid$~^{2}$ \quad
    Bernhard Lehner$~^{3}$ \\ \vspace{0.2cm} \bf
    Daniel Klotz$~^{4}$ \quad %
    Sepp Hochreiter$~^{1}$ \\ \\
  $~^{1}$~Institute for Machine Learning, JKU LIT SAL IWS Lab,\\
  ~~~~Johannes Kepler University, Linz, Austria\\
  $~^{2}$~Software Competence Center Hagenberg GmbH, Austria\\
  $~^{3}$~Silicon Austria Labs, JKU LIT SAL IWS Lab, Linz, Austria \\
  $~^{4}$~Department of Computational Hydrosystems,\\
~~~~Helmholtz Centre for Environmental Research–UFZ, Leipzig, Germany\\\\
\texttt{hofmann@ml.jku.at}
}

\begin{document}

\maketitle

\begin{abstract}
    Out-of-distribution (OOD) detection is critical when deploying machine learning models in the real world. 
    Outlier exposure methods, which incorporate \longaux in the training process, can drastically improve OOD detection performance compared to approaches without advanced training strategies.
    We introduce \ourmethod, a boosting approach, which leverages \longmhe to sharpen the decision boundary between the in-distribution and OOD data. \ourmethod encourages the model to focus on hard-to-distinguish auxiliary outlier examples that lie close to the decision boundary between in-distribution and \longaux. Our method achieves a new state-of-the-art in OOD detection with outlier exposure, 
    improving the FPR95 from 2.28 to 0.92 on CIFAR-10, from 11.76 to 7.94 on CIFAR-100, and from 50.74 to 36.60 on ImageNet-1K.
\end{abstract}

\section{Introduction}
Out-of-distribution (OOD) detection is crucial when using machine learning systems in the real world \citep{Ruff:21, Yang:21, Liu:21}. Deployed models will --- sooner or later --- encounter inputs that deviate from the training distribution. For example, a system trained to recognize music genres might also hear a sound clip of construction site noise. In the best case, a naive deployment can then result in overly confident predictions. In the worst case, we will get erratic model behavior and completely wrong predictions \citep{Hendrycks:17}.
The purpose of OOD detection is to classify these inputs as OOD, such that the system can then, for instance, notify users that no prediction is possible. %
In this paper we propose \ourmethod, a novel OOD detection method that leverages the energy component of modern Hopfield networks \citep[MHNs;][]{Ramsauer:21} and advances the state-of-the-art of OOD detection. This energy represents a measure of dissimilarity between a set of data instances $\BX$ and a query instance $\Bxi$. It is therefore a natural fit for doing OOD detection \citep[as shown in][]{Zhang:22}.

\ourmethod uses an \longaux set (\aux) to \textit{boost} the model's OOD detection capacity. 
This allows the training process to learn a boundary around the in-distribution (ID) data, improving the performance at the OOD detection task.
In summary, our contributions are as follows:

\begin{enumerate}
    \item We propose \ourmethod, an OOD detection approach that samples weak learners by using the \mhe \citep{Ramsauer:21}.
    \item \ourmethod achieves a new state-of-the-art in OOD detection. It improves the average false positive rate at 95\% true positives (FPR95) from 2.28 to 0.92 on CIFAR-10, from 11.38 to 7.94 on CIFAR-100, and from 50.74 to 36.60 on ImageNet-1K.
    \item We provide theoretical background that motivates \ourmethod for OOD detection.
\end{enumerate}

\begin{figure*}
\centering
\includegraphics[width=0.89\textwidth]{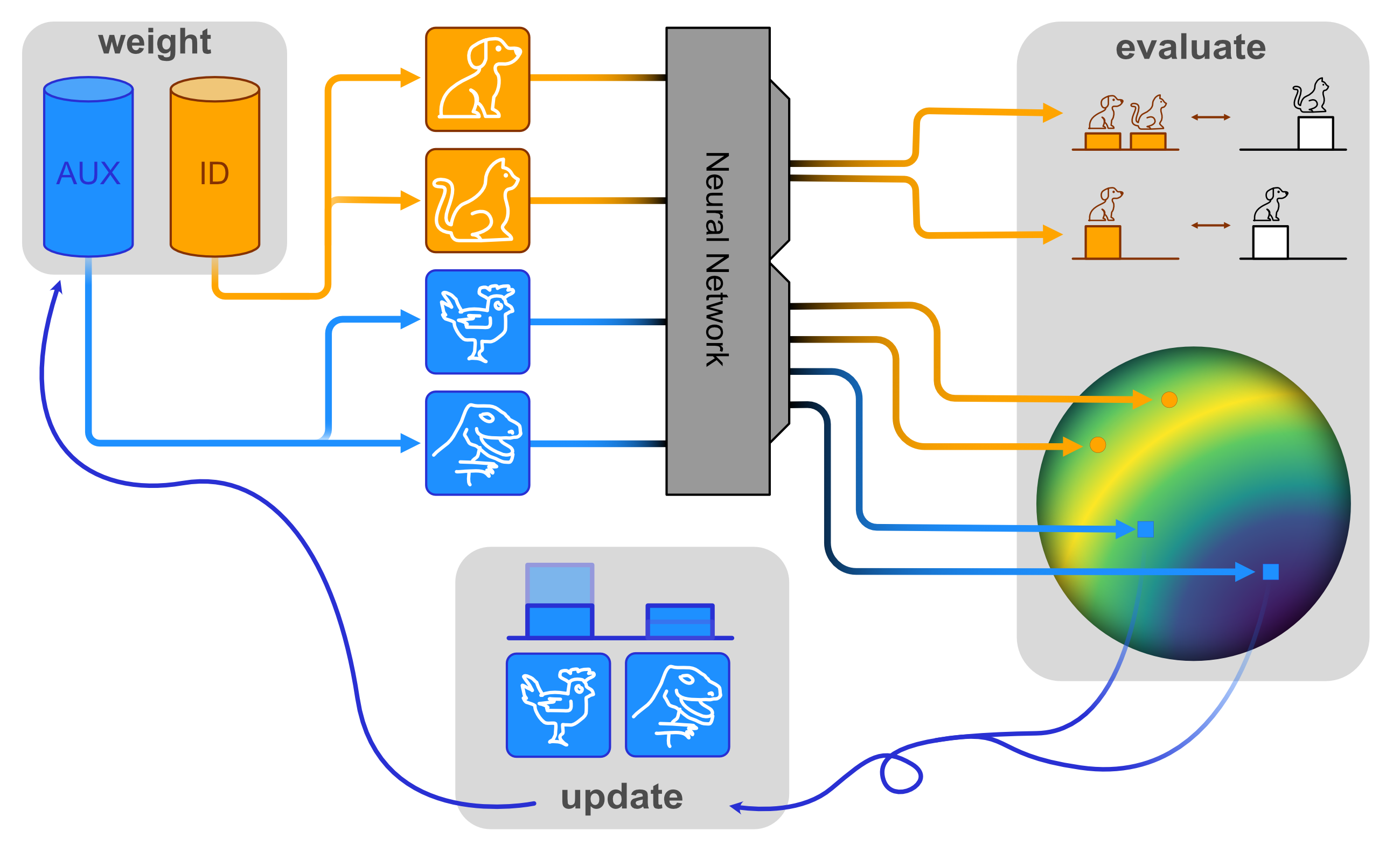}
\caption{The  \ourmethod concept. The first step (weight) creates weak learners by firstly choosing in-distribution samples (ID, orange), and by secondly choosing auxiliary outlier samples (\aux, blue) according to their assigned probabilities; the second step (evaluate) computes the losses for the resulting predictions (Section~\ref{seq:method}); and the third step (update) assigns new probabilities to the \aux samples according to their position on the hypersphere (see Figure~\ref{fig:adaptive-resampling}).}
\label{fig:figure1}
\end{figure*}

\section{Related Work}
Some authors \citep[e.g.,][]{Bishop:94, Roth:22, Yang:22} distinguish between anomalies, 
outliers, 
and novelties. 
These distinctions reflect different goals within applications \citep{Ruff:21}. 
For example, when an anomaly is found, it will usually be removed from the training pipeline. However, when a novelty is found it should be studied. We focus on the detection of samples that are not part of the training distribution and consider sample categorization as a downstream task.

\paragraph{Post-hoc OOD detection.}  A common and straightforward OOD detection approach is to use a post-hoc strategy, where one employs statistics obtained from a classifier. The perhaps most well known and simplest approach in this class is the Maximum Softmax Probability \citep[MSP;][]{Hendrycks:17}, where one utilizes $p(\ y\ |\ \Bx\ )$ of the most likely class $y$ given a feature vector $\Bx \in \mathbb{R}^D$ to estimate whether a sample is OOD. Despite good empirical performances, this view is intrinsically limited, since OOD detection should focus on $p(\Bx)$ \citep{Morteza:22}. A wide range of post-hoc OOD detection approaches have been proposed to address the shortcomings of MSP \citep[e.g., ][]{Lee:18, Hendrycks:19, Liu:20,  Sun:21React, Sun:22, Wang:22Vim, Zhang:22, Djurisic:22ASH, Liu:23, Xu:23Scale}. 
Most related to \ourmethod is the work of  \citet{Zhang:22} --- to our knowledge, they are the first to apply the MHE to OOD detection. Specifically, they use the ID data set to produce stored patterns and then use a modified version of \mhe as the OOD score.
While post-hoc approaches can be deployed out of the box on any model, a crucial limitation is that their performance heavily depends on the employed model itself.

\paragraph{Training methods.} In contrast to post-hoc strategies, training-based methods modify the training process to improve the model's OOD detection capability \citep[e.g., ][]{Hendrycks:19SSOE, Tack:20, Sehwag:21SSD, Du:22vos, Hendrycks22:pixmix, Wei:22logitnorm, Ming:22Cider, Tao:23NPOS, Lu:24PALM}. For example, Self-Supervised Outlier Detection \citep[SSD;][]{Sehwag:21SSD} leverages contrastive self-supervised learning to train a model for OOD detection.

\paragraph{Auxiliary outlier data and outlier exposure.}
A third group of OOD detection approaches are outlier exposure (OE) methods. Like \ourmethod, they incorporate \aux data in the training process to improve the detection of OOD samples \citep[e.g.,][]{Hendrycks:18, Liu:20, Ming:22, Zhang:23MixOE, Wang:23, Zhu:23, Jian:24DOS}. We provide more detailed discussions on a range of OE methods in Appendix \ref{sec:appendix-details-further-oe}. As far as we know, all OE approaches optimize an objective ($\Loss{OOD}$), which aims at improving the model's discriminative power between ID and OOD data using the \aux data set as a stand-in for the OOD case. \citet{Hendrycks:18} were the first to use the term OE to describe a more restrictive OE concept. Since their approach uses the MSP for incorporating the \aux data we refer to it as MSP-OE. Further, we refer to the OE approach introduced in \citet{Liu:20} as EBO-OE (to differentiate it from EBO, their post-hoc approach).
In general, OE methods conceptualize the \aux data set as a large and diverse data set (e.g., ImageNet for vision tasks). As a consequence, usually, only a small subset of the samples bear semantic similarity to the ID data set --- most data points are easily distinguishable from the ID data. Recent approaches therefore actively try to find informative samples for the training. The aim is to refine the decision boundary, ensuring the ID data is more tightly encapsulated \citep[e.g.,][]{Chen:21atom, Ming:22}. 
For example, Posterior Sampling-based Outlier Mining \citep[POEM; ][]{Ming:22} selects samples close to the decision boundary using Thompson sampling: They first sample a linear decision boundary between ID and \aux data and then select those data instances which are closest to the sampled decision boundary. \ourmethod also makes use of samples close to the boundary by giving them higher weights for the boosting step. 

\paragraph{Continuous modern Hopfield networks.} 
MHNs are energy-based associative memory networks.
They advance conventional Hopfield networks \citep{Hopfield:84} by 
introducing continuous queries and states with the MHE as a new energy function.
MHE leads to exponential storage capacity, 
while retrieval is possible with a one-step update \citep{Ramsauer:21}. The update rule of MHNs coincides with attention as it is used in the Transformer \citep{Vaswani:17}.
Examples for successful applications of MHNs are \citet{Widrich:20, Furst:22, Sanchez:22, Paischer:22, Schafl:22, schimunek2023context} and \citet{Auer:23}. %
Section~\ref{sec:mhe-method} gives an introduction to MHE for OOD detection.
For further details on MHNs, we refer to Appendix~\ref{sec:appendix-hopfield}.

\paragraph{Boosting for classification.}

Boosting, in particular, AdaBoost \citep{Freund:95}, is an ensemble learning technique for classification.
It is designed to focus ensemble members toward data instances that are hard to classify by assigning them higher weights. These challenging instances often lie near the maximum margin hyperplane \citep{Ratsch:01}, akin to support vectors in support vector machines \citep[SVMs; ][]{Cortes:95}. Popular boosting methods include Gradient boosting \citep{Breiman:97}, LogitBoost \citep{Friedman:00}, and LPBoost \citep{Demiriz:01}.

\paragraph{Radial basis function networks.}

Radial basis function networks \citep[RBF networks;][]{Moody:89} are function approximators of the form
\begin{align}
    \varphi(\Bxi) = \sum_{i=1}^N \omega_i\exp\left(-\frac{||\Bxi - \Bmu_i||_2^2}{2 \sigma_i^2}\right),
\end{align}
where $\omega_i$ are linear weights, $\Bmu_i$ are the component means and $\sigma^2_i$ are the component variances. RBF networks can be described as a weighted linear superposition of $N$ radial basis functions and have previously been used as hypotheses for boosting \citep{Ratsch:01}. If the linear weights are strictly positive, RBF networks can be viewed as an unnormalized weighted mixture of Gaussian distributions $p_i(\Bxi) = \cN(\Bxi; \Bmu_i, \sigma_i^2\BI)$ with $i = \{1, \dots, N\}$. Appendix~\ref{sec:relation-rbf} explores the connection between RBF networks and MHNs via Gaussian mixtures in more depth. We refer to \citet{Bishop:95book} and \citet{Muller:97} for more general information on RBF networks.

\section{Method}\label{seq:method}
This section presents \ourmethod: 
First, we formalize the OOD detection task.
Second, we give an overview of the \mhe and why it is suitable for OOD detection. Finally, we introduce the \aux-based boosting framework. Figure \ref{fig:figure1} shows a summary of the \ourmethod concept.

\subsection{Classification and OOD Detection} 
Consider a multi-class classification task denoted as $(\BX^{\mathcal{D}}, \BY^{\mathcal{D}}, \mathcal{Y})$, where $\BX^{\mathcal{D}} \in \mathbb{R}^{D \times N}$ represents a set of $N$ $D$-dimensional feature vectors ($\Bx^\mathcal{D}_1, \Bx^\mathcal{D}_2, \dots, \Bx^\mathcal{D}_N$), which are i.i.d. samples $\Bx_i^\mathcal{D} \sim p_\text{ID}$. $\BY^{\mathcal{D}} \in \mathcal{Y}^N$ denotes the labels associated with these feature vectors, and $\mathcal{Y}$ is a set containing possible classes ($||\mathcal{Y}||=K$ signifies the number of distinct classes). We consider observations $\Bxi^\mathcal{D} \in \mathbb{R}^D$ that deviate considerably from
the data generation $p_\text{ID}(\Bxi^\mathcal{D})$ that defines the ``normality'' of our data as OOD. Following \citet{Ruff:21}, an observation is OOD if it pertains to the set

\begin{equation}
    \mathbb{O} \ = \ \{\Bxi^{\mathcal{D}} \in \mathbb{R}^D \ | \ p_{\text{ID}}(\Bxi^{\mathcal{D}}) < \epsilon\}\ \text{where}\  \epsilon \geq 0.
\end{equation}

Since the probability density of the data generation $p_{\text{ID}}$ is in general not known, one needs to estimate $p_{\text{ID}}(\Bxi^{\mathcal{D}})$. In practice, it is common to define an outlier score $s(\Bxi)$ that uses an encoder $\phi$, where $\Bxi = \phi(\Bxi^{\mathcal{D}})$. The outlier score should --- in the best case --- preserve the density ranking. In contrast to a density estimation, the score $s(\Bxi)$ does not have to fulfill all requirements of a probability density (like proper normalization or non-negativity). Given $s(\Bxi)$ and $\phi$, OOD detection can be formulated as a binary classification task with the classes ID and OOD:

\begin{equation}
\label{eq:b_hat}
    \hat{B}(\Bxi^{\mathcal{D}}, \gamma)\ =\ 
    \begin{cases}
    \text{ID} & \text{if } s(\phi(\Bxi^{\mathcal{D}}))\geq \gamma\\
    \text{OOD} & \text{if } s(\phi(\Bxi^{\mathcal{D}})) < \gamma\\
    \end{cases}.
\end{equation}

It is common to choose the threshold $\gamma$ so that a portion of 95\% of ID samples from a previously unseen validation set are correctly classified as ID. However, metrics like the area under the receiver operating characteristic (AUROC) can be directly computed on $s(\Bxi)$ without specifying $\gamma$ since the AUROC computation sweeps over the threshold.

\subsection{Modern Hopfield Energy} \label{sec:mhe-method}
The log-sum-exponential ($\lse$) function is defined as
\begin{equation}
\lse(\beta, \Bz)=\ \beta^{-1} \ \log \left( \sum_{i=1}^N
\exp(\beta z_i) \right),
\end{equation}
where $\beta$ is the inverse temperature and $\Bz \in \mathbb{R}^N$ is a vector. The $\lse$ can be seen as a soft approximation to the maximum function: As $\beta \rightarrow \infty$, the $\lse$ approaches $\max_i z_i$.

Given a set of $N$ $d$-dimensional stored patterns $(\Bx_1, \Bx_2, \dots, \Bx_N)$ arranged in a data matrix $\BX$, and a $d$-dimensional query $\Bxi$, the \mhe is defined as

\begin{equation}
    \rE(\Bxi; \BX)  \ =   \ -  \  \lse(\beta, \BX^T \Bxi) +  \ 
    \frac{1}{2} \ \Bxi^T \Bxi  \ +  \ C, \label{eq:mhe}
\end{equation}

where $C=\beta^{-1} \log N \ + \frac{1}{2} M^2$ and $M$ is the largest norm of a pattern: $M=\max_i ||x_i||$. 
$\BX$ is also called the memory of the MHN.
Intuitively, Equation~\eqref{eq:mhe} can be explained as follows: The dot-product within the $\lse$ computes a similarity for a given $\Bxi \in \mathbb{R}^d$ to all patterns in the memory $\BX \in \mathbb{R}^{d \times N}$. The $\lse$ function aggregates the similarities to form a single value, where the $\beta$ parameterizes the aggregation operation: If $\beta \rightarrow \infty$, the maximum similarity of $\Bxi$ to the patterns in $\BX$ is returned. 

To use the \mhe for OOD detection, \ourmethod acquires the memory patterns $\BX$ by feeding raw data instances $(\Bx^{\mathcal{D}}_1, \Bx^{\mathcal{D}}_2, \dots, \Bx^{\mathcal{D}}_N)$ of the ID data set arranged in the data matrix $\BX^{\mathcal{D}} \in \mathbb{R}^{D \times N}$ to an encoder $\phi : \mathbb{R}^D \rightarrow \mathbb{R}^d$ (e.g., ResNet): $\Bx_i = \phi(\Bx^{\mathcal{D}}_i)$. We denote the component-wise application of $\phi$ to the patterns in $\BX^\mathcal{D}$ as $\BX = \phi(\BX^{\mathcal{D}})$.
Similarly, a raw query $\Bxi^{\mathcal{D}} \in \mathbb{R}^D$ is fed through the encoder to obtain the query pattern: $\Bxi = \phi(\Bxi^{\mathcal{D}})$. One can now use $\rE(\Bxi; \BX)$ to estimate whether $\Bxi$ is ID or OOD: A low energy indicates $\Bxi$ is ID, and a high energy signifies that $\Bxi$ is OOD.

\subsection{Boosting Framework}
\paragraph{Sampling of informative outlier data.}

\ourmethod uses \aux data to learn a decision boundary between the ID and OOD region during the training. Similar to \citet{Chen:21atom} and \citet{Ming:22}, \ourmethod selects informative outliers close to the ID-OOD decision boundary. For this selection, \ourmethod weights the \aux data similar to AdaBoost \citep{Freund:95} by sampling data instances close to the decision boundary more frequently. We consider samples close to the decision boundary as weak learners --- their nearest neighbors consist of samples from their own class as well as from the foreign class. An individual weak learner represents a classifier that is only slightly better than random guessing (Figure~\ref{fig:weak-learner}). Vice versa, a strong learner can be created by forming an ensemble of a set of weak learners (Figure~\ref{fig:adaptive-resampling}).

We denote the matrix containing the raw \aux data instances $(\Bo^{\mathcal{D}}_1, \Bo^{\mathcal{D}}_2, \dots, \Bo^{\mathcal{D}}_N)$ as $\BO^{\mathcal{D}} \in \mathbb{R}^{D \times M}$, and the memory containing the encoded \aux patterns as $\BO = \phi(\BO^{\mathcal{D}})$. The boosting process proceeds as follows: There exists a weight $(w_1, w_2, \dots, w_N)$ for each data point in $\BO^{\mathcal{D}}$ and the individual weights are aggregated into the weight vector $\Bw_t$. \ourmethod uses these weights to draw mini-batches $\BO_s^{\mathcal{D}}$ from $\BO^{\mathcal{D}}$ for training, where weak learners are sampled more frequently.

We introduce an \mhe-based energy function which \ourmethod uses to determine how weak a specific learner $\Bxi$ is (with higher energy indicating a weaker learner):

\begin{align}
    \rE_b(\Bxi; \BX, \BO) \ = \ - \ 2 \ &\lse(\beta, (\BX \concat \BO)^T \Bxi) \ + \ \lse(\beta, \BX^T \Bxi) \ + \ \lse(\beta, \BO^T \Bxi), \label{eq:boundary-energy}
\end{align}

where $\BX \in \mathbb{R}^{d \times N}$ contains ID patterns, $\BO \in \mathbb{R}^{d \times M}$ contains \aux patterns, and $(\BX  \concat \BO) \in \mathbb{R}^{d \times (N+M)}$ denotes the concatenated data matrix containing the patterns from both $\BX$ and $\BO$. Before computing $\rE_b$, we normalize the feature vectors in $\BX$, $\BO$, and $\Bxi$ to unit length. Figure \ref{fig:spheres-appendix} displays the energy landscape of $\rE_b(\Bxi; \BX, \BO)$ using exemplary data on a 3-dimensional sphere. $\rE_b$ is maximal at the decision boundary between ID and \aux data and decreases with increasing distance from the decision boundary in both directions.

As we show in our theoretical discussion in Appendix~\ref{sec:appendix-notes-eb}, when modeling the class-conditional densities of the ID and \aux data set as mixtures of Gaussian distributions
\begin{align}
    p(\ \Bxi\ |\ \text{ID}\ ) &= \frac{1}{N}\sum_{i=1}^{N} \cN(\Bxi;\Bx_i,\beta^{-1}\BI), \label{eqn:mixture-id} \\ 
    p(\ \Bxi\ |\ \text{AUX}\ ) &= \frac{1}{N}\sum_{i=1}^{N} \cN(\Bxi; \Bo_i, \beta^{-1}\BI), \label{eqn:mixture-aux}
\end{align}
with equal class priors $p(\text{ID}) = p(\text{AUX})=1/2$ and normalized patterns $||\Bx_i|| = 1$ and $||\Bo_i|| = 1$, we obtain $\rE_b(\Bxi; \BX, \BO) \cequals \beta^{-1}\log(p(\ \text{ID}\ |\ \Bxi\ ) \cdot p(\ \text{AUX}\ |\ \Bxi\ ))$, where $\cequals$ denotes equality up to an irrelevant additive constant. The exponential of $\rE_b$ is the variance of a Bernoulli random variable with the outcomes $\{\text{ID}, \text{AUX}\}$ conditioned on $\Bxi$.
Thus, according to $\rE_b$, the weak learners are situated at locations where the model defined in Equations \eqref{eqn:mixture-id} and \eqref{eqn:mixture-aux} is uncertain.

Given a set of query values $(\Bxi_1, \Bxi_2, \dots, \Bxi_n)$ assembled in a query matrix $\BXi \in \mathbb{R}^{d \times n}$, we denote a vector of energies $\Be \in \mathbb{R}^n$ with $e_i = \rE_b(\Bxi_i; \BX, \BO)$ as

\begin{equation}
    \Be = \rE_b(\BXi; \BX, \BO).
\end{equation}

To calculate the weights $\Bw_{t+1}$, we use the memory of \aux patterns as a query matrix $\BXi = \BO$ and compute the respective energies $\rE_b$ of those patterns. The resulting energy vector $\rE_b(\BXi; \BX, \BO)$ is then normalized by a $\sm$. This computation provides the updated weights:

\begin{equation}
    \label{eq:weight_update_softmax}
    \Bw_{t+1} = \sm(\beta \rE_b(\BXi; \BX, \BO)).
\end{equation}

Appendix~\ref{sec:appendix-high-boundary-scores} provides theoretical background on how informative samples close to the decision boundary are beneficial for training an OOD detector.

\paragraph{Training the model with \mhe.} In this section, we introduce how \ourmethod uses the sampled weak learners to improve the detection of patterns outside the training distribution. We follow the established training method for OE \citep{Hendrycks:18, Liu:20, Ming:22}: Train a classifier on the ID data using the standard cross-entropy loss and add an OOD loss that uses the \aux data set to sharpen the decision boundary between the ID and OOD regions. Formally, this yields the loss

\begin{equation}
    \label{eq:global-loss}
    \mathcal{L} = \Loss{CE} + \lambda \Loss{OOD},
\end{equation}

where $\lambda$ is a hyperparamter indicating the relative importance of $\Loss{OOD}$. \ourmethod explicitly minimizes $\rE_b$ (which is also the energy function \ourmethod uses to sample weak learners). Given the weight vector $\Bw_t$, and the data sets $\BX^\mathcal{D}$ and $\BO^\mathcal{D}$, we obtain a mini-batch $\BX^\mathcal{D}_s$ containing N samples from $\BX^\mathcal{D}$ by uniform sampling, and a mini-batch of N weak learners $\BO^\mathcal{D}_s$ from $\BO^\mathcal{D}$ by sampling according to $\Bw_t$ with replacement. We then feed the respective mini-batches into the neural network~$\phi_{\text{base}}$ to create a latent feature (in our experiments, we always use the feature of the penultimate layer of a ResNet). Our proposed approach then uses two heads:

\begin{enumerate}
    \item A linear classification head that maps the latent feature to the class logits for $\Loss{CE}$.
    \item A 2-layer MLP $\phi_\text{proj}$ maps the features from the penultimate layer
    to the output for $\Loss{OOD}$.
\end{enumerate}

\ourmethod computes $\Loss{OOD}$ on the representations it obtains from $\phi = \phi_\text{proj} \circ \phi_\text{base}$ as follows:
\begin{align}
\Loss{OOD} = \frac{1}{2N} \sum_{\Bxi} \rE_b(\Bxi; \BX_s, \BO_s),
\end{align}
where the memories $\BX_s$ and $\BO_s$ contain the encodings of the sampled data instances: $\BX_s = \phi(\BX^\mathcal{D}_s)$ and $\BO_s = \phi(\BO^\mathcal{D}_s)$. The sum is taken over the observations $\Bxi$, which are drawn from $(\BX_s \concat \BO_s)$. \ourmethod computes $\Loss{OOD}$ for each mini-batch by first calculating the pairwise similarity matrix between the patterns in the mini-batch, followed by determining the $\rE_b$ values of the individual observations $\Bxi$, and, finally a mean reduction. To the best of our knowledge, \ourmethod is the first method that uses Hopfield networks in this way to train a deep neural network. We note that there is a relation between \ourmethod and SVMs with an RBF kernel (see Appendix \ref{sec:appendix-svm}). However, the optimization procedure of SVMs is in general not differentiable. In contrast, our novel energy function is fully differentiable. This allows us to use it to train neural networks.

\begin{figure}[t]
\includegraphics[width=\columnwidth]{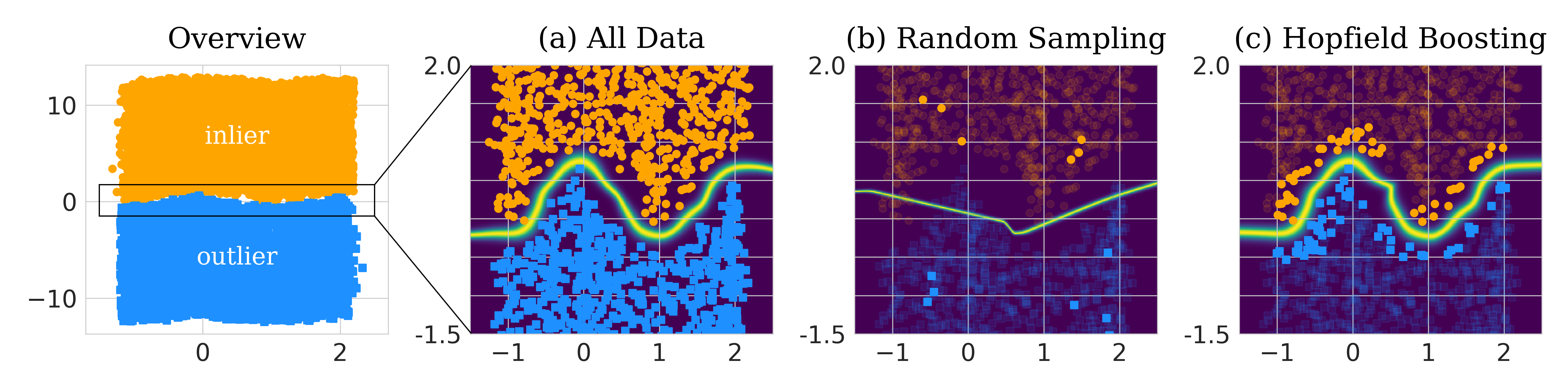}
\centering
\caption{Synthetic example of the adaptive resampling mechanism. \ourmethod forms a strong learner by sampling and combining a set of weak learners close to the decision boundary. The heatmap on the background shows $\exp(\beta \rE_b(\Bxi; \BX, \BO))$, where $\beta$ is $60$. Only the sampled (i.e., highlighted) points serve as memories $\BX$ and $\BO$.}
\label{fig:adaptive-resampling}
\end{figure}

\begin{algorithm}
\caption{Hopfield Boosting}\label{alg:boosting}
\begin{algorithmic}
\REQUIRE  $T$, $N$, $\BX$, $\BO$, $\BY$, $\Loss{CE}$, $\rE_b$, $\beta$ 
    \STATE Set all weights $w_1$ to $1/|\BO|$
    \FOR{$t=1$ to $T$}
        \STATE 1. \textbf{Weight}. Get hypothesis $\BX_s \concat \BO_s \rightarrow \{\text{ID}, \text{\aux}\}$:
            \Indent
            \STATE 1.a. Mini-batch sampling $\BX_s$ from $\BX$, and
            \STATE 1.b. Sub-sampling of weak learners $\BO_s$ from $\BO$ according to the weighting $\Bw_t$.
            \EndIndent
        \STATE 2. \textbf{Evaluate}. Compute loss from Equation~\eqref{eq:global-loss} on $\BX_s$ and $\BO_s$.
        \STATE 3. \textbf{Update}. Update model for the next iteration: 
        \Indent
        \STATE 3.a. At every step, update the full model (backbone, classification head, and MHE).
        \STATE 3.b. At every $t*N$ step calculate new weights for $\BO$ with $\Bw_{t+1} = \sm(\beta \rE_b(\BO; \BX, \BO))$.
        \EndIndent
    \ENDFOR \\
\end{algorithmic}
\end{algorithm}

\paragraph{Summary.} Algorithm \ref{alg:boosting} provides an outline of \ourmethod. Each iteration $t$ consists of three main steps: 1.~weight, 2.~evaluate, and 3.~update. First, \ourmethod samples a mini-batch from the ID data and \textbf{weights} the \aux data by sampling a mini-batch according to $\Bw_t$. Second, \ourmethod \textbf{evaluates} the composite loss on the sampled mini-batch. Third, \ourmethod \textbf{updates} the model parameters and, every $N$-th step, also the sampling weights for the \aux data set $\Bw_{t+1}$. %

\paragraph{Inference.} At inference time, the OOD score $s(\Bxi)$ is
\begin{align} \label{eq:score_fn}
    s(\Bxi)\ = \ \lse(\beta, \BX^T\Bxi) \ - \ \lse(\beta, \BO^T\Bxi).
\end{align}
For computing $s(\Bxi)$, \ourmethod uses the 50,000 random samples from the ID and \aux data sets, respectively. As we show in Appendix \ref{sec:appendix-runtime-considerations}, this step entails only a very moderate computational overhead in relation to a complete forward pass (e.g., an overhead of 7.5\% for ResNet-18 on an NVIDIA Titan V GPU with 50,000 patterns stored in each of the memories $\BX$ and $\BO$). We additionally experimented with using only $\lse(\beta, \BX^T\Bxi)$ as a score, which also gives reasonable results. However, the approach in Equation~\eqref{eq:score_fn} has turned out to be superior. %
Equation~\eqref{eq:score_fn} uses information from both ID and \aux samples. This can, for example, be beneficial for handling query patterns $\Bxi$ that are dissimilar from both the memory patterns in $\BX$ as well as from the memory patterns in $\BO$.

\subsection{Comparison of \ourmethod to HE and SHE}

\citet{Zhang:22} propose two post-hoc methods for OOD detection with MHE:``Hopfield Energy'' (HE) and ``Simplified Hopfield Energy'' (SHE).  In contrast to \ourmethod, HE and SHE do not use AUX data to get a better boundary between ID and OOD data. Rather, their methods evaluate the MHE on ID patterns only to determine whether a sample is ID or OOD. Additional differences include the selection of patterns stored in the memory or the normalization of the patterns. The OE process of Hopfield Boosting drastically improves the OOD detection performance compared to HE and SHE. We verify that the unique contributions of \ourmethod (like the energy-based loss and the boosting process) are responsible for the superior performance with two extensions of HE that include \aux data (the comparison can be found in Appendix \ref{sec:appendix-he-she-extension}). For further information on the differences to HE and SHE, we refer to Appendix \ref{sec:appendix-he-she}.

\section{Experiments}
\label{sec:experiments}
\subsection{Toy Example}
This section presents a toy example illustrating the main intuitions behind \ourmethod. For the sake of clarity, the toy example does not consider the inlier classification task that would induce secondary processes, which would obscure the explanations. Formally, we do not consider the first term on the right-hand side of Equation~\eqref{eq:global-loss}. For further toy examples, we refer to Appendix \ref{sec:appendix-toy-examples}.

Figure \ref{fig:adaptive-resampling} demonstrates how the weighting in \ourmethod allows good estimations of the decision boundary, even if \ourmethod only samples a small number of weak learners. This is advantageous because the \aux data set contains a large number of data instances that are uninformative for the OOD detection task. 
For small, low dimensional data, one can always use all the data to compute $\rE_b$ (Figure \ref{fig:adaptive-resampling}, a).
For large problems \citep[like in][]{Ming:22}, this strategy is difficult, and the naive solution of uniformly sampling N data points would also not work. This will yield many uninformative points 
(Figure \ref{fig:adaptive-resampling}, b). When using \ourmethod and sampling N weak learners according to $\Bw_t$, the result better approximates the decision boundary of the full data (Figure \ref{fig:adaptive-resampling}, c).

\begin{table*}

\caption{OOD detection performance on CIFAR-10. We compare results from \ourmethod, DOS \citep{Jian:24DOS}, DOE \citep{WangDOE:23}, DivOE \citep{Zhu:23}, DAL \citep{Wang:23}, MixOE \citep{Zhang:23MixOE}, POEM \citep{Ming:22}, EBO-OE \citep{Liu:20}, and MSP-OE \citep{Hendrycks:18} on ResNet-18. $\downarrow$ indicates ``lower is better'' and $\uparrow$ ``higher is better''. All values in $\%$. Standard deviations are estimated across five training runs.}
\label{tab:cifar-10}
\resizebox{\textwidth}{!}{%
\begin{tabular}{ll|l|l|l|l|l|l|l|l|l}
&Metric&HB (ours)&DOS&DOE&DivOE&DAL&MixOE&POEM&EBO-OE&MSP-OE\\
\midrule
\rowcolor{gray!25}
&FPR95 $\downarrow$&$\mathbf{0.23^{\pm 0.08}}$&$3.09^{\pm 0.75}$&$1.97^{\pm 0.58}$&$6.21^{\pm 0.84}$&$1.25^{\pm 0.62}$&$27.54^{\pm 2.46}$&$1.48^{\pm 0.68}$&$2.66^{\pm 0.91}$&$4.31^{\pm 1.10}$\\
\rowcolor{gray!25}
\multirow{-2}{*}{SVHN}&AUROC $\uparrow$&$\mathbf{99.57^{\pm 0.06}}$&$99.15^{\pm 0.22}$&$\mathbf{99.60^{\pm 0.13}}$&$98.53^{\pm 0.08}$&$\mathbf{99.61^{\pm 0.15}}$&$95.37^{\pm 0.44}$&$99.33^{\pm 0.15}$&$99.15^{\pm 0.23}$&$99.20^{\pm 0.15}$\\
\rowcolor{white}
&FPR95 $\downarrow$&$0.82^{\pm 0.17}$&$3.66^{\pm 0.98}$&$3.22^{\pm 0.45}$&$1.88^{\pm 0.25}$&$4.17^{\pm 0.27}$&$\mathbf{0.14^{\pm 0.07}}$&$4.02^{\pm 0.91}$&$6.82^{\pm 0.74}$&$7.02^{\pm 1.14}$\\
\rowcolor{white}
\multirow{-2}{*}{LSUN-Crop}&AUROC $\uparrow$&$99.40^{\pm 0.04}$&$99.04^{\pm 0.20}$&$99.30^{\pm 0.12}$&$99.50^{\pm 0.02}$&$99.13^{\pm 0.02}$&$\mathbf{99.61^{\pm 0.11}}$&$98.89^{\pm 0.15}$&$98.43^{\pm 0.10}$&$98.83^{\pm 0.15}$\\
\rowcolor{gray!25}
&FPR95 $\downarrow$&$\mathbf{0.00^{\pm 0.00}}$&$0.00^{\pm 0.00}$&$\mathbf{0.00^{\pm 0.00}}$&$\mathbf{0.00^{\pm 0.00}}$&$\mathbf{0.00^{\pm 0.00}}$&$0.16^{\pm 0.17}$&$\mathbf{0.00^{\pm 0.00}}$&$\mathbf{0.00^{\pm 0.00}}$&$\mathbf{0.00^{\pm 0.00}}$\\
\rowcolor{gray!25}
\multirow{-2}{*}{LSUN-Resize}&AUROC $\uparrow$&$99.98^{\pm 0.02}$&$99.99^{\pm 0.01}$&$\mathbf{100.00^{\pm 0.00}}$&$99.89^{\pm 0.05}$&$99.92^{\pm 0.05}$&$99.89^{\pm 0.06}$&$99.88^{\pm 0.12}$&$99.98^{\pm 0.02}$&$99.96^{\pm 0.00}$\\
\rowcolor{white}
&FPR95 $\downarrow$&$\mathbf{0.16^{\pm 0.02}}$&$1.28^{\pm 0.20}$&$2.75^{\pm 0.57}$&$1.20^{\pm 0.11}$&$0.95^{\pm 0.13}$&$4.68^{\pm 0.22}$&$0.49^{\pm 0.04}$&$1.11^{\pm 0.17}$&$2.29^{\pm 0.16}$\\
\rowcolor{white}
\multirow{-2}{*}{Textures}&AUROC $\uparrow$&$\mathbf{99.84^{\pm 0.01}}$&$99.63^{\pm 0.04}$&$99.35^{\pm 0.12}$&$99.59^{\pm 0.02}$&$99.74^{\pm 0.01}$&$98.91^{\pm 0.07}$&$99.72^{\pm 0.05}$&$99.61^{\pm 0.02}$&$99.57^{\pm 0.01}$\\
\rowcolor{gray!25}
&FPR95 $\downarrow$&$\mathbf{0.00^{\pm 0.00}}$&$\mathbf{0.00^{\pm 0.00}}$&$\mathbf{0.00^{\pm 0.00}}$&$\mathbf{0.00^{\pm 0.00}}$&$\mathbf{0.00^{\pm 0.00}}$&$0.17^{\pm 0.12}$&$\mathbf{0.00^{\pm 0.00}}$&$\mathbf{0.00^{\pm 0.00}}$&$\mathbf{0.00^{\pm 0.00}}$\\
\rowcolor{gray!25}
\multirow{-2}{*}{iSUN}&AUROC $\uparrow$&$99.97^{\pm 0.02}$&$99.99^{\pm 0.01}$&$\mathbf{100.00^{\pm 0.00}}$&$99.88^{\pm 0.05}$&$99.93^{\pm 0.04}$&$99.87^{\pm 0.05}$&$99.87^{\pm 0.12}$&$99.98^{\pm 0.01}$&$99.96^{\pm 0.00}$\\
\rowcolor{white}
&FPR95 $\downarrow$&$\mathbf{4.28^{\pm 0.23}}$&$12.26^{\pm 0.97}$&$19.72^{\pm 2.39}$&$13.70^{\pm 0.50}$&$14.22^{\pm 0.51}$&$16.30^{\pm 1.09}$&$7.70^{\pm 0.68}$&$11.77^{\pm 0.68}$&$21.42^{\pm 0.88}$\\
\rowcolor{white}
\multirow{-2}{*}{Places 365}&AUROC $\uparrow$&$\mathbf{98.51^{\pm 0.10}}$&$96.63^{\pm 0.43}$&$95.06^{\pm 0.72}$&$96.95^{\pm 0.09}$&$96.77^{\pm 0.07}$&$96.92^{\pm 0.22}$&$97.56^{\pm 0.26}$&$96.39^{\pm 0.30}$&$95.91^{\pm 0.17}$\\
\midrule\rowcolor{white}
&FPR95 $\downarrow$&$\mathbf{0.92}$&$3.38$&$4.61$&$3.83$&$3.43$&$8.17$&$2.28$&$3.73$&$5.84$\\
\rowcolor{white}
\multirow{-2}{*}{Mean}&AUROC $\uparrow$&$\mathbf{99.55}$&$99.07$&$98.88$&$99.06$&$99.18$&$98.43$&$99.21$&$98.92$&$98.90$\\
\midrule
ID&Accuracy $\uparrow$&$94.02^{\pm 0.09}$&$94.74^{\pm 0.13}$&$94.93^{\pm 0.12}$&$94.72^{\pm 0.17}$&$\mathbf{95.11^{\pm 0.05}}$&$\mathbf{96.60^{\pm 1.50}}$&$89.20^{\pm 1.30}$&$91.32^{\pm 0.35}$&$94.83^{\pm 0.23}$
\end{tabular}
}
\end{table*}

\begin{table*}[t]
\vskip 0.15in
\centering
\caption{OOD detection performance on ImageNet-1K. We compare results from \ourmethod, DOS \citep{Jian:24DOS}, DOE \citep{WangDOE:23}, DivOE \citep{Zhu:23}, DAL \citep{Wang:23}, MixOE \citep{Zhang:23MixOE}, POEM \citep{Ming:22}, EBO-OE \citep{Liu:20}, and MSP-OE \citep{Hendrycks:18} on ResNet-50. $\downarrow$ indicates ``lower is better'' and $\uparrow$ ``higher is better''. All values in $\%$. Standard deviations are estimated across five training runs.}
\label{tab:imagenet-1k}
\resizebox{\textwidth}{!}{%
\begin{tabular}{ll|l|l|l|l|l|l|l|l|l}
&Metric&HB (ours)&DOS&DOE&DivOE&DAL&MixOE&POEM&EBO-OE&MSP-OE\\
\midrule
\rowcolor{gray!25}
&FPR95 $\downarrow$&$44.59^{\pm 1.05}$&$40.29^{\pm 0.93}$&$83.83^{\pm 7.19}$&$42.80^{\pm 0.74}$&$43.88^{\pm 0.66}$&$41.05^{\pm 4.91}$&$31.26^{\pm 0.67}$&$\mathbf{29.67^{\pm 1.26}}$&$48.38^{\pm 0.87}$\\
\rowcolor{gray!25}
\multirow{-2}{*}{Textures}&AUROC $\uparrow$&$88.01^{\pm 0.57}$&$89.88^{\pm 0.18}$&$64.22^{\pm 9.25}$&$88.18^{\pm 0.06}$&$87.39^{\pm 0.15}$&$88.51^{\pm 1.29}$&$\mathbf{92.22^{\pm 0.14}}$&$\mathbf{92.40^{\pm 0.23}}$&$86.25^{\pm 0.25}$\\
\rowcolor{white}
&FPR95 $\downarrow$&$\mathbf{37.37^{\pm 1.84}}$&$59.29^{\pm 0.96}$&$83.73^{\pm 8.78}$&$61.00^{\pm 0.57}$&$65.31^{\pm 0.61}$&$65.14^{\pm 2.53}$&$57.46^{\pm 0.90}$&$57.69^{\pm 1.61}$&$66.01^{\pm 0.26}$\\
\rowcolor{white}
\multirow{-2}{*}{SUN}&AUROC $\uparrow$&$\mathbf{91.24^{\pm 0.52}}$&$84.30^{\pm 0.21}$&$72.95^{\pm 7.94}$&$83.64^{\pm 0.30}$&$81.47^{\pm 0.22}$&$82.20^{\pm 0.72}$&$85.38^{\pm 0.35}$&$85.83^{\pm 0.60}$&$81.45^{\pm 0.20}$\\
\rowcolor{gray!25}
&FPR95 $\downarrow$&$\mathbf{53.31^{\pm 2.05}}$&$69.72^{\pm 1.01}$&$86.30^{\pm 6.69}$&$71.09^{\pm 0.60}$&$74.46^{\pm 0.75}$&$71.34^{\pm 1.49}$&$68.87^{\pm 1.05}$&$70.03^{\pm 1.83}$&$74.58^{\pm 0.44}$\\
\rowcolor{gray!25}
\multirow{-2}{*}{Places 365}&AUROC $\uparrow$&$\mathbf{87.10^{\pm 0.52}}$&$81.62^{\pm 0.22}$&$70.37^{\pm 7.17}$&$80.35^{\pm 0.33}$&$78.72^{\pm 0.28}$&$80.31^{\pm 0.42}$&$81.79^{\pm 0.40}$&$81.35^{\pm 0.63}$&$78.89^{\pm 0.19}$\\
\rowcolor{white}
&FPR95 $\downarrow$&$\mathbf{11.11^{\pm 0.66}}$&$49.55^{\pm 1.41}$&$70.82^{\pm 13.89}$&$30.51^{\pm 0.42}$&$51.92^{\pm 0.74}$&$47.28^{\pm 1.55}$&$45.37^{\pm 1.79}$&$49.02^{\pm 4.40}$&$51.73^{\pm 1.35}$\\
\rowcolor{white}
\multirow{-2}{*}{iNaturalist}&AUROC $\uparrow$&$\mathbf{97.65^{\pm 0.20}}$&$90.49^{\pm 0.38}$&$83.82^{\pm 5.75}$&$93.81^{\pm 0.10}$&$88.33^{\pm 0.21}$&$90.19^{\pm 0.35}$&$92.01^{\pm 0.33}$&$91.44^{\pm 0.79}$&$88.51^{\pm 0.30}$\\
\midrule\rowcolor{white}
&FPR95 $\downarrow$&$\mathbf{36.60}$&$54.71$&$81.17$&$51.35$&$58.90$&$56.20$&$50.74$&$51.60$&$60.17$\\
\rowcolor{white}
\multirow{-2}{*}{Mean}&AUROC $\uparrow$&$\mathbf{91.00}$&$86.57$&$72.84$&$86.49$&$83.98$&$85.30$&$87.85$&$87.75$&$83.78$\\
\midrule
ID&Accuracy $\uparrow$&$\mathbf{76.30^{\pm 0.04}}$&$76.04^{\pm 0.02}$&$64.36^{\pm 6.94}$&$74.86^{\pm 0.03}$&$75.46^{\pm 0.04}$&$75.47^{\pm 0.20}$&$75.66^{\pm 0.05}$&$75.64^{\pm 0.09}$&$75.71^{\pm 0.03}$\\
\end{tabular}}
\end{table*}

\begin{table*}[t]
\centering
\caption{Ablated training procedures on CIFAR-10. We compare the result of Hopfield Boosting to the results of our method when not using weighted sampling, the projection head, or the OOD loss. $\downarrow$ indicates ``lower is better'' and $\uparrow$ ``higher is better''. All values in $\%$. Standard deviations are estimated across five training runs.}
\label{tab:ablation-algorithmic}
\resizebox{0.6\textwidth}{!}{%
\begin{tabular}{ll|l|l|l|l}
\multicolumn{2}{l|}{Weighted Sampling}&\multicolumn{1}{c|}{\cmark}&\multicolumn{1}{c|}{\xmark}&\multicolumn{1}{c|}{\xmark}&\multicolumn{1}{c}{\xmark}\\
\multicolumn{2}{l|}{Projection Head}&\multicolumn{1}{c|}{\cmark}&\multicolumn{1}{c|}{\cmark}&\multicolumn{1}{c|}{\xmark}&\multicolumn{1}{c}{\xmark}\\
\multicolumn{2}{l|}{$\Loss{OOD}$}&\multicolumn{1}{c|}{\cmark}&\multicolumn{1}{c|}{\cmark}&\multicolumn{1}{c|}{\cmark}&\multicolumn{1}{c}{\xmark}\\
\midrule
\rowcolor{gray!25}
&FPR95 $\downarrow$&$\mathbf{0.23^{\pm 0.06}}$&$0.70^{\pm 0.13}$&$1.01^{\pm 0.27}$&$45.65^{\pm 13.51}$\\
\rowcolor{gray!25}
\multirow{-2}{*}{SVHN}&AUROC $\uparrow$&$\mathbf{99.57^{\pm 0.06}}$&$\mathbf{99.55^{\pm 0.08}}$&$99.21^{\pm 0.21}$&$90.99^{\pm 2.68}$\\
\rowcolor{white}
&FPR95 $\downarrow$&$\mathbf{0.28^{\pm 0.05}}$&$1.58^{\pm 0.31}$&$2.22^{\pm 0.36}$&$28.59^{\pm 4.33}$\\
\rowcolor{white}
\multirow{-2}{*}{LSUN-Crop}&AUROC $\uparrow$&$\mathbf{99.40^{\pm 0.05}}$&$99.24^{\pm 0.10}$&$98.28^{\pm 0.25}$&$94.08^{\pm 0.86}$\\
\rowcolor{gray!25}
&FPR95 $\downarrow$&$\mathbf{0.00^{\pm 0.02}}$&$\mathbf{0.00^{\pm 0.00}}$&$\mathbf{0.00^{\pm 0.00}}$&$50.30^{\pm 7.16}$\\
\rowcolor{gray!25}
\multirow{-2}{*}{LSUN-Resize}&AUROC $\uparrow$&$\mathbf{99.98^{\pm 0.02}}$&$\mathbf{99.98^{\pm 0.01}}$&$\mathbf{99.98^{\pm 0.01}}$&$89.15^{\pm 1.89}$\\
\rowcolor{white}
&FPR95 $\downarrow$&$\mathbf{0.16^{\pm 0.01}}$&$0.26^{\pm 0.06}$&$0.38^{\pm 0.10}$&$49.36^{\pm 1.63}$\\
\rowcolor{white}
\multirow{-2}{*}{Textures}&AUROC $\uparrow$&$\mathbf{99.85^{\pm 0.01}}$&$99.81^{\pm 0.02}$&$99.70^{\pm 0.06}$&$89.58^{\pm 0.52}$\\
\rowcolor{gray!25}
&FPR95 $\downarrow$&$\mathbf{0.00^{\pm 0.02}}$&$\mathbf{0.00^{\pm 0.00}}$&$\mathbf{0.00^{\pm 0.00}}$&$51.08^{\pm 6.38}$\\
\rowcolor{gray!25}
\multirow{-2}{*}{iSUN}&AUROC $\uparrow$&$99.97^{\pm 0.02}$&$\mathbf{99.99^{\pm 0.00}}$&$99.98^{\pm 0.01}$&$88.89^{\pm 1.74}$\\
\rowcolor{white}
&FPR95 $\downarrow$&$\mathbf{4.28^{\pm 0.11}}$&$6.20^{\pm 0.21}$&$8.73^{\pm 1.06}$&$77.44^{\pm 0.81}$\\
\rowcolor{white}
\multirow{-2}{*}{Places 365}&AUROC $\uparrow$&$\mathbf{98.51^{\pm 0.11}}$&$97.68^{\pm 0.21}$&$94.77^{\pm 0.81}$&$78.30^{\pm 0.83}$\\
\midrule\rowcolor{white}
&FPR95 $\downarrow$&$\mathbf{0.92}$&$1.46$&$2.06$&$50.40$\\
\rowcolor{white}
\multirow{-2}{*}{Mean}&AUROC $\uparrow$&$\mathbf{99.55}$&$99.38$&$98.65$&$88.50$\\
\midrule
\end{tabular}}
\end{table*}

\subsection{Data \& Setup}
\label{sec:exp_setup}
\paragraph{CIFAR-10 \& CIFAR-100.} Our training and evaluation proceeds as follows: We train \ourmethod with ResNet-18 \citep{He:16} on the CIFAR-10 and CIFAR-100 data sets \citep{Krizhevsky:09}, respectively. In these settings, we use ImageNet-RC \citep{Chrabaszcz:17} (a low-resolution version of ImageNet) as the \aux data set. For testing the OOD detection performance, 
we use the data sets SVHN (Street View House Numbers) \citep{Netzer:11}, Textures \citep{Cimpoi:14}, iSUN \citep{Xu:15}, Places 365 \citep{Lopez:20}, and two versions of the LSUN data set \citep{Yu:15} --- one where the images are cropped, and one where they are resized to match the resolution of the CIFAR data sets (32x32 pixels). We refer to the two LSUN data sets as LSUN-Crop and LSUN-Resize, respectively.
We compute the scores $s(\Bxi)$ as described in Equation~\eqref{eq:score_fn} and then evaluate the discriminative power of $s(\Bxi)$ between CIFAR and the respective OOD data set using the FPR95 and the AUROC. 
We use a validation process with different OOD data for model selection. Specifically, we validate the model on MNIST \citep{LeCun:98}, and ImageNet-RC with different pre-processing than in training (resize to 32x32 pixels instead of crop to 32x32 pixels), as well as Gaussian and uniform noise.

\paragraph{ImageNet-1K.} We evaluate \ourmethod on the large-scale benchmark: We use ImageNet-1K \citep{Russakovsky:15Imagenet} as ID data set and ImageNet-21K \citep{Ridnik:21} as \aux data set. The OOD test data sets are Textures \citep{Cimpoi:14}, SUN \citep{Xu:15}, Places 365 \citep{Lopez:20}, and iNaturalist \citep{Van:18}. In this setting, all images are scaled to a resolution of 224x224. To keep our method comparable to other OE methods, we closely follow the training and evaluation protocol of \citep{Zhu:23}. This implies htat we fine-tune a ResNet-50 that was pre-trained on the ImageNet-1K ID classification task \citep[as provided by][]{torchvision2016}.

\paragraph{Baselines.} As mentioned earlier, previous works offer vast experimental evidence that OE methods offer superior OOD detection compared to methods without OE \citep[see e.g., ][]{Ming:22, Wang:23}. Our experiments in Appendix \ref{sec:appendix-non-oe-baselines} confirm this. Thus, we focus on a comprehensive comparison of \ourmethod to eight OE methods: MSP-OE \citep{Hendrycks:18}, EBO-OE \citep{Liu:20}, POEM \citep{Ming:22}, MixOE \citep{Zhang:23MixOE}, DAL \citep{Wang:23}, DivOE \citep{Zhu:23}, DOE \citep{WangDOE:23} and DOS \citep{Jian:24DOS}.

\paragraph{Training setup.} The network trains for 100 epochs (CIFAR-10/100) or 4 epochs (ImageNet-1K), respectively. In each epoch, the model processes the entire ID data set and a selection of \aux samples (sampled according to $\Bw_t$). We sample mini-batches of size 128 per data set, resulting in a combined batch size of 256. We evaluate the composite loss from Equation~\eqref{eq:global-loss} for each resulting mini-batch and update the model accordingly. After an epoch, we update the sample weights, yielding $\Bw_{t+1}$. For efficiency reasons, we only compute the weights for 500,000 \aux data instances 
(\(\sim \)40\% of ImageNet), 
which we denote as $\BXi$. The weights of the remaining samples are set to 0. During the sample weight update, \ourmethod does not compute gradients or update model parameters. The update of the sample weights $\Bw_{t+1}$ proceeds as follows: First, we fill the memories $\BX$ and $\BO$ with 50,000 ID samples and 50,000 \aux samples, respectively. Second, we use the obtained $\BX$ and $\BO$ to get the energy $\rE_b(\BXi; \BX, \BO)$ for each of the 500,000 \aux samples in $\BXi$ and compute $\Bw_{t+1}$ according to Equation~\eqref{eq:weight_update_softmax}. In the following epoch, \ourmethod samples the mini-batches $\BO^\mathcal{D}_s$ according to $\Bw_{t+1}$ with replacement. To allow the storage of even more patterns in the Hopfield memory during the weight update process, one could incorporate a vector similarity engine \citep[e.g., ][]{Douze:24FAISS} into the process. This would potentially allow a less noisy estimate of the sample weights. For the sake of simplicity, we did not opt to do this in our implementation of \ourmethod. As we show in section \ref{sec:results}, \ourmethod achieves state-of-the-art OOD detection results and can scale to large datasets (ImageNet-1K) even without access to a similarity engine.

\paragraph{Hyperparameters \& Model Selection.} Like \citet{Yang:22}, we use SGD with an initial learning rate of $0.1$ and a weight decay of $5 \cdot 10^{-4}$. We decrease the learning rate during the training process with a cosine schedule \citep{Loshchilov:16}. Appendix \ref{sec:appendix-transformation} describes the image transformations and pre-processing. We apply optimizer, weight decay, learning rate, scheduler, and transformations consistently to all OOD detection methods of the comparison.
For training \ourmethod, we use a single value for $\beta$ throughout the training and evaluation process and for all OOD data sets. For model selection, we use a grid search with $\lambda$, chosen from the set $\{0.1, 0.25, 0.5, 1.0\}$, and $\beta$, chosen from the set $\{2, 4, 8, 16, 32\}$. From these hyperparameter configurations, we select the model with the lowest mean FPR95 metric (where the mean is taken over the validation OOD data sets) and do not consider the ID classification accuracy for model selection. In our experiments, $\beta=4$ and $\lambda=0.5$ yields the best results for CIFAR-10 and CIFAR-100. For ImageNet-1K, we set $\beta=32$ and $\lambda=0.25$.

\subsection{Results \& Discussion} \label{sec:results}
Table~\ref{tab:cifar-10} summarizes the results for CIFAR-10. \ourmethod achieves equal or better performance compared to the other methods regarding the FPR95 metric for all OOD data sets. It surpasses POEM (the previously best OOD detection approach with OE in our comparison), improving the mean FPR95 metric from 2.28 to 0.92. On CIFAR-100 (Appendix \ref{sec:appendix-results-cifar100}), \ourmethod improves the mean FPR95 metric from 11.76 to 7.94.
It is notable that all methods achieve perfect FPR95 results on the LSUN-Resize and iSUN data sets. This is somewhat problematic since there exists evidence that the LSUN-Resize data set can give misleading results due to image artifacts resulting from the resizing procedure \citep{Tack:20, Yang:22}. We hypothesize that a similar issue exists with the iSUN data set, as in our experiments, LSUN-Resize and iSUN behave very similarly. 

On ImageNet-1K (Table \ref{tab:imagenet-1k}), \ourmethod surpasses all methods in our comparison in terms of both mean FPR95 and mean AUROC. Compared to POEM (the previously best method) \ourmethod improves the mean FPR95 from 50.74 to 36.60. This demonstrates that \ourmethod scales very favourably to large-scale settings.

We observe that all methods tested perform worst on the Places 365 data set. To gain more insights regarding this behavior, we look at the data instances from the Places 365 data set that \ourmethod trained on CIFAR-10 most confidently classifies as in-distribution (i.e., which receive the highest scores $s(\Bxi)$). Visual inspection shows that among those images, a large portion contains objects from semantic classes included in CIFAR-10 (e.g., airplanes, horses, automobiles). We refer to Appendix \ref{sec:appendix-places-similarity} for more details. 

We evaluate the performance of the following 3 ablated training procedures on the CIFAR-10 benchmark to gauge the importance of the individual contributions of \ourmethod: (a) Random sampling instead of weighted sampling, (b) Random sampling instead of weighted sampling and no projection head, (c) No application of $\Loss{OOD}$.
The results (Table \ref{tab:ablation-algorithmic}) show that all contributions (i.e, weighted sampling, the projection head, and $\Loss{OOD}$) are important factors for \ourmethod’s performance. For additional ablations, we refer to Appendix~\ref{sec:appendix-additional-experiments}.

When subjecting \ourmethod to data sets that were designed to show the weakness of OOD detection approaches (Appendix \ref{sec:appendix-more-datasets}), we identify instances where a substantial number of outliers are wrongly classified as inliers.
Testing with EBO-OE yields comparable outcomes, indicating that this phenomenon extends beyond \ourmethod. %

\section{Limitations}
Lastly, we would like to discuss two limitations that we found: 
First, we see an opportunity to improve the evaluation procedure for OOD detection.
Specifically, it remains unclear how reliably the performance on specific data sets can indicate the general ability to detect OOD inputs.
Our results from iSUN and LSUN-Resize (Section~\ref{sec:results}) indicate that issues like image artifacts in data sets greatly influence model evaluation.
Second, although OE-based approaches improve the OOD detection capability, their reliance on \aux data can limit their applicability. For one, the selection of the \aux data is crucial (since it determines the characteristics of the decision boundary surrounding the inlier data). Furthermore, the use of \aux data can be prohibitive in domains where only a few or no outliers at all are available for training the model.

\section{Conclusions}%
\label{sec:conclusion}
We introduce \ourmethod: an approach for OOD detection with OE. \ourmethod uses an energy term to \textit{boost} a classifier between inlier and outlier data by sampling weak learners that are close to the decision boundary.
We illustrate how \ourmethod shapes the energy surface to form a decision boundary. Additionally, we demonstrate how the boosting mechanism creates a sharper decision boundary than with random sampling. 
We compare \ourmethod to eight modern OOD detection approaches using OE.
Overall, \ourmethod shows the best results.

\section*{Acknowledgements}
We thank Christian Huber for helpful feedback and fruitful discussions.

The ELLIS Unit Linz, the LIT AI Lab, the Institute for Machine Learning, are supported by the Federal State Upper Austria. We thank the projects AI-MOTION (LIT-2018-6-YOU-212), DeepFlood (LIT-2019-8-YOU-213), Medical Cognitive Computing Center (MC3), INCONTROL-RL (FFG-881064), PRIMAL (FFG-873979), S3AI (FFG-872172), DL for GranularFlow (FFG-871302), EPILEPSIA (FFG-892171), AIRI FG 9-N (FWF-36284, FWF-36235), AI4GreenHeatingGrids(FFG- 899943), INTEGRATE (FFG-892418), ELISE (H2020-ICT-2019-3 ID: 951847), Stars4Waters (HORIZON-CL6-2021-CLIMATE-01-01). We thank Audi.JKU Deep Learning Center, TGW LOGISTICS GROUP GMBH, Silicon Austria Labs (SAL), University SAL Labs initiative, FILL Gesellschaft mbH, Anyline GmbH, Google, ZF Friedrichshafen AG, Robert Bosch GmbH, UCB Biopharma SRL, Merck Healthcare KGaA, Verbund AG, GLS (Univ. Waterloo) Software Competence Center Hagenberg GmbH, T\"{U}V Austria, Frauscher Sensonic, Borealis AG, TRUMPF and the NVIDIA Corporation.
This work has been supported by the "University SAL Labs" initiative of Silicon Austria Labs (SAL) and its Austrian partner universities for applied fundamental research for electronic-based systems.
Daniel Klotz acknowledges funding from the Helmholtz Initiative and Networking Fund (Young Investigator Group COMPOUNDX, grant agreement no. VH-NG-1537)
We acknowledge EuroHPC Joint Undertaking for awarding us access to Karolina at IT4Innovations, Czech Republic and Leonardo at CINECA, Italy.

\bibliography{main}
\bibliographystyle{icml2024}

\newpage
\appendix
\onecolumn
\section{Details on Continuous Modern Hopfield Networks} \label{sec:appendix-hopfield}
The following arguments are adopted from \citet{Furst:22} and \citet{Ramsauer:21}.
Associative memory networks have been designed to store and retrieve samples. 
Hopfield networks are energy-based, binary associative memories, which were popularized as artificial neural network architectures in the 1980s \citep{Hopfield:82,Hopfield:84}.
Their storage capacity can be considerably increased by polynomial terms in the energy function \citep{Chen:86,Psaltis:86,Baldi:87,Gardner:87,Abbott:87,Horn:88,Caputo:02,Krotov:16}.
In contrast to these binary memory networks, we use continuous associative memory networks with far higher storage capacity. 
These networks are continuous and differentiable, retrieve with a single update, and have exponential storage capacity \citep[and are therefore scalable, i.e., able to tackle large problems;][]{Ramsauer:21}.

Formally, we denote a set of patterns $\{\Bx_1,\ldots,\Bx_N\} \subset \dR^d$
that are stacked as columns to 
the matrix $\BX = \left( \Bx_1,\ldots,\Bx_N \right)$ and a 
state pattern (query) $\Bxi \in \dR^d$ that represents the current state. 
The largest norm of a stored pattern is
$M = \max_{i} \NRM{\Bx_i}$.
Then, the energy $\rE$ of continuous Modern Hopfield Networks with state $\Bxi$ is defined as \citep{Ramsauer:21}

\begin{equation}
\label{eq:energy}
\rE  \ =   \ -  \ \beta^{-1} \ \log \left( \sum_{i=1}^N
\exp(\beta \Bx_i^T \Bxi) \right) +  \ 
\frac{1}{2} \ \Bxi^T \Bxi  \ +  \ \rC,
\end{equation}

where $\rC=\beta^{-1} \log N  \ + \
\frac{1}{2} \ M^2$. For energy $\rE$ and state $\Bxi$, \citet{Ramsauer:21} proved that the update rule 
\begin{equation}
\label{eq:main_update}
\Bxi^{\nn} \ = \  \BX \ \soft ( \beta \BX^T \Bxi)
\end{equation}
converges globally to stationary points of the energy $\rE$ and coincides with the attention mechanisms of Transformers 
\citep{Vaswani:17,Ramsauer:21}.

The {\em separation} $\Delta_i$ of a pattern $\Bx_i$ is its minimal dot product difference to any of the other 
patterns:
\begin{equation}
    \Delta_i = \min_{j,j \not= i} \left( \Bx_i^T \Bx_i - \Bx_i^T \Bx_j \right).
\end{equation}
A pattern is {\em well-separated} from the data if $\Delta_i $ is above a given threshold \citep[specified in][]{Ramsauer:21}.
If the patterns $\Bx_i$ are well-separated, the update rule Equation~\ref{eq:main_update}
converges to a fixed point close to a stored pattern.
If some patterns are similar to one another and, therefore, not well-separated, 
the update rule converges to a fixed point close to the mean of the similar patterns. 

The update rule of a Hopfield network thus identifies sample--sample relations between stored patterns. This enables similarity-based learning methods like nearest neighbor search \citep[see][]{Schafl:22}, which \ourmethod leverages to detect samples outside the distribution of the training data.

Hopfield networks have recently been used for OOD detection \citep{Zhang:22}. \citet{hu2024outlier} introduces Hopfield layers for outlier-efficient memory update.

\section{Notes on Langevin Sampling}
Another method that is appropriate for earlier acquired models is
to sample the posterior via the Stochastic Gradient Langevin
Dynamics (SGLD) \citep{Welling:11}.
This method is efficient since it iteratively learns from small
mini-batches \cite{Welling:11,Ahn:12}.
See basic work on Langevin dynamics \cite{Welling:11,Ahn:12,Teh:16,Xu:18Langevin}.
A cyclical stepsize
schedule for SGLD was very promising for uncertainty quantification \cite{Zhang:20}.
Larger steps discover new modes, while smaller steps characterize
each mode and perform the posterior sampling.

\section{Related work}

\subsection{Details on further OE approaches}
\label{sec:appendix-details-further-oe}

This section gives details about related works from the area of OE in OOD detection. With OE, we refer to the usage of \aux for training an OOD detector in general.

\paragraph{MSP-OE.} \citet{Hendrycks:18} were the first to introduce the term OE in the context of OOD detection. Specifically, they improve an MSP-based OOD detection \citep{Hendrycks:17}: They train a classifier on the ID data set and maximize the entropy of the predictive distribution of the classifier for the \aux data. The combined loss they employ is

\begin{align}
    \Loss \ &= \ \Loss{CE} \ + \ \lambda \Loss{OOD}\\
    \Loss{OOD} \ &= \ \mathop{\mathbb{E}}_{\Bo^{\mathcal{D}} \sim p_{\text{\aux}}}[H(\mathcal{U}, p_{\Bth}(\Bo))]
\end{align}

where $H$ denotes the cross-entropy, $\mathcal{U}$ denotes the uniform distribution over K classes, and $p_{\Bth}$ is the model mapping the features to the predictive distribution over the K classes.

\paragraph{EBO-OE.} \citet{Liu:20} propose a post-hoc and an OE approach. Their post-hoc approach (EBO) is to use the classifier's energy to perform OOD detection:

\begin{align}
    \rE(\Bxi^{\mathcal{D}})\ &= \ - \ \beta^{-1} \lse(\beta, f_{\Bth}(\Bxi^{\mathcal{D}}))\\
    s(\Bxi^{\mathcal{D}}) \ &= \ -\rE(\Bxi^{\mathcal{D}}, f_{\Bth})
\end{align}

where $f_{\Bth}$ outputs the model's logits as a vector. Their OE approach (EBO-OE) promotes a low energy on ID samples and a high energy on AUX samples:

\begin{align}
    \Loss{OOD} \ = \ \mathop{\mathbb{E}}_{\Bx^{\mathcal{D}} \sim p_{\text{ID}}} [(\max(0, \rE(\Bx^{\mathcal{D}}) \ - \ m_{\text{ID}}))^2] \ + \ \mathop{\mathbb{E}}_{\Bo^{\mathcal{D}} \sim p_{\text{\aux}}} [(\max(0, m_{\text{\aux}} \ - \ \rE(\Bo^{\mathcal{D}})))^2] \label{eqn:ebo-oe-loss}
\end{align}

where $m_{\text{ID}}$ and $m_{\text{\aux}}$ are margin hyperparameters.

\paragraph{POEM.} \citet{Ming:22} propose to incorporate Thompson sampling into the OE process. More specifically, they sample a linear decision boundary in embedding space between the ID and \aux data using Bayesian linear regression and then select those samples from the \aux data set that are closest to the sampled decision boundary. In the following epoch, they sample the \aux data uniformly from the selected data instances without replacement and optimize the model with the EBO-OE loss (Equation \eqref{eqn:ebo-oe-loss}).

\paragraph{MixOE.} \citet{Zhang:23MixOE} employ mixup \citep{Zhang:18mixup} between the ID and \aux samples to augment the OE task. Formally, this results in the following:

\begin{align}
    \tilde{\Bx} \ &= \ \lambda \Bx^{\mathcal{D}} \ + \ (1 - \lambda) \Bo^{\mathcal{D}}  \label{eqn:mixup-x}\\
    \tilde{y} \ &= \ \lambda y \ + \ (1- \lambda) \mathcal{U}\\
    \Loss{OOD} \ &= \ \mathop{\mathbb{E}}_{\substack{(\Bx^{\mathcal{D}}, y) \sim p_{\text{ID}}\\ \Bo^{\mathcal{D}} \sim p_{\text{\aux}}}} [H(\tilde{y}, \tilde{\Bx})]
\end{align}

Alternatively, they also propose to employ CutMix \citep{Yun:19cutmix} instead of mixup (which would change the mixing operation in Equation \eqref{eqn:mixup-x}).

\paragraph{DAL.} \citet{Wang:23} augment the \aux data by defining a Wasserstein-1 ball around the \aux data and performing OE using this Wasserstein ball. DAL is motivated by the concept of distribution discrepancy: The distribution of the real OOD data will in general be different from the distribution of the \aux data. The authors argue that their approach can make OOD detection more reliable if the distribution discrepancy is large.

\paragraph{DivOE.} \citet{Zhu:23} pose the question of how to utilize the given outliers from the \aux data set if the auxiliary outliers are not informative enough to represent the unseen OOD distribution. They suggest solving this problem by diversifying the \aux data using extrapolation, which should result in better coverage of the OOD space of the resultant extrapolated distribution. Formally, they employ a loss using a synthesized distribution with a manipulation $\Delta$:

\begin{align}
    \Loss{OOD} \ = \ \mathop{\mathbb{E}}_{\Bo^{\mathcal{D}} \sim p_{\text{\aux}}} [(1 \ - \ \gamma) H(\mathcal{U}, p_{\Bth}(\Bo^\mathcal{D})) \ + \ \gamma \max_\Delta [ H(\mathcal{U}, p_{\Bth}(\Bo^\mathcal{D} + \Delta)) \ - \ H(\mathcal{U}, p_{\Bth}(\Bo^\mathcal{D}))] ]
\end{align}

\paragraph{DOE.} \citet{WangDOE:23} implicitly synthesize auxiliary outlier data using a transformation of the model weights. They argue that perturbing the model parameters has the same effect as transforming the data.

\paragraph{DOS. } \citet{Jian:24DOS} apply K-means clustering to the features of the \aux data set. They then employ a balanced sampling from the K obtained clusters by selecting the same number of samples from each cluster for training. More specifically, they select those n samples from each cluster which are closest to the decision boundary between the ID and OOD regions.

\section{Future Work}

\subsection{Smooth and Sharp Decision Boundaries}

Our work treats samples close to the decision boundary as weak learners. However, the wider ramifications of this behavior remain unclear. One can also view the sampling of data instances close to the decision boundary as a form of adversarial training in that we search for something like ``natural adversarial examples'': Loosely speaking, the usual adversarial example case starts with a given ID sample and corrupts it in a specific way to get the classifier to output the wrong class probabilities. In our case, we start with a large set of potential adversarial instances (the AUX data) and search for the ones that could be either ID or OOD samples. That is, the sampling process will more likely select data instances that are hard to discriminate for the model — for example, if the model is uncertain whether a leaf in an auxiliary outlier sample is a frog or not.

This process can be viewed as ``sharpening'' the decision boundary: The boosting process frequently samples data instances with high uncertainty. $\Loss{OOD}$ encourages the model to assign less uncertainty to the sampled data instances. After training, few training data instances will have high uncertainty. Nevertheless, a closer systematic evaluation of the sharpened decision boundary of Hopfield Boosting is important to fully understand the potential implications w.r.t. adversarial examples. We view such an investigation as out-of-scope for this work. However, we consider it an interesting avenue for future work.

\cite{anderson2022certified} show that a smooth decision boundary helps with ``classical'' adversarial examples. In this framing, our approach would produce different adversarial examples that are not based on noise but are more akin to ``natural adversarial examples''. For example, it is perfectly fine for us that an OOD sample close to the boundary does not correspond to any of the ID classes. Furthermore, the noise based smoothing leads to adversarial robustness at the (potential) cost of degrading classification performance. Similarly, our sharpening of the boundaries leads to better discrimination between ID and OOD region at the (potential) cost of degrading ID classification performance.

\section{Societal Impact}
\label{sec:appendix-societal-impact}

This section discusses the potential positive and negative societal impacts of our work. As our work aims improves the state-of-the-art in OOD detection, we focus on potential societal impact of OOD detection in general.

\begin{itemize}
    \item \textbf{Postive Impacts}
    \begin{itemize}
        \item \textbf{Improved model reliability}: OOD detection aims to detect unfamiliar inputs that have little support in the model's training distribution. When these samples are detected, one can, for example, notify the user that no prediction is possible, or trigger a manual intervention. This can lead to an increase in a model's reliability.
        \item \textbf{Abstain from doing uncertain predictions}: When a model with appropriate OOD detection recognizes that a query sample has limited support in the training distribution, it can abstain from performing a prediction. This can, for example, increase trust in ML models, as they will rather tell the user they are uncertain than report a confidently wrong prediction.
    \end{itemize}
    \item \textbf{Negative Impacts}
    \begin{itemize}
        \item \textbf{Wrong sense of safety}: Having OOD detection in place could cause users to wrongly assume that all OOD inputs will be detected. However, like most systems, also OOD detection methods can make errors. It is important to consider that certain OOD examples could remain undetected.
        \item \textbf{Potential for misinterpretation}: As with many other ML systems, the outcomes of OOD detection methods are prone to misinterpretation. It is important to acquaint oneself with the respective method before applying it in practice.
    \end{itemize}
\end{itemize}

\clearpage

\section{Toy Examples}\label{sec:appendix-toy-examples}
\subsection{3D Visualizations of $\rE_b$ on a hypersphere}
\label{sec:appendix-sphere-more-orientations}

\begin{figure}[H]
\centering{(a) Out-of-Distribution energy function} 
\\
\begin{subfigure}{.22\textwidth}
  \centering
  \includegraphics[width=.9\linewidth]{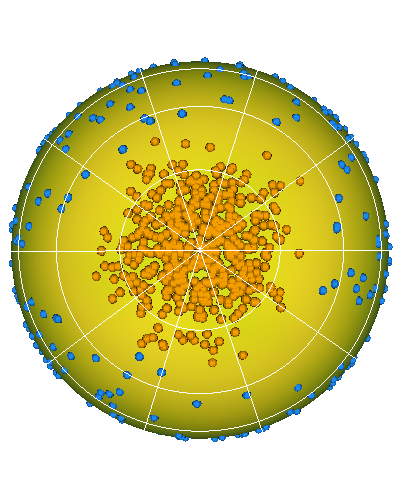}
\end{subfigure}%
\begin{subfigure}{.22\textwidth}
  \centering
  \includegraphics[width=.9\linewidth]{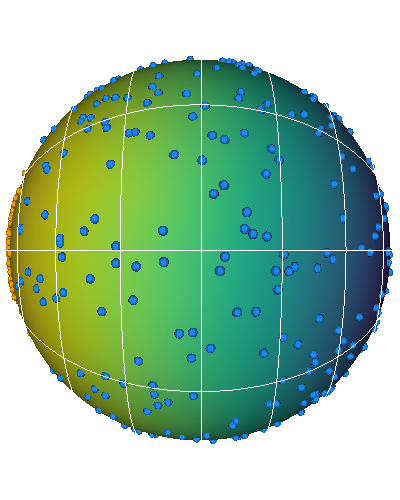}
\end{subfigure}
\begin{subfigure}{.22\textwidth}
  \centering
  \includegraphics[width=.9\linewidth]{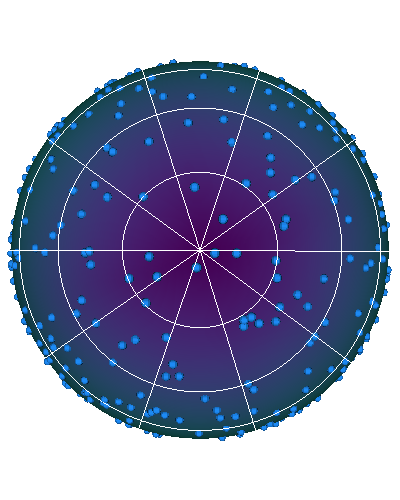}
\end{subfigure}
\begin{subfigure}{.22\textwidth}
  \centering
  \includegraphics[width=.9\linewidth]{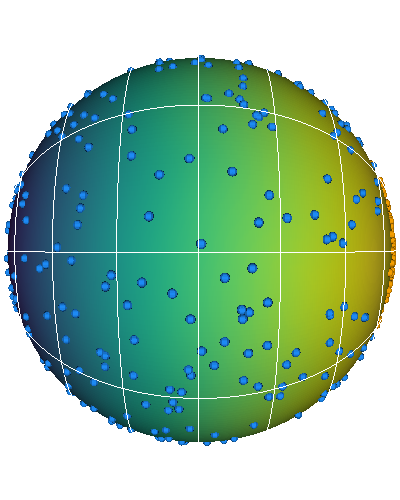}
\end{subfigure}
\begin{subfigure}{.07\textwidth}
  \centering
  \includegraphics[width=.9\linewidth]{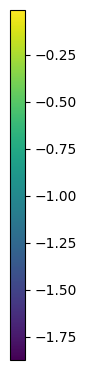}
\end{subfigure}
\\
\centering{(b) Exponentiated Out-of-Distribution energy function} 
\\
\begin{subfigure}{.22\textwidth}
  \centering
  \includegraphics[width=.9\linewidth]{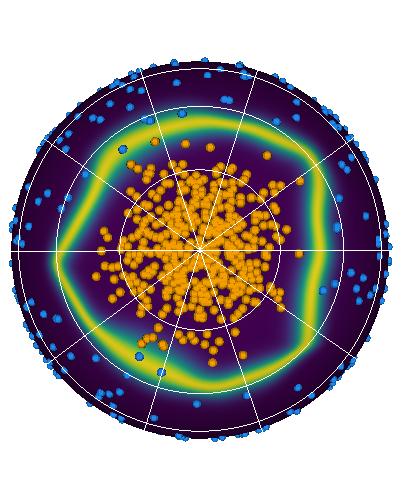}
\end{subfigure}%
\begin{subfigure}{.22\textwidth}
  \centering
  \includegraphics[width=.9\linewidth]{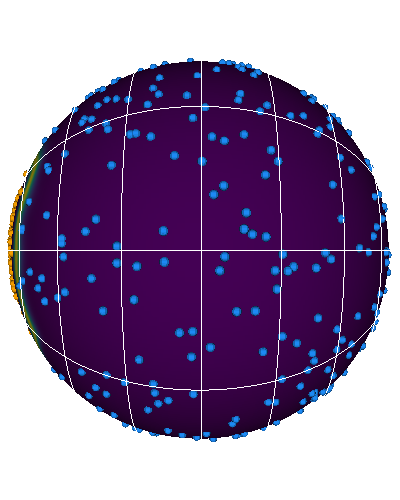}
\end{subfigure}
\begin{subfigure}{.22\textwidth}
  \centering
  \includegraphics[width=.9\linewidth]{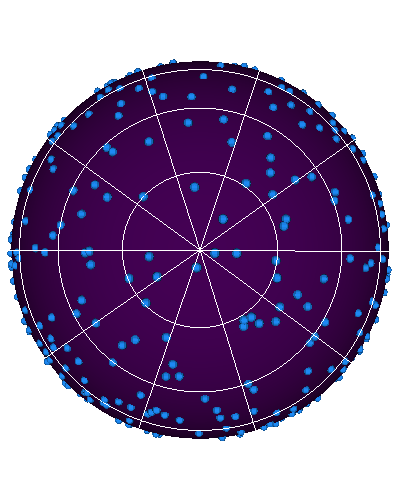}
\end{subfigure}
\begin{subfigure}{.22\textwidth}
  \centering
  \includegraphics[width=.9\linewidth]{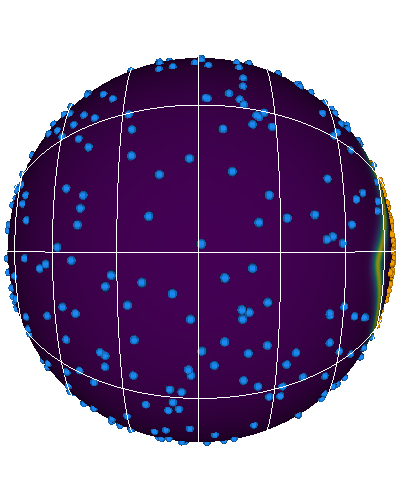}
\end{subfigure}
\begin{subfigure}{.07\textwidth}
  \centering
  \includegraphics[width=.9\linewidth]{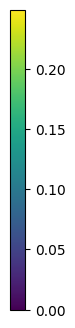}
\end{subfigure}
\caption{%
Depiction of the energy function $\rE_b(\Bxi; \BX, \BO)$ on a hypersphere. (a) shows $\rE_b(\Bxi, \BX, \BO)$  with exemplary inlier (orange) and outlier (blue) points; and (b) shows $\exp(\beta \rE_b(\Bxi, \BX, \BO))$. $\beta$ was set to 128. Both, (a) and (b), rotate the sphere by 0, 90, 180, and 270 degrees around the vertical axis.}
\label{fig:spheres-appendix}
\end{figure}

This example depicts how inliers and outliers shape the energy surface (Figure~\ref{fig:spheres-appendix}).  We generated patterns so that $\BX$ clusters around a pole and the outliers populate the remaining perimeter of the sphere. This is analogous to the idea that one has access to a large \aux data set, where some data points are more and some less informative for OOD detection \citep[e.g., as conceptualized in][]{Ming:22}.

\subsection{Dynamics of $\Loss{OOD}$ on Patterns in Euclidean Space}
\label{sec:appendix-dynamics-euclidean}

\begin{figure}[t]
\begin{subfigure}{.49\textwidth}
  \caption{After 0 steps}
  \centering
  \includegraphics[width=\linewidth]{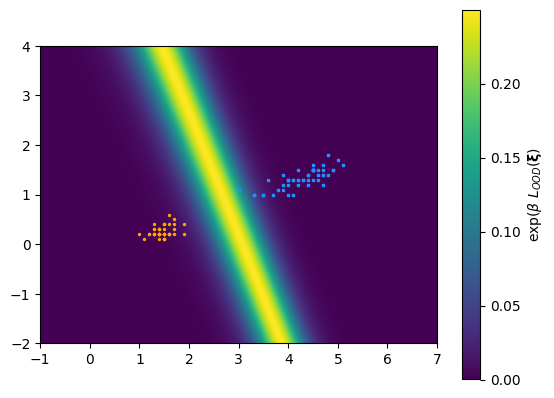}
  \label{fig:dynamics-euclidean-0}
\end{subfigure}%
\begin{subfigure}{.49\textwidth}
  \caption{After 25 steps}
  \centering
  \includegraphics[width=\linewidth]{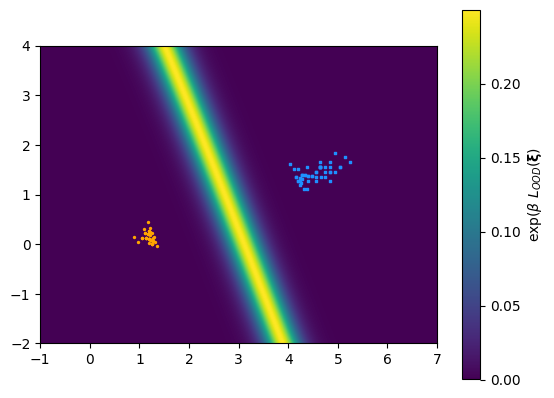}
  \label{fig:dynamics-euclidean-25}
\end{subfigure}
\begin{subfigure}{.49\textwidth}
  \caption{After 50 steps}
  \centering
  \includegraphics[width=\linewidth]{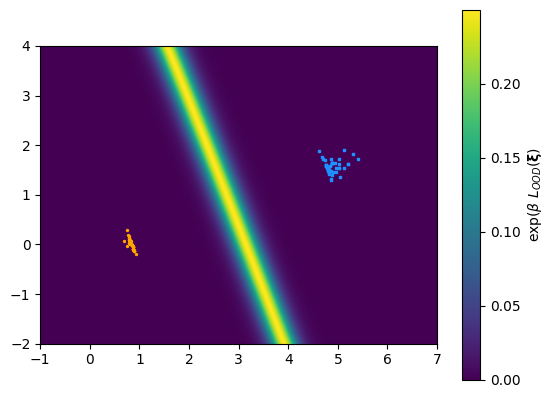}
  \label{fig:dynamics-euclidean-50}
\end{subfigure}
\begin{subfigure}{.49\textwidth}
  \caption{After 100 steps}
  \centering
  \includegraphics[width=\linewidth]{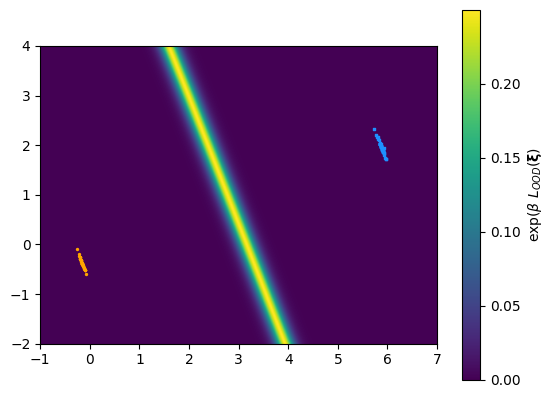}
  \label{fig:dynamics-euclidean-100}
 \end{subfigure}

\caption{$\Loss{OOD}$ applied to exemplary data points on euclidean space. Gradient updates are applied to the data points directly. We observe that the variance orthogonal to the decision boundary shrinks while the variance parallel to the decision boundary does not change to this extent. $\beta$ is set to 2.}
\label{fig:dynamics-euclidean}
\end{figure}

In this example, we applied our out-of-distribution loss $\Loss{OOD}$ on a simple binary classification problem. As we are working in Euclidean space and not on a sphere, we use a modified version of MHE, which uses the negative squared Euclidean distance instead of the dot-product-similarity. For the formal relation between Equation~\eqref{eqn:gaussian-mix-energy} and MHE, we refer to Appendix~\ref{sec:relation-rbf}:

\begin{align}
\label{eqn:gaussian-mix-energy}
    \rE(\Bxi; \BX) = \ - \ \beta^{-1} \ \log \left( \sum_{i=1}^N \exp(-\frac{\beta}{2} \ ||\Bxi - \Bx_i||_2^2) \right)
\end{align}

 Figure \ref{fig:dynamics-euclidean-0} shows the initial state of the patterns and the decision boundary $\exp(\beta E_{b}(\Bxi; \BX, \BO))$. We store the samples of the two classes as stored patterns in $\BX$ and $\BO$, respectively, and compute $\Loss{OOD}$ for all samples. We then set the learning rate to 0.1 and perform gradient descent with $\Loss{OOD}$ on the data points. Figure \ref{fig:dynamics-euclidean-25} shows that after 25 steps, the distance between the data points and the decision boundary has increased, especially for samples that had previously been close to the decision boundary. After 100 steps, as shown in Figure \ref{fig:dynamics-euclidean-100}, the variability orthogonal to the decision boundary has almost completely vanished, while the variability parallel to the decision boundary is maintained.

\subsection{Dynamics of $\Loss{OOD}$ on Patterns on the Sphere}
\label{sec:appendix-dynamics-sphere}

\begin{figure}[H]
\centering{(a) After 0 steps} 
\\
\begin{subfigure}{.24\textwidth}
  \centering
  \includegraphics[width=.9\linewidth]{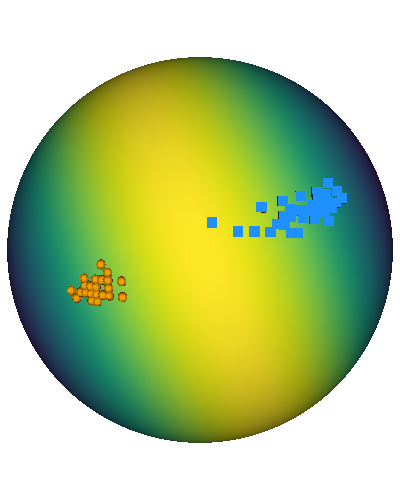}
\end{subfigure}%
\begin{subfigure}{.24\textwidth}
  \centering
  \includegraphics[width=.9\linewidth]{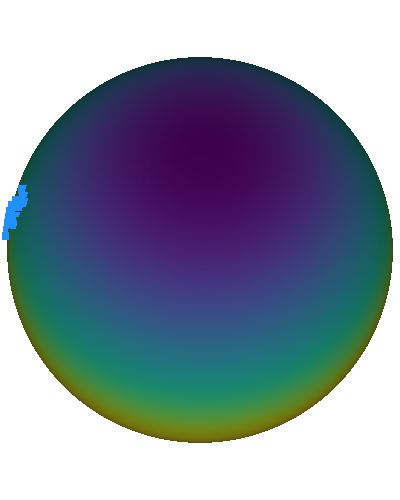}
\end{subfigure}
\begin{subfigure}{.24\textwidth}
  \centering
  \includegraphics[width=.9\linewidth]{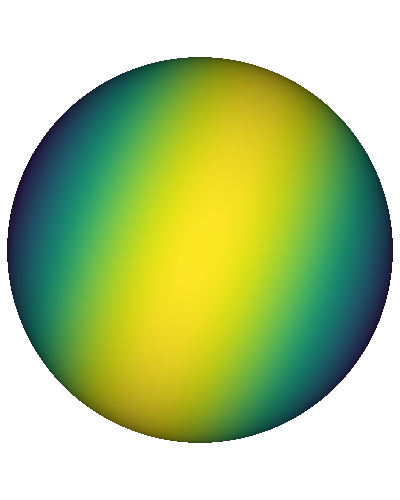}
\end{subfigure}
\begin{subfigure}{.24\textwidth}
  \centering
  \includegraphics[width=.9\linewidth]{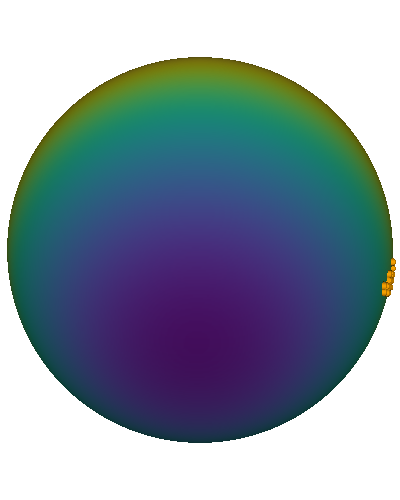}
\end{subfigure}
\centering{(b) After 500 steps} 
\\
\begin{subfigure}{.24\textwidth}
  \centering
  \includegraphics[width=.9\linewidth]{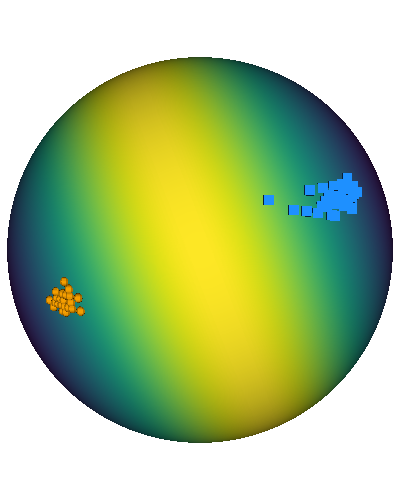}
\end{subfigure}%
\begin{subfigure}{.24\textwidth}
  \centering
  \includegraphics[width=.9\linewidth]{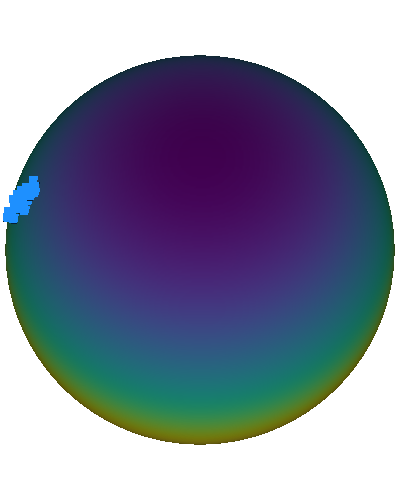}
\end{subfigure}
\begin{subfigure}{.24\textwidth}
  \centering
  \includegraphics[width=.9\linewidth]{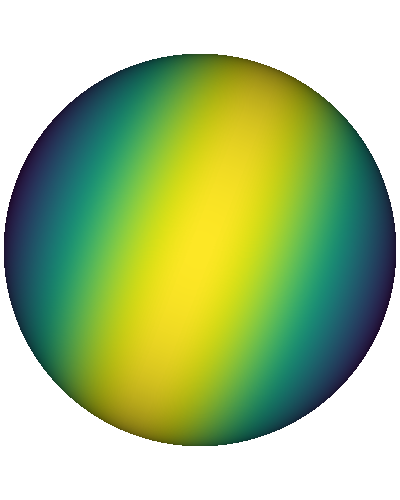}
\end{subfigure}
\begin{subfigure}{.24\textwidth}
  \centering
  \includegraphics[width=.9\linewidth]{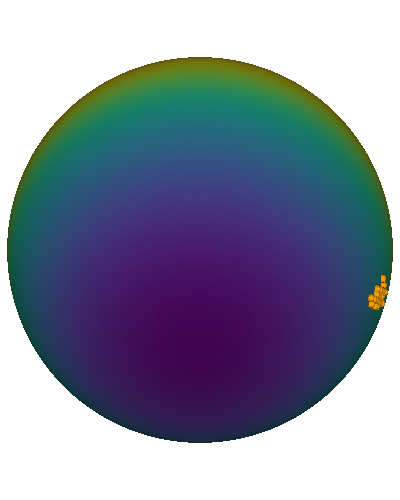}
\end{subfigure}
\centering{(c) After 2500 steps} 
\\
\begin{subfigure}{.24\textwidth}
  \centering
  \includegraphics[width=.9\linewidth]{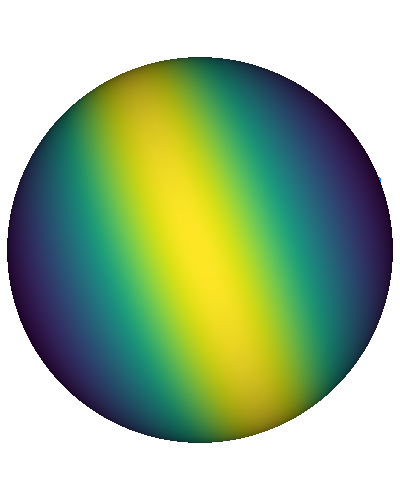}
\end{subfigure}%
\begin{subfigure}{.24\textwidth}
  \centering
  \includegraphics[width=.9\linewidth]{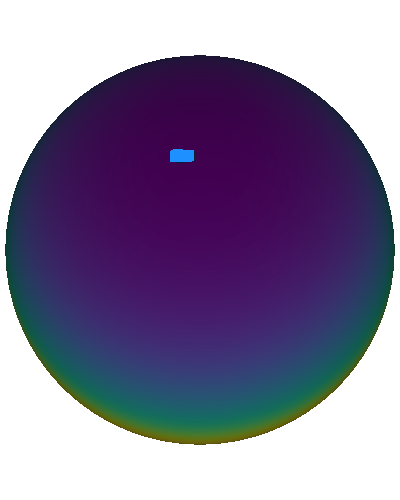}
\end{subfigure}
\begin{subfigure}{.24\textwidth}
  \centering
  \includegraphics[width=.9\linewidth]{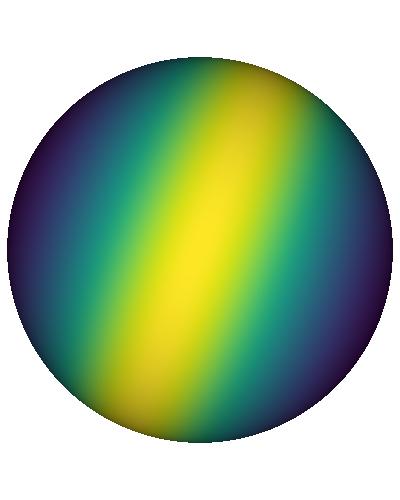}
\end{subfigure}
\begin{subfigure}{.24\textwidth}
  \centering
  \includegraphics[width=.9\linewidth]{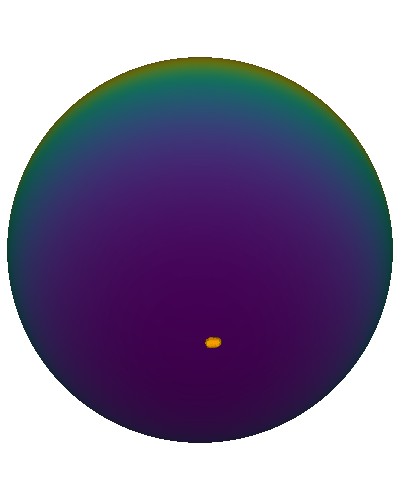}
\end{subfigure}
\caption{
$\Loss{OOD}$ applied to exemplary data points on a sphere. Gradients are applied to the data points directly. We observe that the geometry of the space forces the patterns to opposing poles of the sphere.
}
\label{fig:spheres-dynamics}
\end{figure}

\subsection{Learning Dynamics of \ourmethod on Patterns on a Sphere - Video}
\label{sec:appendix-video}

The example video\footnote{\videolink} demonstrates the learning dynamics of \ourmethod on a 3-dimensional sphere. We randomly generate ID  patterns $\BX$ clustering around one of the sphere's poles and \aux patterns $\BO$ on the remaining surface of the sphere. We then apply \ourmethod on this data set. First, we sample the weak learners close to the decision boundary for both classes, $\BX$ and $\BO$. Then, we perform 2000 steps of gradient descent with $\Loss{OOD}$ on the sampled weak learners. We apply the gradient updates to the patterns directly and do not propagate any gradients to an encoder. Every 50 gradient steps, we re-sample the weak learners. For this example, the initial learning rate is set to $0.02$ and increased after every gradient step by $0.1\%$.
\clearpage
\subsection{Location of Weak Learners near the Decision Boundary}

\begin{figure}[H]
\centering
\begin{subfigure}{.33\textwidth}
  \centering
  \includegraphics[width=\linewidth]{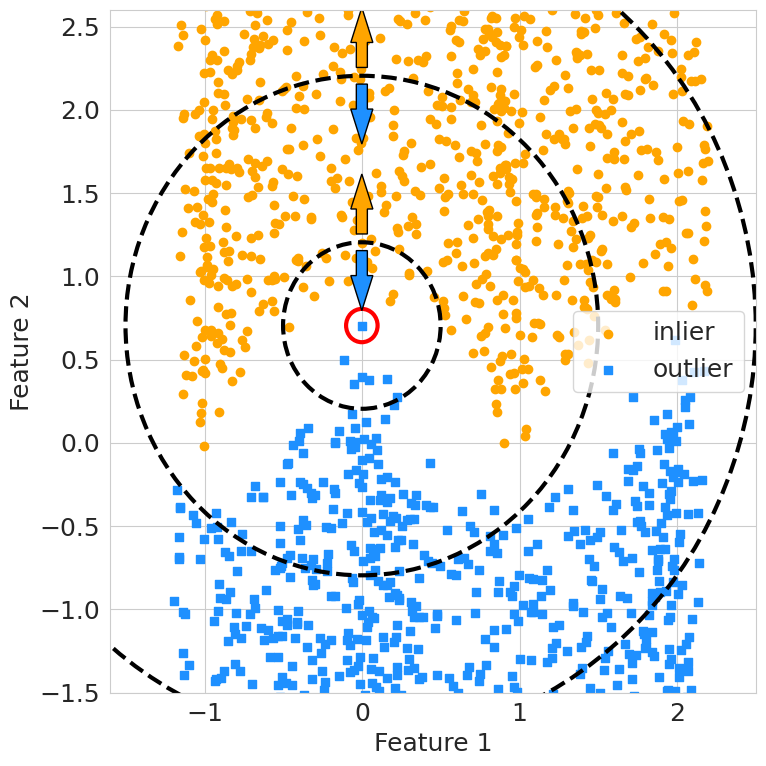}
\end{subfigure}%
\caption{A prototypical classifier (red circle) that is constructed with a sample close to the decision boundary. Classifiers like this one will only perform slightly better than random guessing (as indicated by the radial decision boundaries) and are, therefore, well-suited for weak learners.}
\label{fig:weak-learner}
\end{figure}

\subsection{Interaction between ID and OOD losses}

\begin{figure}[H]
\centering
\begin{subfigure}{.24\textwidth}
  \centering
  \includegraphics[width=\linewidth]{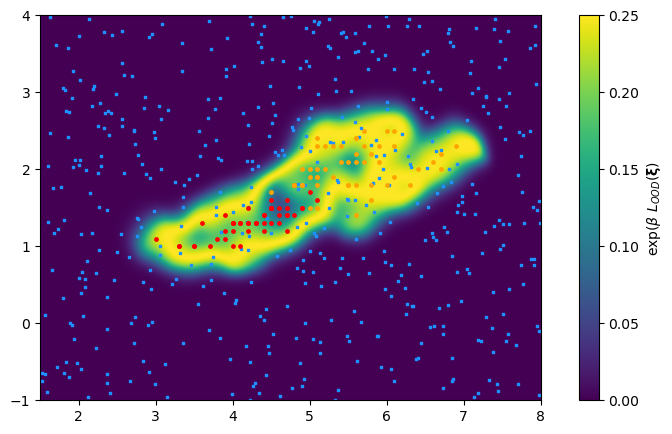}
\end{subfigure}%
\begin{subfigure}{.24\textwidth}
  \centering
  \includegraphics[width=\linewidth]{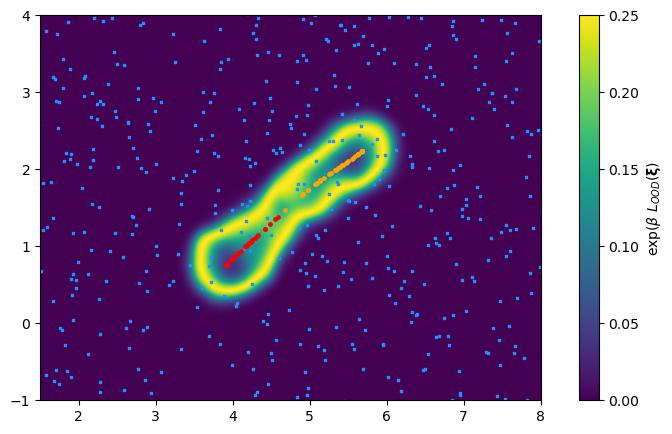}
\end{subfigure}%
\begin{subfigure}{.24\textwidth}
  \centering
  \includegraphics[width=\linewidth]{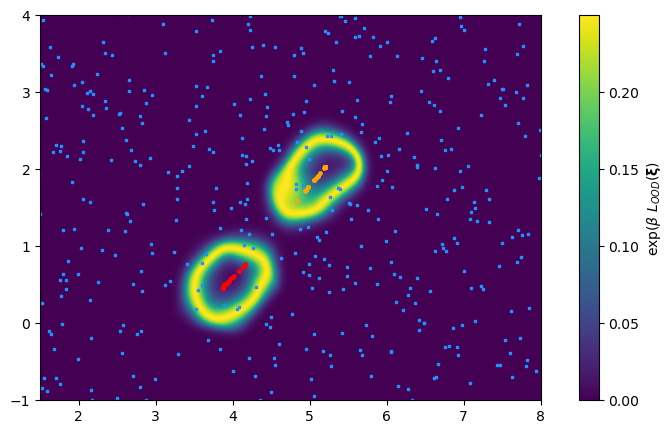}
\end{subfigure}%
\begin{subfigure}{.24\textwidth}
  \centering
  \includegraphics[width=\linewidth]{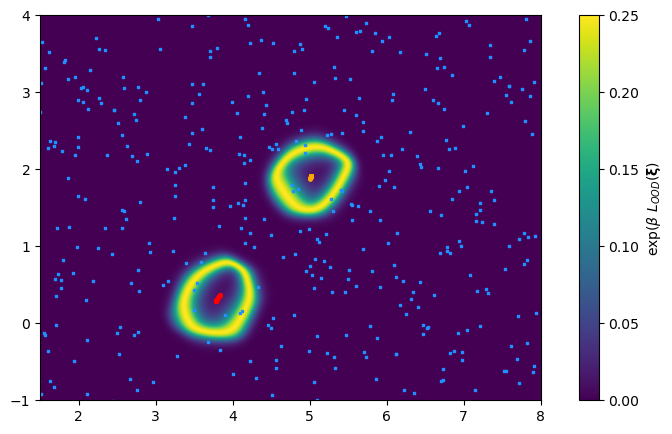}
\end{subfigure}%
\caption{Synthetic example of Hopfield Boosting training dynamics. The ID data is split in two classes (shown in red and orange); the AUX data (blue) is sampled uniformly. We minimize the loss $\Loss{}\ = \ \Loss{CE}\ +\ \Loss{OOD}$, and apply the gradient updates on the patterns directly. The left Figure shows the initial pattern positions, the rightmost Figure shows the positions after $1000$ gradient updates: The classes are well-separated; $\rE_b$ forms a tight decision boundary around the ID data.}
\label{fig:id-ood-interaction}
\end{figure}

To demonstrate how \ourmethod can interact in a classification task we created a toy example that resembles the ID classification setting (Figure \ref{fig:id-ood-interaction}): The example shows the decision boundary and the inlier samples organized in two classes (shown in red and orange). We sample uniformly distributed auxiliary outliers. Then, we minimize the compound objective (applying the gradient updates on the patterns directly). This shows that \ourmethod is able to separate the two classes well and that still forms a tight decision boundary around the ID data.

\clearpage

\section{Notes on $\rE_b$} \label{sec:appendix-notes-eb}
\subsection{Probabilistic Interpretation of $\rE_b$}
\label{sec:probab-interpretation}

We model the class-conditional densities of the in-distribution data and auxiliary data as mixtures of Gaussians with the patterns as the component means and tied, diagonal covariance matrices with $\beta^{-1}$ in the main diagonal.

\begin{align}
    p(\ \Bxi\ |\ \text{ID}\ ) \ &= \ \frac{1}{N} \sum_{i=1}^N \cN \left(\Bxi; \Bx_i, \beta^{-1} \BI\right) \\
    p(\ \Bxi\ |\ \text{\aux}\ ) \ &= \ \frac{1}{M} \sum_{i=1}^M \cN \left(\Bxi; \Bo_i, \beta^{-1} \BI\right)
\end{align}

Further, we assume the distribution $p(\Bxi)$ as a mixture of $p(\ \Bxi\ |\ \text{ID}\ )$ and $p(\ \Bxi\ |\ \text{\aux}\ )$ with equal prior probabilities (mixture weights):

\begin{align}
    p(\Bxi) \ &= \ p(\text{ID}) \ p(\ \Bxi\ |\ \text{ID}\ ) + p(\text{AUX}) \ p(\ \Bxi\ |\ \text{\aux}\ )\\
    &= \ \frac{1}{2} \ p(\ \Bxi\ |\ \text{ID}\ ) \ + \ \frac{1}{2} \ p(\ \Bxi\ |\ \text{\aux}\ )
\end{align}

The probability of an unknown sample $\Bxi$ being an \aux sample is given by

\begin{align}
    p(\ \text{\aux}\ |\ \Bxi\ ) \ &= \ \frac{p(\ \Bxi\ |\ \text{\aux}\ ) \ p(\text{\aux})}{p(\Bxi)} \\
    &= \frac{p(\ \Bxi\ |\ \text{\aux}\ )}{2 \ p(\Bxi)} \\
    &= \frac{p(\ \Bxi\ |\ \text{\aux}\ )}{p(\ \Bxi\ |\ \text{\aux}\ ) \ + \ p(\ \Bxi\ |\ \text{ID}\ )}\\
    &= \frac{1}{1 \ + \ \frac{p(\ \Bxi\ |\ \text{ID}\ )}{p(\ \Bxi\  |\ \text{\aux}\ )}}\label{eqn:division-by-p}\\
    &= \ \frac{1}{1 + \exp(\log(p(\ \Bxi\ |\ \text{ID}\ )) - \log(p(\ \Bxi\ |\ \text{AUX}\ )))} \label{eqn:sigmoid-aux}
\end{align}

where in line \eqref{eqn:division-by-p} we have used that $p(\ \Bxi\ |\ \text{\aux}\ ) > 0$ for all $\Bxi \in \mathbb{R}^d$. The probability of $\Bxi$ being an ID sample is given by

\begin{align}
    p(\ \text{ID}\ |\ \Bxi\ ) &= \frac{p(\ \Bxi\ |\ \text{ID}\ )}{2 \ p(\Bxi)} \\
    &= \frac{1}{1 + \exp(\log(p(\ \Bxi\ |\ \text{\aux}\ )) - \log(p(\ \Bxi\ |\ \text{ID}\ )))} \\
    &= 1 - p(\ \text{\aux}\ |\ \Bxi\ )
\end{align}

Consider the function
\begin{align}
    f_b(\Bxi) \ &= \ p(\ \text{\aux}\ |\ \Bxi) \cdot p(\ \text{ID}\ |\ \Bxi\ )\\
    &= \frac{p(\ \Bxi\ |\ \text{\aux}\ ) \cdot p(\ \Bxi\ |\ \text{ID}\ )}{4p(\Bxi)^2}
\label{eqn:exp-loss}
\end{align}

By taking the $\log$ of Equation~\eqref{eqn:exp-loss} we obtain the following. We use $\ \cequals \ $ to denote equality up to an additive constant that does not depend on $\Bxi$.

\begin{align}
    \beta^{-1} \ \log \left(f_b(\Bxi)\right) \ &\cequals - \ 2 \ \beta^{-1} \ \log\left(p(\Bxi)\right) \ + \ \beta^{-1} \ \log\left(p(\ \Bxi\ |\ \text{ID}\ )\right) \ + \ \ \beta^{-1} \ \log\left(p(\ \Bxi\ |\ \text{\aux}\ )\right)
\end{align}

Multiplication by $\beta^{-1}$ is equivalent to a change of base of the $\log$. The term $- \ \beta^{-1} \ \log(p(\Bxi))$ is equivalent to the \mhe \citep{Ramsauer:21} (up to an additive constant) when assuming normalized patterns, i.e. $||\Bx_i||_2 = 1$ and $||\Bo_i||_2 = 1$, and an equal number of patterns $M=N$ in the two Gaussian mixtures $p(\ \Bxi\ |\ \text{ID}\ )$ and $p(\ \Bxi\ |\ \text{\aux}\ )$:

\begin{align}
- \ \beta^{-1} \log(p(\Bxi)) &= - \ \beta^{-1} \log\left(\frac{1}{2} p(\ \Bxi\ |\ \text{ID}\ ) \ + \ \frac{1}{2} p(\ \Bxi\ |\ \text{\aux}\ )\right) \\
&\cequals - \ \beta^{-1} \log\left(p(\ \Bxi\ |\ \text{ID}\ ) \ + \  p(\ \Bxi\ |\ \text{\aux}\ )\right) \\
&= - \ \beta^{-1} \log\left(\frac{1}{N} \sum_{i=1}^N \cN \left(\Bxi; \Bx_i, \beta^{-1} \BI\right) \ + \  \frac{1}{N} \sum_{i=1}^N \cN \left(\Bxi; \Bo_i, \beta^{-1} \BI\right)\right)\\
&\cequals - \ \beta^{-1} \log\left(\sum_{i=1}^N \cN \left(\Bxi; \Bx_i, \beta^{-1} \BI\right) \ + \ \sum_{i=1}^N \cN \left(\Bxi; \Bo_i, \beta^{-1} \BI\right)\right)\\
&\cequals - \ \beta^{-1} \log\left(\sum_{i=1}^N \exp(-\frac{\beta}{2} ||\Bxi - \Bx_i||_2^2) \ + \ \sum_{i=1}^N \exp(-\frac{\beta}{2} ||\Bxi - \Bo_i||_2^2) \right)\\
&\cequals - \ \beta^{-1} \log\left(\sum_{i=1}^N \exp(\beta \Bx_i^T \Bxi - \frac{\beta}{2}\Bxi^T\Bxi) \ + \ \sum_{i=1}^N \exp(\beta \Bo_i^T \Bxi - \frac{\beta}{2}\Bxi^T\Bxi) \right)\\
&= - \ \beta^{-1} \log\left(\sum_{i=1}^N \exp(\beta \Bx_i^T \Bxi) \ + \ \sum_{i=1}^N \exp(\beta \Bo_i^T \Bxi) \right) \ + \ \frac{1}{2} \ \Bxi^T \Bxi\\
&\cequals - \ \lse(\beta, (\BX \concat \BO)^T \Bxi)  \ + \  \frac{1}{2} \ \Bxi^T \Bxi \ + \beta^{-1} \log N + \frac{1}{2}M^2
\end{align}

Analogously, $\beta^{-1} \ \log(p(\ \Bxi\ |\ \text{ID}\ ))$ and $\beta^{-1} \ \log(p(\ \Bxi\ |\ \text{\aux}\ ))$ also yield \mhe terms. Therefore, $\rE_b$ is equivalent to $\beta^{-1} \log(f_b(\Bxi))$ under the assumption that $||\Bx_i||_2 = 1$ and $||\Bo_i||_2 = 1$ and $M=N$. The $\frac{1}{2}\Bxi^T\Bxi$ terms that are contained in the three MHEs cancel out.

\begin{align}
    \beta^{-1} \ \log \left(f_b(\Bxi)\right) \cequals - \ 2 \ \lse(\beta, (\BX \concat \BO)^T \Bxi) \ + \ \lse(\beta, \BX^T \Bxi) \ + \ \lse(\beta, \BO^T \Bxi) = \rE_b(\Bxi; \BX, \BO)
\end{align}

$f_b (\Bxi)$ can also be interpreted as the variance of a Bernoulli distribution with outcomes ID and AUX:

\begin{align}
    f_b(\Bxi) = \ p(\ \text{\aux}\ |\ \Bxi\ ) \ p(\ \text{ID}\ |\ \Bxi\ ) = p(\ \text{ID}\ |\ \Bxi\ ) (1 - p(\ \text{ID}\ |\ \Bxi\ )) \ = \ p(\ \text{\aux}\ |\ \Bxi\ ) (1 - p(\ \text{\aux}\ |\ \Bxi\ ))
\end{align}

In other words, minimizing $\rE_b$ means to drive a Bernoulli-distributed random variable with the outcomes ID and AUX towards minimum variance, i.e., $p(\ \text{ID}\ |\ \Bxi\ )$ is driven towards $1$ if $p(\ \text{ID}\ |\ \Bxi\ ) > 0.5$ and towards $0$ if $p(\ \text{ID}\ |\ \Bxi\ ) < 0.5$. Conversely, the same is true for $p(\ \text{\aux}\ |\ \Bxi\ )$.

From Equation~\eqref{eqn:sigmoid-aux}, under the assumptions that $||\Bx_i||_2=1$ and $||\Bo_i||_2=1$ and $M=N$, the conditional probability $p(\ \text{AUX}\ |\ \Bxi\ )$ can be computed as follows:

\begin{align}
    p(\ \text{\aux}\ |\ \Bxi\ ) \ &= \ \sigma(\log(p(\ \Bxi\ |\ \text{\aux}\ )) - \log(p(\ \Bxi\ |\ \text{ID}\ )))\\
    &= \ \sigma(\beta \ (\lse(\beta, \BO^T \Bxi) - \lse(\beta, \BX^T \Bxi)))
\end{align}

where $\sigma$ denotes the logistic sigmoid function. Similarly, $p(\ \text{ID}\ |\ \Bxi\ )$ can be computed using

\begin{align}
    p(\ \text{ID}\ |\ \Bxi\ ) \ &= \ \sigma(\beta \ (\lse(\beta, \BX^T \Bxi) - \lse(\beta, \BO^T \Bxi))) \\
    &= 1 - p(\ \text{\aux}\ |\ \Bxi\ )
\end{align}

\subsection{Alternative Formulations of $\rE_b$ and $f_b$}
\label{sec:appendix-distribution-f_b}

$\rE_b$ can be rewritten as follows.

\begin{align}
\rE_b(\Bxi; \BX, \BO) &= - \ 2 \ \lse(\beta, (\BX \concat \BO)^T \Bxi) \ + \ \lse(\beta, \BX^T \Bxi) \ + \ \lse(\beta, \BO^T \Bxi) \\
&= \ - \ 2\beta^{-1} \ \log \cosh \left(\frac{\beta}{2} \ (\lse(\beta, \BX^T \Bxi)\ - \lse(\beta, \BO^T \Bxi))\right) \ - \ 2 \beta^{-1} \ \log(2)
\end{align}

To prove this, we first show the following:

\begin{align}
    &\ - \ \beta^{-1} \ \log \left( \ \exp(\ \beta \ \lse(\beta, \BX^T \Bxi)\ ) + \exp(\ \beta \ \lse(\beta, \BO^T \Bxi)\ ) \ \right) \\
    =& \ - \ \beta^{-1} \ \log \left( \ \exp\left(\ \beta \ \beta^{-1} \log\left(\sum_{i=1}^N \exp(\beta \Bx_i^T\Bxi) \right)\right) + \exp\left(\ \beta \ \beta^{-1} \log\left(\sum_{i=1}^N \exp(\beta \Bo_i^T\Bxi) \right) \right) \right) \\
    =& \ - \ \beta^{-1} \ \log \left( \sum_{i=1}^N \exp(\beta \Bx_i^T\Bxi) + \sum_{i=1}^N \exp(\beta \Bo_i^T\Bxi) \right) \\
    =& \ - \ \lse(\beta, (\BX \concat \BO)^T \Bxi)
\end{align}

Let $\rE_{\BX} \ = \ - \lse(\beta, \BX^T \Bxi)$ and $\rE_{\BO} \ = \ - \lse(\beta, \BO^T \Bxi)$.

\begin{align}
\rE_b(\Bxi; \BX, \BO) \ &= - \ 2 \ \lse(\beta, (\BX \concat \BO)^T \Bxi) \ + \ \lse(\beta, \BX^T \Bxi) \ + \ \lse(\beta, \BO^T \Bxi) \\
&= \ - \ 2\beta^{-1} \ \log \left( \ \exp(\ -\ \beta \ E_{\BX}\ ) + \exp(\ -\ \beta \ E_{\BO}\ ) \ \right)\  - \ E_{\BX} \ - \ E_{\BO}\\
&= \ - \ 2\beta^{-1} \ \log \left( \ \exp(\ -\ \frac{\beta}{2} \ E_{\BX}\ ) + \exp(\ -\ \beta \ E_{\BO} \ + \frac{\beta}{2} E_{\BX}\ ) \ \right) \ - \ E_{\BO}\\
&= \ - \ 2\beta^{-1} \ \log \left( \ \exp(\ -\ \frac{\beta}{2} \ E_{\BX} + \ \frac{\beta}{2} \ E_{\BO}\ ) + \exp(\ -\ \frac{\beta}{2} \ E_{\BO} \ + \frac{\beta}{2} \ E_{\BX}\ ) \ \right)\\
&= \ - \ 2\beta^{-1} \ \log \cosh\left(\frac{\beta}{2}( \ - \ E_{\BX}\ + \ E_{\BO})\right) \ - \ 2 \beta^{-1} \ \log(2) \\
&= \ - \ 2\beta^{-1} \ \log \cosh \left(\frac{\beta}{2} \ (\lse(\beta, \BX^T \Bxi)\ - \ \lse(\beta, \BO^T \Bxi))\right) \ - \ 2 \beta^{-1} \ \log(2)\\
&= \ - \ 2\beta^{-1} \ \log \cosh\left(\frac{\beta}{2} \ (\lse(\beta, \BO^T \Bxi)\ - \lse(\beta, \BX^T\Bxi))\right) \ - \ 2 \beta^{-1} \ \log(2) \label{eqn:eb-alternative}
\end{align}

By exponentiation of the above result we obtain

\begin{align}
f_b(\Bxi) \propto \exp(\beta \rE_b(\Bxi; \BX, \BO)) \ &= \ \frac{1}{4\cosh^2 \left(\frac{\beta}{2} \ (\lse(\beta, \BX^T \Bxi)\ - \ \lse(\beta, \BO^T \Bxi))\right)}
\end{align}

The function $\log \cosh(x)$ is related to the negative log-likelihood of the hyperbolic secant distribution \citep[see e.g.][]{Saleh:22}. For values of $x$ close to $0$, $\log \cosh$ can be approximated by $\frac{x^2}{2}$, and for values far from $0$, the function behaves as $|x| - \log(2)$.

\begin{figure}
\centering
\begin{subfigure}{0.5\textwidth}
  \centering
  \includegraphics[width=\linewidth]{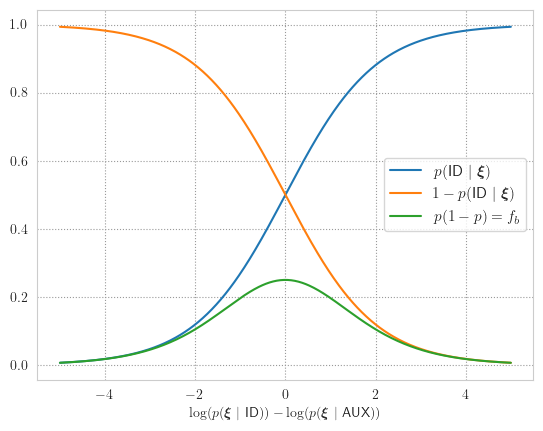}
  \caption{Probabilities}
  \label{fig:sigmoid-product}
\end{subfigure}%
\begin{subfigure}{.5\textwidth}
  \centering
  \includegraphics[width=\linewidth]{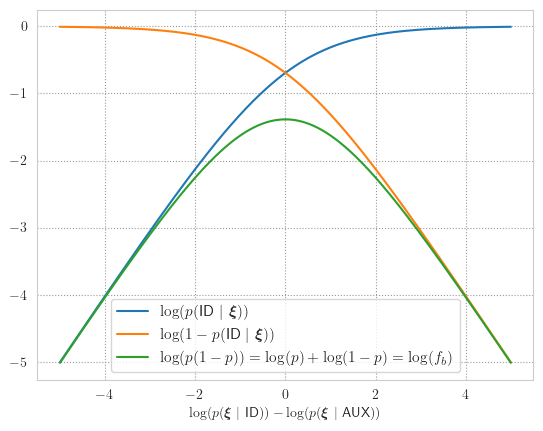}
  \caption{Log-probabilities}
  \label{fig:softplus-sum}
\end{subfigure}%
\caption{The product of two logistic sigmoids yields $f_b$ (a); the sum of two log-sigmoids yields $\log(f_b)$ = $\rE_b$ (b).}
\label{fig:sigmoid-logcosh}
\end{figure}

\subsection{Derivatives of $\rE_b$}

In this section, we investigate the derivatives of the energy function $\rE_b$. The derivative of the $\lse$ is:

\begin{align}
    \nabla_{\Bz} \ \lse(\beta, \Bz) \ =  \
    \nabla_{\Bz} \ \beta^{-1} \ \log \left( \sum_{i=1}^N \exp(\beta z_i) \right) \ = \ 
    \sm(\beta \ \Bz)
\end{align}

Thus, the derivative of the \mhe $\rE(\Bxi; \BX)$ w.r.t. $\Bxi$ is:

\begin{align}
    \nabla_{\Bxi} \ \rE(\Bxi; \BX) \ = \ \nabla_{\Bxi} \ (-\lse(\beta, \BX^T\Bxi) \ + \ \frac{1}{2} \Bxi^T \Bxi \ + \ C) \ = \ - \ \BX \sm(\beta \BX^T \Bxi) + \Bxi \label{eqn:energy-gradient-xi}
\end{align}

The update rule of the MHN

\begin{align}
    \Bxi^{t+1} \ = \ \BX \sm(\beta \BX^T \Bxi^t)
\end{align}

is derived via the concave-convex procedure. It coincides with the attention mechanisms of Transformers and has been proven to converge globally to stationary points of the energy $\rE(\Bxi; \BX)$ \citep{Ramsauer:21}. It can also be shown that the update rule emerges when performing gradient descent on $\rE(\Bxi; \BX)$ with step size $\eta = 1$ \cite{Park:23}:

\begin{align}
    \Bxi^{t+1} \ = \ \Bxi^t \ - \ \eta \ \nabla_{\Bxi} \rE(\Bxi^t; \BX) \\
    \Bxi^{t+1} \ = \ \BX \sm(\beta \BX^T\Bxi^t)
\end{align}

From Equation \eqref{eqn:energy-gradient-xi}, we can see that the gradient of $\rE_b(\Bxi; \BX, \BO)$ w.r.t. $\Bxi$ is:

\begin{align}
    \nabla_{\Bxi} \rE_b(\Bxi; \BX, \BO) \ &= \ \nabla_{\Bxi}\  (- \ 2 \ \lse(\beta, (\BX \concat \BO)^T \Bxi) \ + \ \lse(\beta, \BX^T \Bxi) \ + \ \lse(\beta, \BO^T \Bxi))\\
&= - \ 2 \ (\BX \concat \BO) \ \sm(\beta (\BX \concat \BO)^T\Bxi) \ + \ \BX \sm(\beta \BX^T\Bxi) \ + \ \BO \sm(\beta \BO^T\Bxi)
\end{align}

When $\BX \sm(\beta \BX^T\Bxi)$, $\BO \sm(\beta \BO^T\Bxi)$, $\lse(\beta, \BX^T\Bxi)$ and $\lse(\beta, \BO^T\Bxi)$ are available, one can efficiently compute $(\BX \concat \BO) \ \sm(\beta (\BX \concat \BO)^T\Bxi)$ as follows:

\begin{align}
    (\BX \concat \BO)& \ \sm(\beta (\BX \concat \BO)^T\Bxi) \\ &= \ \nabla_{\Bxi} \ \lse(\beta, (\BX \concat \BO)^T\Bxi) \\
    &= \ \nabla_{\Bxi} \ \beta^{-1}\log\left(\exp(\beta \lse(\beta, \BX^T\Bxi)) \ + \ \exp(\beta \lse(\beta, \BO^T\Bxi))\right) \\
    &= \begin{pmatrix}\BX\sm(\beta\BX^T\Bxi) & \BO\sm(\beta\BO^T\Bxi)\end{pmatrix} \sm\left(\beta \begin{pmatrix}\lse(\beta, \BX^T\Bxi)\\\lse(\beta, \BO^T\Bxi)\end{pmatrix}\right)
\end{align}

We can also compute the gradient of $\rE_b(\Bxi; \BX, \BO)$ w.r.t. $\Bxi$ via the $\log \cosh$-representation of $\rE_b$ (see Equation \eqref{eqn:eb-alternative}). The derivative of the $\log \cosh$ function is

\begin{align}
    \frac{\diff}{\diff x} \ \beta^{-1} \log \cosh (\beta x) \ = \ \tanh(\beta x)
\end{align}

Therefore, we can compute the gradient of $\rE_b(\Bxi; \BX, \BO)$ as

\begin{align}
    \nabla_{\Bxi}& \ \rE_b(\Bxi; \BX, \BO)\\ &= \ \nabla_{\Bxi}\ - \ 2\beta^{-1} \ \log \cosh\left(\frac{\beta}{2} \ (\lse(\beta, \BO^T \Bxi)\ -\ \lse(\beta, \BX^T\Bxi))\right)\\
    &= \ - \tanh\left(\frac{\beta}{2}(\lse(\beta, \BO^T\Bxi)\ -\ \lse(\beta, \BX^T\Bxi))\right)\left(\BO \sm(\beta \BO^T\Bxi) - \BX \sm(\beta\BX^T\Bxi)\right)\\
    &= \ - \tanh\left(\frac{\beta}{2}(\lse(\beta, \BX^T\Bxi)\ -\ \lse(\beta, \BO^T\Bxi))\right)\left(\BX \sm(\beta \BX^T\Bxi) - \BO \sm(\beta\BO^T\Bxi)\right)
\end{align}

Next, we would like to compute the gradient of $\rE_b(\Bxi; \BX, \BO)$ w.r.t. the memory matrices $\BX$ and $\BO$. For this, let us first look at the gradient of the MHE $\rE(\Bxi; \BX)$ w.r.t. a single stored pattern $\Bx_i$ (where $\BX$ is the matrix of concatenated stored patterns $(\Bx_1, \Bx_2, \dots, \Bx_N)$):

\begin{align}
    \nabla_{\Bx_i} \rE(\Bxi; \BX) \ = \ - \ \Bxi\sm(\beta \BX^T\Bxi)_i
\end{align}

Thus, the gradient w.r.t. the full memory matrix $\BX$ is

\begin{align}
    \nabla_{\BX} \rE(\Bxi; \BX) \ = \ - \Bxi \sm(\beta \BX^T \Bxi)^T 
\end{align}

We can now also use the $\log \cosh$ formulation of $\rE_b(\Bxi; \BX, \BO)$ to compute the gradient of $\rE_b(\Bxi; \BX, \BO)$, w.r.t $\BX$ and $\BO$:

\begin{align}
    \nabla_{\BX} \rE_b(\Bxi; \BX, \BO) \ &= \ \nabla_{\BX} - \ 2\beta^{-1} \ \log \cosh\left(\frac{\beta}{2} \ (\lse(\beta, \BX^T \Bxi)\ - \lse(\beta, \BO^T\Bxi))\right) \\
    &= \ - \ \tanh\left(\frac{\beta}{2}(\lse(\beta, \BX^T\Bxi) - \lse(\beta, \BO^T\Bxi))\right)\ \Bxi \sm(\beta \BX^T \Bxi)^T\\
\end{align}

Analogously, the gradient w.r.t $\BO$ is

\begin{align}
    \nabla_{\BO} \rE_b(\Bxi; \BX, \BO) \ &= \ - \ \tanh\left(\frac{\beta}{2}(\lse(\beta, \BO^T\Bxi) - \lse(\beta, \BX^T\Bxi))\right)\Bxi \sm(\beta \BO^T \Bxi)^T
\end{align}
\clearpage
\section{Notes on the Relationship between \ourmethod and other methods}

\subsection{Relation to Radial Basis Function Networks}
\label{sec:relation-rbf}
This section shows the relation between radial basis function networks \citep[RBF networks;][]{Moody:89} and modern Hopfield energy \citep[following][]{Schafl:22}. Consider an RBF network with normalized linear weights:

\begin{equation}
    \varphi(\Bxi)=\sum_{i=1}^N \omega_i \exp( -\frac{\beta}{2}||\Bxi - \Bmu_i||_2^2)
\end{equation}

where $\beta$ denotes the inverse tied variance $\beta = \frac{1}{\sigma^2}$, and the $\omega_i$ are normalized using the $\sm$ function:

\begin{equation}
    \omega_i = \sm(\beta\Ba)_i = \frac{\exp(\beta a_i)}{\sum_{j=1}^N \exp(\beta a_j)}
\end{equation}

An energy can be obtained by taking the negative $\log$ of $\varphi(\Bxi)$:

\begin{align}
    \rE (\Bxi) \ &= -\ \beta^{-1} \ \log \left( \varphi(\Bxi) \right) \\
    &= \ - \ \beta^{-1} \ \log \left( \sum_{i=1}^N \omega_i \exp(-\frac{\beta}{2} \ ||\Bxi - \Bmu_i||_2^2)) \right) \\
    &= \ -\ \beta^{-1}\ \log \left( \sum_{i=1}^N \exp(\ \beta (-\frac{1}{2}||\Bxi - \Bmu_i||_2^2 + \beta^{-1}\log \sm(\beta \Ba)_i)\ )\right) \\
    &= \ -\ \beta^{-1}\ \log \left( \sum_{i=1}^N \exp(\ \beta(-\frac{1}{2}||\Bxi - \Bmu_i||_2^2 + a_i - \lse(\beta, \Ba)\ )\right) \\
    &= \ - \ \beta^{-1} \ \log\left(\sum_{i=1}^N \exp(\ \beta(-\frac{1}{2} \Bxi^T \Bxi + \Bmu_i^T\Bxi - \frac{1}{2} \Bmu_i^T\Bmu_i + a_i)\ )\right) + \lse(\beta, \Ba)
\end{align}

Next, we define $a_i = \frac{1}{2}\Bmu_i^T\Bmu_i$

\begin{align}
    \rE (\Bxi) \ &= \ -\ \beta^{-1} \ \log \left(\sum_{i=1}^N \exp( \beta \Bmu_i^T\Bxi)\right) + \frac{1}{2} \Bxi^T \Bxi + \lse(\beta, \Ba)
\end{align}

Finally, we use the fact that $\lse(\beta, \Ba) \leq \max_i a_i + \beta^{-1} \log N$

\begin{align}
    \rE (\Bxi) \ &= \ - \ \beta^{-1} \ \log \left(\sum_{i=1}^N \exp( \beta \Bmu_i^T\Bxi)\right) + \frac{1}{2} \Bxi^T \Bxi + \beta^{-1} \log N + \frac{1}{2}M^2
\end{align}

where $M = \max_i ||\Bmu_i||_2$

\subsection{Contrastive Representation Learning}

A commonly used loss function in contrastive representation learning \citep[e.g.,][]{Chen:20, He:20} is the InfoNCE loss \citep{Oord:18}:

\begin{equation}
    \Loss{NCE} = \quad \mathop{\mathbb{E}}_{\substack{
        (x, y) \sim p_{\text{pos}} \\
        \{x^-_i\}_{i=1}^M \sim p_{\text{data}}}}
        \left[-\log \frac{e^{f(x)^T f(y) / \tau}}{e^{f(x)^T f(y) / \tau} + \sum_i e^{f(x^-_i)^T f(y) / \tau }}\right]
    \label{eq:contrastive_loss}
\end{equation}

\cite{Wang:20} show that $\Loss{NCE}$ optimizes two objectives:

\begin{equation}
\begin{split}
    & \Loss{NCE} = \underbrace{\mathop{\mathbb{E}}_{(x, y) \sim p_{pos}}\left[- f(x)^T f(y) / \tau\right] \vphantom{\mathop{\mathbb{E}}_{\substack{
        (x, y) \sim p_{pos} \\
        \{x^-_i\}_{i=1}^M \sim p_{data}
    }}}}_{\text{Alignment}}
    \quad +\hspace{-5pt} \underbrace{\mathop{\mathbb{E}}_{\substack{
        (x, y) \sim p_{pos} \\
        \{x^-_i\}_{i=1}^M \sim p_{data}
    }}\left[{\log\hspace{-2pt}\left( e^{f(x)^T f(y) / \tau}+ \sum_i e^{f(x^-_i)^T f(x) / \tau} \right)}\right]}_\text{Uniformity}
\end{split}
\end{equation}

Alignment enforces that features from positive pairs are similar, while uniformity encourages a uniform distribution of the samples over the hypersphere.

In comparison, our proposed loss, $\Loss{OOD}$, does not visibly enforce alignment between samples within the same class. Instead, we can observe that it promotes uniformity to the instances of the \textit{foreign} class. Due to the constraints that are imposed by the geometry of the space the optimization is performed on, that is, $||f(x)||=1$ when the samples move on a hypersphere, the loss encourages the patterns in the ID data have maximum distance to the samples of the AUX data, i.e., they concentrate on opposing poles of the hypersphere. A demonstration of this mechanism can be found in Appendix~\ref{sec:appendix-dynamics-euclidean} and \ref{sec:appendix-dynamics-sphere}

\subsection{Support Vector Machines}
\label{sec:appendix-svm}

In the following, we will show the relation of \ourmethod to support vector machines \citep[SVMs;][]{Cortes:95} with RBF kernel. We adopt and expand the arguments of \citet{Schafl:22}. 

Assume we apply an SVM with RBF kernel to model the decision boundary between ID and \aux data.  We train on the features $\BZ \ = \ (\Bx_1, \dots, \Bx_N, \Bo_1, \dots, \Bo_M)$ and assume that the patterns are normalized, i.e., $||\Bx_i||_2 = 1$ and $||\Bo_i||_2 = 1$. We define the targets $(y_1, \dots, y_{(N+M)})$ as $1$ for ID and $-1$ for \aux data. The decision rule of the SVM equates to

\begin{align}
    \hat{B}(\Bxi)\ = \ 
    \begin{cases}
    \text{ID} & \text{if } s(\Bxi) \geq 0\\
    \text{OOD} & \text{if } s(\Bxi) < 0\\
    \end{cases}
\end{align}

where

\begin{align}
    s(\Bxi) \ = \ \sum_{i=1}^{N+M} \alpha_i y_i k(\Bz_i, \Bxi)\\
    k(\Bz_i, \Bxi) = \exp\left(-\frac{\beta}{2}||\Bxi \ - \ \Bz_i||_2^2\right)
\end{align}

We assume that there is at least one support vector for both ID and \aux data, i.e., there exists at least one index $i$ s.t. $\alpha_i y_i > 0$ and at least one index $j$ s.t. $\alpha_j y_j < 0$. We now split the samples $\Bz_i$ in $s(\Bxi)$ according to their label:

\begin{align}
    s(\Bxi) \ = \ \sum_{i=1}^{N} \alpha_i k(\Bx_i, \Bxi) \ - \ \sum_{i=1}^{M} \alpha_{N+i} k(\Bo_i, \Bxi) 
\end{align}

We define an alternative score:

\begin{align}
    s_{\text{frac}}(\Bxi) \ = \ \frac{\sum_{i=1}^{N} \alpha_i k(\Bx_i, \Bxi)}{\sum_{i=1}^{M} \alpha_{N+i} k(\Bo_i, \Bxi)}\\
\end{align}

Because we assumed there is at least one support vector for both ID and \aux data and as the $\alpha_i$ are constrained to be non-negative and because $k(\cdot, \cdot) > 0$, the numerator and denominator are strictly positive. We can, therefore, specify a new decision rule $\hat{B}_{\text{frac}}(\Bxi)$.

\begin{align}
    \hat{B}_{\text{frac}}(\Bxi)\ = \
    \begin{cases}
    \text{ID} & \text{if } s_{\text{frac}}(\Bxi) \geq 1\\
    \text{OOD} & \text{if } s_{\text{frac}}(\Bxi) < 1\\
    \end{cases}
\end{align}

Although the functions $s(\Bxi)$ and $s_{\text{frac}}(\Bxi)$ are different, the decision rules $\hat{B}(\Bxi)$ and $\hat{B}_{\text{frac}}(\Bxi)$ are equivalent. Another possible pair of score and decision rule is the following:

\begin{gather}
    s_{\log}(\Bxi) \ = \ \beta^{-1} \log(s_{\text{frac}}(\Bxi)) \ = \ \beta^{-1} \log \left(\sum_{i=1}^{N} \alpha_i k(\Bx_i, \Bxi)\right) \ - \ \beta^{-1} \log \left(\sum_{i=1}^{M} \alpha_{N+i} k(\Bo_i, \Bxi)\right)\\
    \hat{B}_{\log}(\Bxi)\ = \
    \begin{cases}
    \text{ID} & \text{if } s_{\log}(\Bxi) \geq 0\\
    \text{OOD} & \text{if } s_{\log}(\Bxi) < 0\\
    \end{cases}
\end{gather}

Let us more closely examine the term $\beta^{-1} \log \left(\sum_{i=1}^{N} \alpha_i k(\Bx_i, \Bxi)\right)$. We define $a_i = \beta^{-1}\log(\alpha_i)$.

\begin{align}
    \beta^{-1} \log \left(\sum_{i=1}^{N} \alpha_i k(\Bx_i, \Bxi)\right)\ &= \ \beta^{-1} \log \left(\sum_{i=1}^{N} \exp(\beta a_i) \exp\left(-\frac{\beta}{2}||\Bxi \ - \ \Bx_i||_2^2\right)\right)\\
    &= \ \beta^{-1} \log \left(\sum_{i=1}^{N} \exp(\beta a_i) \exp\left(-\frac{\beta}{2}\Bxi^T\Bxi + \beta \Bx_i^T\Bxi - \frac{\beta}{2}\Bx_i^T\Bx_i\right)\right)\\
    &= \ \beta^{-1} \log \left(\sum_{i=1}^{N} \exp\left(-\frac{\beta}{2}\Bxi^T\Bxi + \beta \Bx_i^T\Bxi - \frac{\beta}{2}\Bx_i^T\Bx_i+\beta a_i\right)\right)\\
    &= \ \beta^{-1} \log \left(\sum_{i=1}^{N} \exp\left(\beta \Bx_i^T\Bxi +\beta a_i\right)\right)\ -\ \frac{1}{2} \Bxi^T\Bxi\ -\ \frac{1}{2}\label{eqn:svm-almost-hopfield}
\end{align}

We now construct a memory $\BX_H$ and query $\Bxi_H$ such that we can compute \eqref{eqn:svm-almost-hopfield} using the \mhe (Equation \eqref{eq:mhe}):

\begin{gather}
    \BX_H \ = \ \begin{pmatrix}\Bx_1 & \dots & \Bx_N \label{eqn:constructed-memory} \\
    a_1 & \dots & a_N\end{pmatrix}\\
    \Bxi_H \ = \ \begin{pmatrix}
        \Bxi \\
        1
    \end{pmatrix}
\end{gather}

We obtain

\begin{align}
    \rE(\Bxi_H; \BX_H) \ &= \ - \ \lse(\beta, \BX_H^T \Bxi_H) \ + \ \frac{1}{2}\Bxi_H^T\Bxi_H \ + \ C \\
    &= \ - \ \beta^{-1} \log \left(\sum_{i=1}^{N} \exp\left(\beta \Bx_i^T\Bxi +1\beta a_i\right)\right)\ +\ \frac{1}{2} \Bxi^T\Bxi\ + \frac{1}{2} \cdot 1^2\ +\ C \\
    &= \ - \ \beta^{-1} \log \left(\sum_{i=1}^{N} \exp\left(\beta \Bx_i^T\Bxi +\beta a_i\right)\right)\ +\ \frac{1}{2} \Bxi^T\Bxi\ +\ \frac{1}{2} +\ C\\
    &= \ -\ \beta^{-1} \log \left(\sum_{i=1}^{N} \alpha_i k(\Bx_i, \Bxi)\right)\ +\ C
\end{align}

We construct $\BO_H$ analogously to Equation \eqref{eqn:constructed-memory} and thus can compute

\begin{equation}
    s_{\log}(\Bxi) \ = \ \rE(\Bxi_H; \BO_H)\ - \ \rE(\Bxi_H; \BX_H) \ = \ \lse(\beta, \BX_H^T\Bxi_H)\ - \ \lse(\beta, \BO_H^T\Bxi_H)
\end{equation}

which is exactly the score \ourmethod uses for determining whether a sample is OOD (Equation \eqref{eq:score_fn}). In contrast to SVMs, \ourmethod uses a uniform weighting of the patterns in the memory when computing the score. However, \ourmethod can emulate a weighting of the patterns by more frequently sampling patterns with high weights into the memory.

\subsection{HE and SHE}
\label{sec:appendix-he-she}

\citet{Zhang:22} introduce two post-hoc methods for OOD
detection using MHE, which are called “Hopfield Energy”
(HE) and “Simplified Hopfield Energy” (SHE). Like Hopfield Boosting, HE and SHE both employ the MHE to determine whether a sample is ID or OOD. However, unlike Hopfield Boosting, HE and SHE offer no possibility to include AUX data in the training process to improve the OOD detection performance of their method. The rest of this section is structured as follows: First, we briefly introduce the methods HE and SHE, second, we formally analyze the two methods, and third, we relate them to Hopfield Boosting.

\paragraph{Hopfield Energy (HE)} The method HE \citep{Zhang:22} computes the OOD score $s_{\text{HE}}(\Bxi)$ as follows:

\begin{align}
    s_{\text{HE}}(\Bxi) \ = \ \lse(\beta, \BX_c^T\Bxi)
\end{align}

where $\BX_c \in \mathbb{R}^{d \times N_c}$ denotes the memory $(\Bx_{c1}, \dots, \Bx_{cN_{c}})$ containing $N_c$ encoded data instances of class $c$. HE uses the prediction of the ID classification head to determine which patterns to store in the Hopfield memory:

\begin{align}
    c \ = \ \argmax_{y} p(\ y\ |\ \Bxi^{\mathcal{D}} \ )
\end{align}

\paragraph{Simplified Hopfield Energy (SHE)} The method SHE \citep{Zhang:22} employs a simplified score $s_{\text{SHE}}(\Bxi)$:

\begin{align}
    s_{\text{SHE}}(\Bxi) \ = \ \Bm_c^T\Bxi
\end{align}

where $\Bm_c \in \mathbb{R}^{d}$ denotes the mean of the patterns in memory $\BX_c$:

\begin{align}
    \Bm_c \ = \ \frac{1}{N_c} \sum_{i=1}^{N_c} \Bx_{ci}
\end{align}

\paragraph{Relation between HE and SHE} In the following, we show a simple yet enlightening relation between the scores $s_{\text{HE}}$ and  $s_{\text{SHE}}$. For mathematical convenience, we first slightly modify the score $s_{\text{HE}}$:

\begin{align}
    s_{\text{HE}}(\Bxi) \ = \ \lse(\beta, \BX_c^T\Bxi)\ - \ \beta^{-1}\log N_c
\end{align}

All data sets which were employed in the experiments of \citet{Zhang:22} (CIFAR-10 and CIFAR-100) are class-balanced.
Therefore, the additional term $\beta^{-1}\log N_c$ does not change the result of the OOD detection on those data sets, as it only amounts to the same constant offset for all classes.

The function

\begin{align}
    \lse(\beta, \Bz) - \beta^{-1}\log N \ = \ \beta^{-1}\log\left(\frac{1}{N}\sum_{i=1}^{N}\exp(\beta z_i)\right)
\end{align}

converges to the mean function as $\beta \rightarrow 0$:

\begin{align}
    \lim_{\beta \rightarrow 0}\ (\lse(\beta, \Bz) - \beta^{-1}\log N)\ =\  \frac{1}{N}\sum_{i=1}^N z_i
\end{align}

We now investigate the behavior of $s_{\text{HE}}$ in this limit:

\begin{align}
    \lim_{\beta \rightarrow 0}\ (\lse(\beta, \BX_c^T\Bxi)\ - \ \beta^{-1}\log N) &= \frac{1}{N} \sum_{i=1}^{N} (\Bx_{ci}^T \Bxi) \\ &= \ \left(\frac{1}{N}\sum_{i=1}^N \Bx_{ci}\right)^T \Bxi \\
    &= \Bm_c^T\Bxi
\end{align}

where

\begin{align}
    \Bm_c \ = \ \frac{1}{N} \sum_{i=1}^N \Bx_{ci}
\end{align}

Therefore, we have shown that

\begin{align}
\lim_{\beta\rightarrow0} s_{\text{HE}}(\Bxi) = s_{\text{SHE}}(\Bxi)
\end{align}

\paragraph{Relation of HE and SHE to Hopfield Boosting.} In contrast to HE and SHE, Hopfield Boosting uses an \aux data set to learn a decision boundary between the ID and OOD regions during the training process. To do this, our work introduces a novel MHE-based energy function, $\rE_b(\Bxi; \BX, \BO)$, to determine how close a sample is to the learnt decision boundary. Hopfield Boosting uses this energy function to frequently sample weak learners into the Hopfield memory and for computing a novel Hopfield-based OOD loss $\Loss{OOD}$. To the best our knowledge, we are the first to use MHE in this way to train a neural network.

The OOD detection score of Hopfield Boosting is

\begin{align}
    s(\Bxi) \ = \ \lse(\beta, \BX^T\Bxi) \ - \ \lse(\beta, \BO^T\Bxi).
\end{align}

where $\BX \in \mathbb{R}^{d \times N}$ contains the full encoded training set $(\Bx_1, \dots, \Bx_N)$ of all classes and $\BO \in \mathbb{R}^{d \times M}$ contains \aux samples. While certainly similar to $s_\text{HE}$, the Hopfield Boosting score $s$ differs from $s_\text{HE}$ in three crucial aspects:

\begin{enumerate}
    \item Hopfield Boosting uses \aux data samples in the OOD detection score in order to create a sharper decision boundary between the ID and OOD regions.
    \item Hopfield Boosting normalizes the patterns in the memories $\BX$ and $\BO$ and the query $\Bxi$ to unit length, while HE and SHE use unnormalized patterns to construct their memories $\BX_c$ and their query pattern $\Bxi$.
    \item The score of Hopfield Boosting, $s(\Bxi)$, contains the full encoded training data set, while $s_\text{HE}$ only contains the patterns of a single class. Therefore Hopfield Boosting computes the similarities of a query sample $\Bxi$ to the entire ID data set. In Appendix \ref{sec:appendix-runtime-considerations}, we show that this process only incurs a moderate overhead of $7.5\%$ compared to the forward pass of the ResNet-18.
\end{enumerate}

The selection of the score function $s(\Bxi)$ is only a small aspect of Hopfield Boosting. Hopfield Boosting additionally samples informative \aux data close to the decision boundary, optimizes an \mhe-based loss function, and thereby learns a sharp decision boundary between ID and OOD regions. Those three aspects are novel contributions of Hopfield Boosting. In contrast, the work of \citet{Zhang:22} solely focuses on the selection of a suitable Hopfield-based OOD detection score for post-hoc OOD detection.

\clearpage

\section{Additional Experiments \& Experimental Details}
\label{sec:appendix-additional-experiments}

\subsection{Results on CIFAR-100}

\begin{table*}[t]
\vskip 0.15in
\centering
\caption{OOD detection performance on CIFAR-100. We compare results from \ourmethod, DOS \citep{Jian:24DOS}, DOE \citep{WangDOE:23}, DivOE \citep{Zhu:23}, DAL \citep{Wang:23}, MixOE \citep{Zhang:23MixOE}, POEM \citep{Ming:22}, EBO-OE \citep{Liu:20}, and MSP-OE \citep{Hendrycks:18} on ResNet-18. $\downarrow$ indicates ``lower is better'' and $\uparrow$ ``higher is better''. All values in $\%$. Standard deviations are estimated across five training runs.}
\label{tab:cifar-100}
\resizebox{\textwidth}{!}{%
\begin{tabular}{ll|l|l|l|l|l|l|l|l|l}
&Metric&HB (ours)&DOS&DOE&DivOE&DAL&MixOE&POEM&EBO-OE&MSP-OE\\
\midrule
\rowcolor{gray!25}
&FPR95 $\downarrow$&$13.27^{\pm 5.46}$&$\mathbf{9.84^{\pm 2.75}}$&$19.38^{\pm 4.60}$&$28.77^{\pm 5.42}$&$19.95^{\pm 2.34}$&$41.54^{\pm 13.16}$&$33.59^{\pm 4.12}$&$36.33^{\pm 2.95}$&$19.86^{\pm 6.90}$\\
\rowcolor{gray!25}
\multirow{-2}{*}{SVHN}&AUROC $\uparrow$&$97.07^{\pm 0.81}$&$\mathbf{97.64^{\pm 0.39}}$&$95.72^{\pm 1.12}$&$94.25^{\pm 0.98}$&$95.69^{\pm 0.66}$&$92.27^{\pm 2.71}$&$94.06^{\pm 0.51}$&$92.93^{\pm 0.72}$&$95.74^{\pm 1.60}$\\
\rowcolor{white}
&FPR95 $\downarrow$&$\mathbf{12.68^{\pm 2.38}}$&$19.40^{\pm 2.45}$&$28.23^{\pm 2.69}$&$35.10^{\pm 4.23}$&$24.24^{\pm 2.12}$&$23.10^{\pm 7.39}$&$15.72^{\pm 3.46}$&$21.06^{\pm 3.12}$&$32.88^{\pm 1.28}$\\
\rowcolor{white}
\multirow{-2}{*}{LSUN-Crop}&AUROC $\uparrow$&$\mathbf{96.54^{\pm 0.65}}$&$\mathbf{96.42^{\pm 0.35}}$&$93.79^{\pm 0.88}$&$92.45^{\pm 0.94}$&$95.04^{\pm 0.43}$&$96.11^{\pm 1.09}$&$\mathbf{96.85^{\pm 0.60}}$&$95.79^{\pm 0.62}$&$92.85^{\pm 0.33}$\\
\rowcolor{gray!25}
&FPR95 $\downarrow$&$\mathbf{0.00^{\pm 0.00}}$&$0.01^{\pm 0.00}$&$0.05^{\pm 0.04}$&$0.01^{\pm 0.00}$&$\mathbf{0.00^{\pm 0.00}}$&$10.27^{\pm 10.72}$&$\mathbf{0.00^{\pm 0.00}}$&$\mathbf{0.00^{\pm 0.00}}$&$0.03^{\pm 0.01}$\\
\rowcolor{gray!25}
\multirow{-2}{*}{LSUN-Resize}&AUROC $\uparrow$&$99.98^{\pm 0.01}$&$99.96^{\pm 0.02}$&$99.99^{\pm 0.01}$&$\mathbf{99.99^{\pm 0.00}}$&$99.94^{\pm 0.02}$&$97.99^{\pm 1.92}$&$99.57^{\pm 0.09}$&$99.57^{\pm 0.03}$&$99.97^{\pm 0.00}$\\
\rowcolor{white}
&FPR95 $\downarrow$&$\mathbf{2.35^{\pm 0.13}}$&$6.02^{\pm 0.52}$&$19.42^{\pm 1.58}$&$11.52^{\pm 0.49}$&$5.22^{\pm 0.39}$&$28.99^{\pm 6.79}$&$2.89^{\pm 0.32}$&$5.07^{\pm 0.54}$&$10.34^{\pm 0.40}$\\
\rowcolor{white}
\multirow{-2}{*}{Textures}&AUROC $\uparrow$&$\mathbf{99.22^{\pm 0.02}}$&$98.33^{\pm 0.11}$&$94.93^{\pm 0.48}$&$97.02^{\pm 0.08}$&$98.50^{\pm 0.16}$&$94.24^{\pm 1.21}$&$98.97^{\pm 0.08}$&$98.15^{\pm 0.16}$&$97.42^{\pm 0.08}$\\
\rowcolor{gray!25}
&FPR95 $\downarrow$&$\mathbf{0.00^{\pm 0.00}}$&$0.03^{\pm 0.01}$&$0.01^{\pm 0.02}$&$0.06^{\pm 0.01}$&$0.01^{\pm 0.02}$&$14.40^{\pm 13.48}$&$\mathbf{0.00^{\pm 0.00}}$&$\mathbf{0.00^{\pm 0.00}}$&$0.08^{\pm 0.02}$\\
\rowcolor{gray!25}
\multirow{-2}{*}{iSUN}&AUROC $\uparrow$&$99.98^{\pm 0.01}$&$99.95^{\pm 0.02}$&$\mathbf{99.99^{\pm 0.00}}$&$99.97^{\pm 0.00}$&$99.93^{\pm 0.02}$&$97.23^{\pm 2.59}$&$99.59^{\pm 0.09}$&$99.57^{\pm 0.03}$&$99.96^{\pm 0.01}$\\
\rowcolor{white}
&FPR95 $\downarrow$&$19.36^{\pm 1.02}$&$32.13^{\pm 1.55}$&$58.68^{\pm 4.15}$&$44.20^{\pm 0.95}$&$33.43^{\pm 1.11}$&$47.01^{\pm 6.41}$&$\mathbf{18.39^{\pm 0.68}}$&$26.68^{\pm 2.18}$&$45.96^{\pm 0.85}$\\
\rowcolor{white}
\multirow{-2}{*}{Places 365}&AUROC $\uparrow$&$\mathbf{95.85^{\pm 0.37}}$&$91.73^{\pm 0.39}$&$83.47^{\pm 1.55}$&$88.28^{\pm 0.26}$&$91.10^{\pm 0.29}$&$89.20^{\pm 1.86}$&$95.03^{\pm 0.71}$&$91.35^{\pm 0.70}$&$87.77^{\pm 0.15}$\\
\midrule\rowcolor{white}
&FPR95 $\downarrow$&$\mathbf{7.94}$&$11.24$&$20.96$&$19.94$&$13.81$&$27.55$&$11.76$&$14.86$&$18.19$\\
\rowcolor{white}
\multirow{-2}{*}{Mean}&AUROC $\uparrow$&$\mathbf{98.11}$&$97.34$&$94.65$&$95.33$&$96.70$&$94.51$&$97.34$&$96.23$&$95.62$\\
\midrule
ID&Accuracy $\uparrow$&$75.08^{\pm 0.46}$&$75.72^{\pm 0.26}$&$\mathbf{76.96^{\pm 0.33}}$&$\mathbf{76.91^{\pm 0.30}}$&$\mathbf{77.29^{\pm 0.14}}$&$\mathbf{79.20^{\pm 2.99}}$&$66.38^{\pm 0.85}$&$69.07^{\pm 0.32}$&$\mathbf{76.87^{\pm 0.39}}$\\
\end{tabular}}
\end{table*}

\label{sec:appendix-results-cifar100}

When applying Hopfield Boosting on CIFAR-100 (Table \ref{tab:cifar-100}), \ourmethod surpasses POEM (the previously best method), improving the mean FPR95 from 11.76 to 7.95. On the SVHN data set, \ourmethod improves the FPR95 metric the most, decreasing it from 33.59 to 13.27. 
For the LSUN-Resize and iSUN data sets, we observe a similar behavior as we saw in our CIFAR-10 evaluation --- almost all methods achieve a perfect result with regard to the FPR95 metric.

\subsection{Pre-Processing and Transformations}
\label{sec:appendix-transformation}

For evaluating OOD detection methods, consistent pre-processing and image transformation is crucial. An inconsistent application of image transformations will skew results when comparing different OOD detection methods. We, therefore, apply the same pre-processing steps and transformations to all OOD detection methods we compare.

\paragraph{CIFAR-10 \& CIFAR-100.} For CIFAR-10 and CIFAR-100, we apply the following transformations:

\begin{enumerate}
    \item RandomCrop (32x32, padding 4)
    \item RandomHorizontalFlip
\end{enumerate}

\paragraph{ImageNet-RC.} For ImageNet-RC (used as \aux data set for the ID data sets CIFAR-10 and CIFAR-100), we apply the following transformations:

\begin{enumerate}
    \item RandomCrop (32x32)
    \item RandomCrop (32x32, padding 4)
    \item RandomHorizontalFlip
\end{enumerate}

\paragraph{ImageNet-1K.} For ImageNet-1K, we apply the following transformations. We closely follow the transformations used in the experiments of \citet{Zhu:23}:

\begin{enumerate}
    \item Resize (224x224)
    \item RandomCrop (224x224, padding 4)
    \item RandomHorizontalFlip
\end{enumerate}

\paragraph{ImageNet-21K.} For ImageNet-21K (used as \aux data set for the ID data sets ImageNet-1K), we apply the following transformations. We closely follow the transformations used in the experiments of \citet{Zhu:23}:

\begin{enumerate}
    \item RandAugment \citep{cubuk2020randaugment}
    \item Resize (224x224)
    \item RandomCrop (224x224, padding 4)
    \item RandomHorizontalFlip
\end{enumerate}

\subsection{Comparison HE/SHE}

Since \ourmethod shares similarities with the \mhe-based methods HE and SHE \citep{Zhang:22}, we also looked at the approach as used for their methods. We use the same ResNet-18 as a backbone network as we used in the experiments for \ourmethod, but train it on CIFAR-10 without OE. We modify the approach of \citet{Zhang:22} to not only use the penultimate layer, but perform a search over all layer activation combinations of the backbone for the best-performing combination. We also do not use the classifier to separate by class. From the search, we see that the concatenated activations of layers 3 and 5 give the best performance on average, so we use this setting. We experience a quite noticeable drop in performance compared to their results (Table~\ref{tab:HESHEcomparison}). Since the computation of the \mhe is the same, we assume the reason for the performance drop is the different training of the ResNet-18 backbone network, where \citep{Zhang:22} used strong augmentations.

\begin{table*}[t]
\centering
\caption{Comparison between HE, SHE and our version. $\downarrow$ indicates ``lower is better'' and $\uparrow$ indicates ``higher is better''.}
\label{tab:HESHEcomparison}
\vskip 0.15in
\begin{tabular}{l|S[separate-uncertainty, table-alignment-mode=none]S[separate-uncertainty, table-alignment-mode=none]|S[separate-uncertainty, table-alignment-mode=none]S[separate-uncertainty, table-alignment-mode=none]|S[separate-uncertainty, table-alignment-mode=none]S[separate-uncertainty, table-alignment-mode=none]}
\toprule
 & \multicolumn{2}{c}{Ours} & \multicolumn{2}{c}{HE} & \multicolumn{2}{c}{SHE} \\
 \text{OOD Dataset} & \text{FPR95 } $\downarrow$ & \text{AUROC } $\uparrow$ & \text{FPR95 } $\downarrow$ & \text{AUROC } $\uparrow$ & \text{FPR95 } $\downarrow$ & \text{AUROC } $\uparrow$\\
\midrule
 \text{SVHN} & $36.79$ & $\mathbf{93.18}$ & 
               $ 35.81$ & $92.35$ & 
               $ \mathbf{35.07}$ & $92.81$\\
               
 \text{LSUN-Crop} & $\mathbf{13.10}$ & $\mathbf{97.25}$ &
                    $ 17.74$  & $95.96$ & 
                    $ 18.19$  & $96.10$\\
                    
 \text{LSUN-Resize} & $\mathbf{16.65}$ & $\mathbf{96.84}$ & 
                    $ 20.69$  & $95.87$  & 
                    $ 21.66$  & $95.85$\\
                    
 \text{Textures} & $ \mathbf{44.54}$ & $\mathbf{89.38}$ &
                   $ 46.29$ & $86.67$ & 
                   $ 46.19$  & $87.44$\\
                   
 \text{iSUN} &  $ \mathbf{19.20}$ & $\mathbf{96.08}$ &
                $ 22.52$ & $95.08$ & 
                $ 23.25$  & $95.06$\\
                
\text{Places 365} & $\mathbf{39.02}$ & $\mathbf{90.63}$ &
                    $41.56$ & $88.41$ & 
                    $42.57$ & $88.38$\\

\midrule

 \textbf{Mean} & $\mathbf{28.21}$ & $\mathbf{93.89}$ & $30.77$ & $92.39$ & $31.66$  & $92.60$\\

\bottomrule
\end{tabular}%
\end{table*}

\subsection{Ablations}
\label{sec:Ablations}

We investigate the impact of different encoder backbone architectures on OOD detection performance with \ourmethod. The baseline uses a ResNet-18 as the encoder architecture. For the ablation, the following architectures are used as a comparison: ResNet-34, ResNet-50, and Densenet-100. It can be observed, that the larger architectures lead to a slight increase in OOD performance (Table~\ref{tab:architecture_ablation}). We also see that a change in architecture from ResNet to Densenet leads to a different OOD behavior: The result on the Places365 data set is greatly improved, while the performance on SVHN is noticeably worse than on the ResNet architectures. The FPR95 of Densenet on SVHN also shows a high variance, which is due to one of the five independent training runs performing very badly at detecting SVHN samples as OOD: The worst run scores an FPR95 5.59, while the best run achieves an FPR95 of 0.24.

\begin{table*}[t]
\centering
\caption{Comparison of OOD detection performance on CIFAR-10 of \ourmethod on different encoders. $\downarrow$ indicates ``lower is better'' and $\uparrow$ indicates ``higher is better''. Standard deviations are estimated across five independent training runs.}
\label{tab:architecture_ablation}
\vskip 0.15in
\resizebox{\textwidth}{!}{%
\begin{tabular}{l|S[separate-uncertainty, table-alignment-mode=none]S[separate-uncertainty, table-alignment-mode=none]|S[separate-uncertainty, table-alignment-mode=none]S[separate-uncertainty, table-alignment-mode=none]|S[separate-uncertainty, table-alignment-mode=none]S[separate-uncertainty, table-alignment-mode=none]|S[separate-uncertainty, table-alignment-mode=none]S[separate-uncertainty, table-alignment-mode=none]}
\toprule
 & \multicolumn{2}{c}{ResNet-18} & \multicolumn{2}{c}{ResNet-34} & \multicolumn{2}{c}{ResNet-50} & \multicolumn{2}{c}{Densenet-100} \\
 \text{OOD Dataset} & \text{FPR95 } $\downarrow$ & \text{AUROC } $\uparrow$ & \text{FPR95 } $\downarrow$ & \text{AUROC } $\uparrow$ & \text{FPR95 } $\downarrow$ & \text{AUROC } $\uparrow$ & \text{FPR95 } $\downarrow$ & \text{AUROC } $\uparrow$\\
\midrule
 \text{SVHN} & $0.23 ^ {\pm 0.08}$  & $99.57 ^ {\pm 0.06}$ & 
               $ 0.33 ^ {\pm 0.25}$ & $99.63 ^ {\pm 0.07}$ & 
               $\mathbf{0.19 ^ {\pm 0.09}}$ & $\mathbf{99.64 ^ {\pm 0.11}}$ & 
               $ 2.11 ^ {\pm 2.76}$ & $99.31 ^ {\pm 0.35}$\\
               
 \text{LSUN-Crop} & $0.82 ^ {\pm 0.20}$ & $99.40 ^ {\pm 0.05}$ &
                    $ 0.65 ^ {\pm 0.14}$  & $99.54 ^ {\pm 0.07}$ & 
                    $ 0.69 ^ {\pm 0.15}$  & $99.47 ^ {\pm 0.09}$ 
                    & $\mathbf{0.40 ^ {\pm 0.23}}$ & $\mathbf{99.52 ^ {\pm 0.09}}$\\
                    
 \text{LSUN-Resize} & $\mathbf{0.00 ^ {\pm 0.00}}$ & $99.98 ^ {\pm 0.02}$ & 
                    $ \mathbf{0.00 ^ {\pm 0.00}}$  & $99.89^ {\pm 0.04}$  & 
                    $ \mathbf{0.00 ^ {\pm 0.00}}$  & $99.93 ^ {\pm 0.10}$ & 
                    $ \mathbf{0.00 ^ {\pm 0.00}}$ & $\mathbf{100.0 ^ {\pm 0.00}}$\\
                    
 \text{Textures} & $0.16 ^ {\pm 0.02}$ & $99.85 ^ {\pm 0.01}$ &
                   $ 0.15 ^ {\pm 0.07}$ & $99.89 ^ {\pm 0.04}$ & 
                   $ 0.16 ^ {\pm 0.07}$  & $99.83 ^ {\pm 0.01}$ & 
                   $\mathbf{0.08 ^ {\pm 0.03}}$ & $\mathbf{99.88 ^ {\pm 0.01}}$ \\
                   
 \text{iSUN} & $\mathbf{0.00 ^ {\pm 0.00}}$ & $99.97 ^ {\pm 0.02}$ &
                $ \mathbf{0.00 ^ {\pm 0.00}}$ & $99.98 ^ {\pm 0.02}$ & 
                $ \mathbf{0.00 ^ {\pm 0.00}}$  & $99.98 ^ {\pm 0.02}$ & 
                $ \mathbf{0.00 ^ {\pm 0.00}}$ & $\mathbf{99.99 ^ {\pm 0.01}}$\\
                
\text{Places 365} & $4.28 ^ {\pm 0.26}$ & $98.51 ^ {\pm 0.11}$ &
                    $4.13^ {\pm 0.54}$ & $98.46 ^ {\pm 0.22}$ & 
                    $4.75 ^ {\pm 0.45}$ & $98.71 ^ {\pm 0.05}$ & 
                    $\mathbf{2.56 ^ {\pm 0.20}}$ & $\mathbf{99.26 ^ {\pm 0.03}}$ \\

\midrule

 \textbf{Mean} & $0.92$ & $99.55$& $0.88$ & $99.57$ & $0.97$  & $99.59$ & $\mathbf{0.86}$ & $\mathbf{99.66}$ \\

\bottomrule
\end{tabular}}
\end{table*}

\subsection{Effect on Learned Representation}
In order to analyze the impact of \ourmethod on learned representations, we utilize the output of our model's embedding layer (see \ref{sec:exp_setup}) as the input for a manifold learning-based visualization. Uniform Manifold Approximation and Projection (UMAP) \citet{mcinnes2018umap} is a non-linear dimensionality reduction technique known for its ability to preserve both global and local structure in high-dimensional data. 

First, we train two models -- with and without \ourmethod -- and extract the embeddings of both ID and OOD data sets from them.
This results in a 512-dimensional vector representation for each data point, which we further reduce to two dimensions with UMAP.
The training data for UMAP always corresponds to the training data of the respective method.
That is, the model trained without \ourmethod is solely trained on CIFAR-10 data, and the model trained with \ourmethod is presented with CIFAR-10 and \aux data during training, respectively.
We then compare the learned representations concerning ID and OOD data.

Figure \ref{fig:umap_embeddings} shows the UMAP embeddings of ID (CIFAR-10) and OOD (\aux and SVHN) data based on our model trained without (a) and with Hopfield Boosting (b).
Without \ourmethod, OOD data points typically overlap with ID data points, with just a few exceptions, making it difficult to differentiate between them. Conversely, \ourmethod allows to distinctly separate ID and OOD data in the embedding.

\begin{figure}
\begin{subfigure}{.52\textwidth}
  \caption{without \ourmethod}
  \centering
  \includegraphics[width=\linewidth]{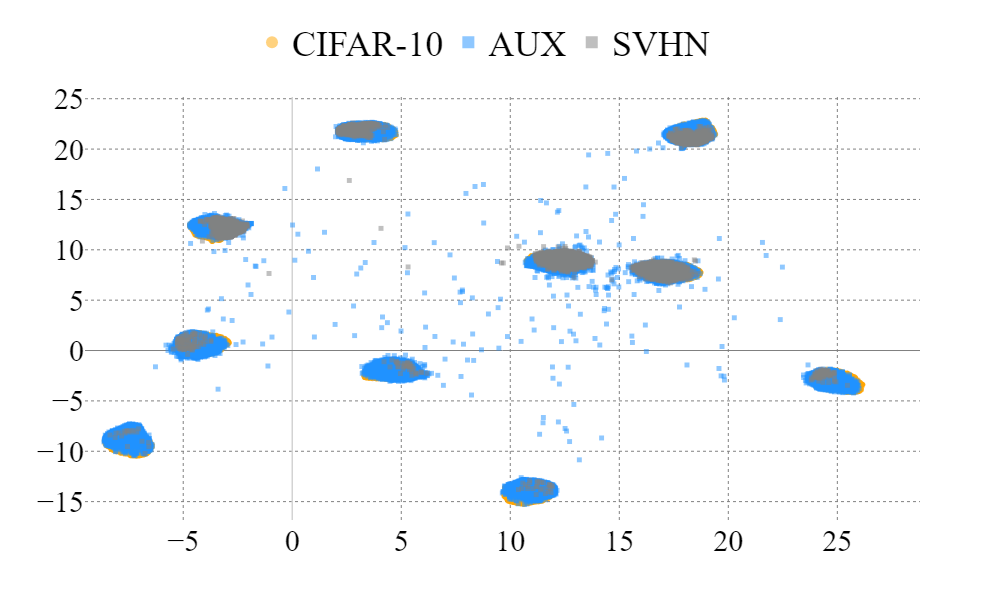}
  \label{fig:umap_resnet}
\end{subfigure}%
\begin{subfigure}{.52\textwidth}
  \caption{with \ourmethod}
  \centering
  \includegraphics[width=\linewidth]{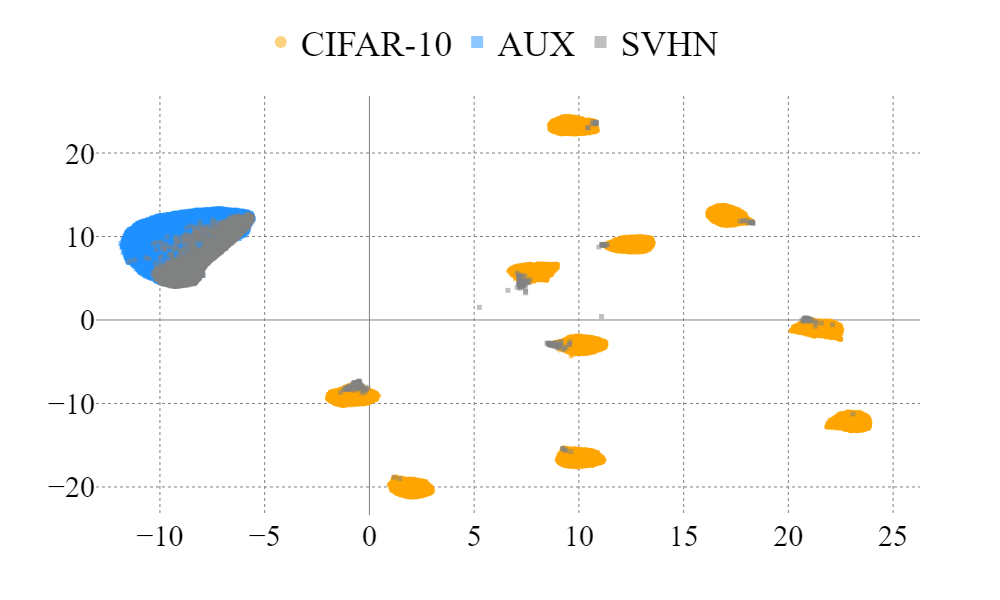}
  \label{fig:umap_ood_resnet}
\end{subfigure}

\caption{UMAP embeddings of ID (CIFAR-10) and OOD (\aux and SVHN) data based on our model trained without (a) and with Hopfield Boosting (b). Clearly, without Hopfield Boosting, the embedded OOD data points tend to overlap with the ID data points, %
making it impossible to distinguish between ID and OOD. On the other hand, Hopfield Boosting shows a clear separation of ID and OOD data in the embedding.  }
\label{fig:umap_embeddings}
\end{figure}

\subsection{OOD Examples from the Places 365 Data Set with High Semantic Similarity to CIFAR-10}\label{sec:appendix-places-similarity}

We observe that \ourmethod and all competing methods struggle with correctly classifying the samples from the Places 365 data set as OOD the most. Table \ref{tab:cifar-10} shows that for \ourmethod, the FPR95 for the Places 365 data set with CIFAR-10 as the ID data set is at 4.28. The second worst FPR95 for \ourmethod was measured on the LSUN-Crop data set at 0.82.

We inspect the 100 images from Places 365 that perform worst (i.e., that achieve the highest score $s(\Bxi)$) on a model trained with \ourmethod on the CIFAR-10 data set as the in-distribution data set. Figure \ref{fig:places365_wrong} shows that within those 100 images, the Places 365 data set contains a non-negligible amount of data instances that show objects from semantic classes contained in CIFAR-10 (e.g., horses, automobiles, dogs, trucks, and airplanes). We argue that data instances that clearly show objects of semantic classes contained in CIFAR-10 should be considered as in-distribution, which \ourmethod correctly recognizes. 
Therefore, a certain amount of error can be anticipated on the Places 365 data set for all OOD detection methods. We leave a closer evaluation of the amount of the anticipated error up to future work.

For comparison, Figure \ref{fig:places365_correct} shows the 100 images from Places 365 with the lowest score $s(\Bxi)$, as evaluated by a model trained with \ourmethod on CIFAR-10. There are no objects visible that have clear semantic overlap with the CIFAR-10 classes.

\begin{figure}[H]
  \centering
  \includegraphics[width=1.\linewidth]{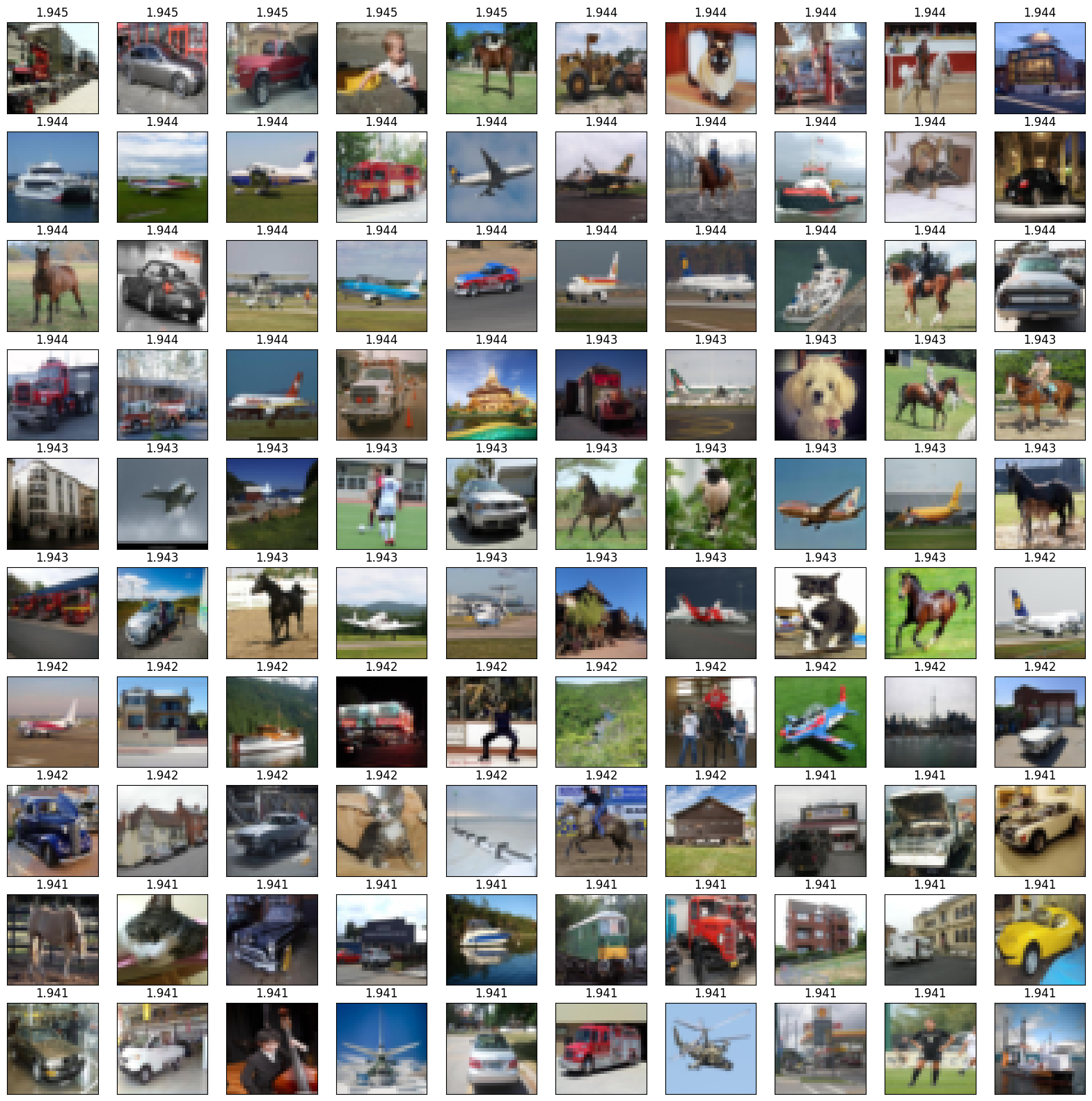}
  \caption{The set of top-100 images from the Places 365 data set which \ourmethod recognized as in-distribution. The image captions show $s(\Bxi)$ of the respective image below the caption.}
  \label{fig:places365_wrong}
\end{figure}
\begin{figure}[H]
  \centering
  \includegraphics[width=1.\linewidth]{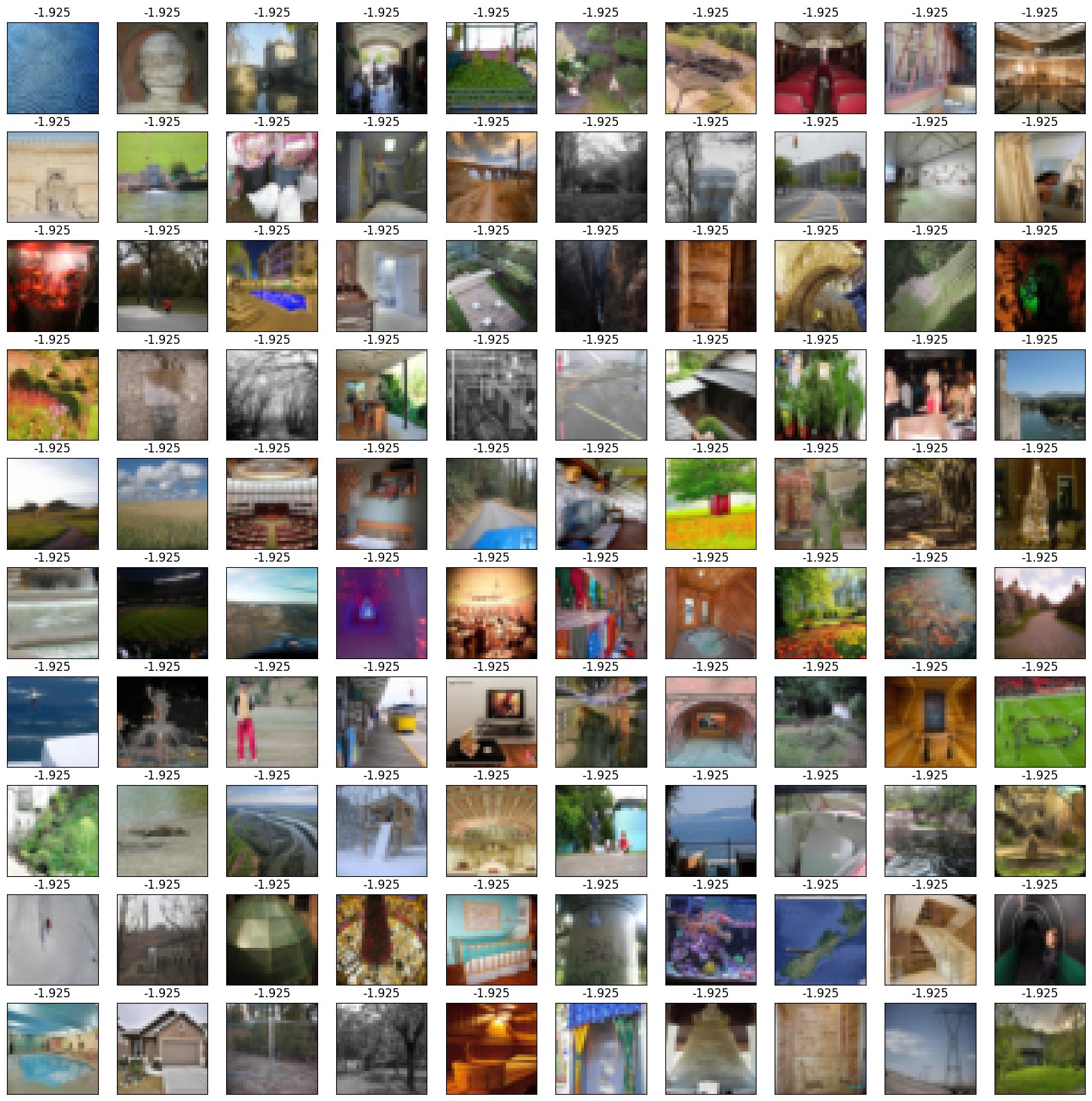}
  \caption{The set of top-100 images from the Places 365 data set which \ourmethod recognized as out-of-distribution. The image captions show $s(\Bxi)$ of the respective image below the caption.}
  \label{fig:places365_correct}
\end{figure}

\subsection{Results on Noticeably Different Data Sets}\label{sec:appendix-more-datasets}
The choice of additional data sets should not be driven by a desire to showcase good performance; rather, we suggest opting for data that highlights weaknesses, as it holds the potential to drive investigations and uncover novel insights.
Simple toy data is preferable due to its typically clearer and more intuitive characteristics compared to complex natural image data.
In alignment with these considerations, the following data sets captivated our interest:
iCartoonFace \citep{zheng2020cartoon}, Four Shapes \citep{four_shapes}, and Retail Product Checkout (RPC) \citep{wei2022rpc}.
In Figure \ref{fig:more_datasets}, we show random samples from these data sets to demonstrate the noticeable differences compared to CIFAR-10.

\begin{figure}[H]
  \centering
  \includegraphics[width=1.\linewidth]{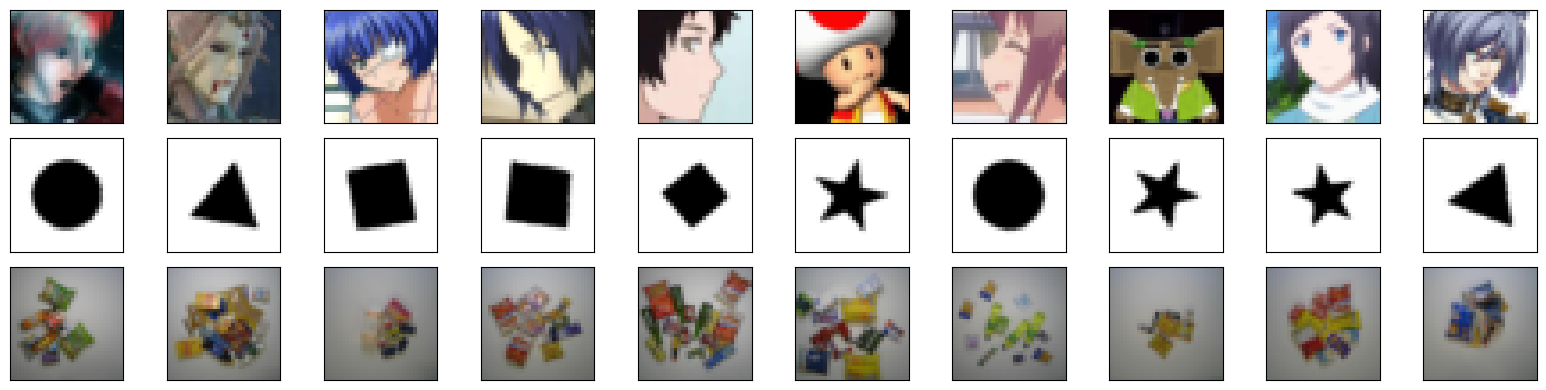}
  \caption{Random samples from three data sets, each noticeably different from CIFAR-10. First row: iCartoonFace; Second row: Four shapes; Third row: RPC.}
  \label{fig:more_datasets}
\end{figure}

\begin{table*}[t]
\centering
\caption{Comparison between EBO-OE \citep{Liu:20} and our version. $\downarrow$ indicates ``lower is better'' and $\uparrow$ indicates ``higher is better''.}
\label{tab:more_data_results}
\vskip 0.15in
\begin{tabular}{l|S[separate-uncertainty, table-alignment-mode=none]S[separate-uncertainty, table-alignment-mode=none]|S[separate-uncertainty, table-alignment-mode=none]S[separate-uncertainty, table-alignment-mode=none]|S[separate-uncertainty, table-alignment-mode=none]S[separate-uncertainty, table-alignment-mode=none]}
\toprule
 & \multicolumn{2}{c}{\ourmethod} & \multicolumn{2}{c}{EBO-OE} \\
 \text{OOD Dataset} & \text{FPR95 } $\downarrow$ & \text{AUROC } $\uparrow$ & \text{FPR95 } $\downarrow$ & \text{AUROC } $\uparrow$ \\
\midrule
 \text{iCartoonFace} & $\mathbf{0.60}$ & $\mathbf{99.57}$ & 
               $ 4.01$ & $98.94$\\
               
 \text{Four Shapes} & $\mathbf{40.81}$ & $\mathbf{90.53}$ &
                    $ 62.55$  & $75.34$\\
                    
 \text{RPC} & $\mathbf{4.07}$ & $\mathbf{98.65}$ & 
                    $ 18.51$  & $96.10$\\

\bottomrule
\end{tabular}%
\end{table*}

In Table~\ref{tab:more_data_results}, we present some preliminary results using models trained with the respective method on CIFAR-10 as ID data set (as in Table~\ref{tab:cifar-10}).
Results for comparison are presented for EBO-OE only, as time constraints prevented experimenting with additional baseline methods.
Although one would expect near-perfect results due to the evident disparities with CIFAR-10, Four Shapes \citep{four_shapes} and RPC \citep{wei2022rpc} seem to defy that expectation.
Their results indicate a weakness in the capability to identify outliers robustly since many samples are classified as inliers.
Only iCartoonFace \citep{zheng2020cartoon} is correctly detected as OOD, at least to a large degree.
Interestingly, the weakness uncovered by this data is present in both methods, although more pronounced in EBO-OE.
Therefore, we suspect that this specific behavior may be a general weakness when training OOD detectors using OE, an aspect we plan to investigate further in our future work.

\subsection{Runtime Considerations for Inference}
\label{sec:appendix-runtime-considerations}

When using \ourmethod in inference, an additional inference step is needed to check whether a given sample is ID or OOD. Namely, to obtain the score (Equation \eqref{eq:score_fn}) of a query sample $\Bxi^\mathcal{D}$, \ourmethod computes the dot product similarity of the embedding obtained from $\Bxi \ = \ \phi(\Bxi^\mathcal{D})$ to all samples in the Hopfield memories $\BX$ and $\BO$. In our experiments, $\BX$ contains the full in-distribution data set (50,000 samples) and $\BO$ contains a subset of the \aux data set of equal size. We investigate the computational overhead of computing the dot-product similarity to 100,000 samples in relation to the computational load of the encoder. For this, we feed 100 batches of size 1024 to an encoder (1) without using the score and (2) with using the score, measure the runtimes per batch, and compute the mean and standard deviation. We conduct this experiment with four different encoders on an NVIDIA Titan V GPU. The results are shown in Figure~\ref{fig:appendix-hopfield-runtime} and Table~\ref{tab:inference-runtimes}. One can see that, especially for larger models, the computational overhead of determining the score is very moderate in comparison.

\begin{figure}[H]
  \centering
  \includegraphics[width=0.5\linewidth]{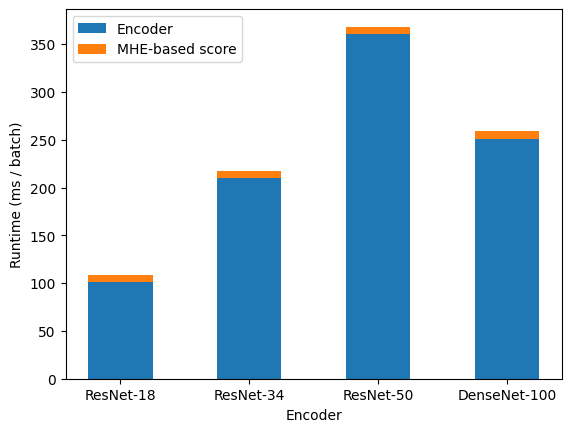}
  \caption{Mean inference runtimes for \ourmethod on four different encoders on an NVIDIA Titan V GPU. We plot the contributions to the total runtime of the encoder and the MHE-based score (Equation \eqref{eq:score_fn}) separately. The evaluation shows that the score computation adds a negligible amount of computational overhead to the total runtime.}
  \label{fig:appendix-hopfield-runtime}
\end{figure}

\begin{table}[H]
\centering
\caption{Inference runtimes for \ourmethod with four different encoders on an NVIDIA Titan V GPU. We compare the runtime of the encoder only and the runtime of the encoder with the MHE-based score computation (Equation \eqref{eq:score_fn}) combined.}
\label{tab:inference-runtimes}
\vskip 0.15in
\begin{tabular}{l|S[separate-uncertainty, table-alignment-mode=none]|S[separate-uncertainty, table-alignment-mode=none]|S[separate-uncertainty, table-alignment-mode=none]}
\toprule
\text{Encoder} & \text{Time encoder (ms / batch)} & \text{Time encoder + score (ms / batch)} & \text{Rel. overhead (\%)}\\

\midrule

\text{ResNet-18} & $100.93^{\pm 0.24}$ & $108.50 ^ {\pm 0.19}$ & $7.50$ \\
\text{ResNet-34} & $209.80^{\pm 0.40}$ & $217.33 ^ {\pm 0.51}$ & $3.59$\\
\text{ResNet-50} & $360.93^{\pm 1.51}$ & $368.17 ^ {\pm 0.62}$ & $2.01$\\
\text{Densenet-100} & $251.24^{\pm 1.36}$ & $258.82 ^ {\pm 0.84}$ & $3.02$\\

\bottomrule
\end{tabular}%
\end{table}

\subsection{HE and SHE Extensions with \aux Data}
\label{sec:appendix-he-she-extension}

\begin{table*}[t]
\vskip 0.15in
\centering
\caption{OOD detection performance on CIFAR-10. We compare results from \ourmethod with two extensions of HE \citep{Zhang:22} on ResNet-18: HE+\aux includes \aux data in the OOD score. HE+OE applies OE \citep{Hendrycks:18} during the training process. $\downarrow$ indicates ``lower is better'' and $\uparrow$ ``higher is better''. All values in $\%$.}
\label{tab:he-she-extension}
\resizebox{0.6\textwidth}{!}{%
\begin{tabular}{ll|l|l|l}
OOD Dataset&Metric&HB (ours)&HE+\aux&HE+OE\\
\midrule
\rowcolor{gray!25}
&FPR95 $\downarrow$&$\mathbf{0.23}$&$25.02$&$2.38$\\
\rowcolor{gray!25}
\multirow{-2}{*}{SVHN}&AUROC $\uparrow$&$\mathbf{99.57}$&$94.90$&$99.30$\\
\rowcolor{white}
&FPR95 $\downarrow$&$\mathbf{0.82}$&$7.35$&$2.39$\\
\rowcolor{white}
\multirow{-2}{*}{LSUN-Crop}&AUROC $\uparrow$&$\mathbf{99.40}$&$98.67$&$99.22$\\
\rowcolor{gray!25}
&FPR95 $\downarrow$&$\mathbf{0.00}$&$13.69$&$\mathbf{0.00}$\\
\rowcolor{gray!25}
\multirow{-2}{*}{LSUN-Resize}&AUROC $\uparrow$&$\mathbf{99.98}$&$97.68$&$99.95$\\
\rowcolor{white}
&FPR95 $\downarrow$&$\mathbf{0.16}$&$17.42$&$0.70$\\
\rowcolor{white}
\multirow{-2}{*}{Textures}&AUROC $\uparrow$&$\mathbf{99.84}$&$97.08$&$99.72$\\
\rowcolor{gray!25}
&FPR95 $\downarrow$&$\mathbf{0.00}$&$14.76$&$\mathbf{0.00}$\\
\rowcolor{gray!25}
\multirow{-2}{*}{iSUN}&AUROC $\uparrow$&$\mathbf{99.97}$&$97.68$&$99.95$\\
\rowcolor{white}
&FPR95 $\downarrow$&$\mathbf{4.28}$&$41.24$&$10.84$\\
\rowcolor{white}
\multirow{-2}{*}{Places 365}&AUROC $\uparrow$&$\mathbf{98.51}$&$91.16$&$96.83$\\
\midrule\rowcolor{white}
&FPR95 $\downarrow$&$\mathbf{0.92}$&$19.91$&$2.72$\\
\rowcolor{white}
\multirow{-2}{*}{Mean}&AUROC $\uparrow$&$\mathbf{99.55}$&$96.15$&$99.16$\\
\end{tabular}}
\end{table*}

To show that the unique contributions of \ourmethod (like the energy-based loss $\Loss{OOD}$ and the boosting process) are responsible for the superior performance of \ourmethod, we devise two extensions of HE that include \aux data and compare them to \ourmethod.

The first extension (HE+AUX) uses a model trained only on the ID data and adapts HE to include an MHE term that measures the energy of $\Bxi$ on the AUX data $\BO$:

\begin{align}
    s_{\mathrm{mod}}(\Bxi)\ = \ s_{\mathrm{HE}}(\Bxi)\ - \ \lse(\beta, \BO^T\Bxi)
\end{align}

The second extension (HE+OE) applies OE \citep{Hendrycks:18} while training the model. It then uses $s_{\mathrm{HE}}$ to estimate whether a sample is ID or OOD. For both extensions, we select $\beta$ by minimizing the mean FPR95 on the OOD test data sets to obtain an upper bound of the possible performance of these extensions. The $\beta$ we selected for HE+AUX is $0.001$, and for HE+OE is $0.01$.

Our results (Table \ref{tab:he-she-extension}) show that \ourmethod is superior to both extensions. HE+\aux results in a mean FPR95 of 19.91, HE+OE achieves a mean FPR95 of 2.72. \ourmethod improves on both extensions, achieving a mean FPR95 of 0.92.

\subsection{Ablation on the Hyperparameter $\lambda$}
\label{sec:appendix-ablation-lambda}

\begin{figure*}[t]
\centering
    \includegraphics[width=0.6\textwidth]{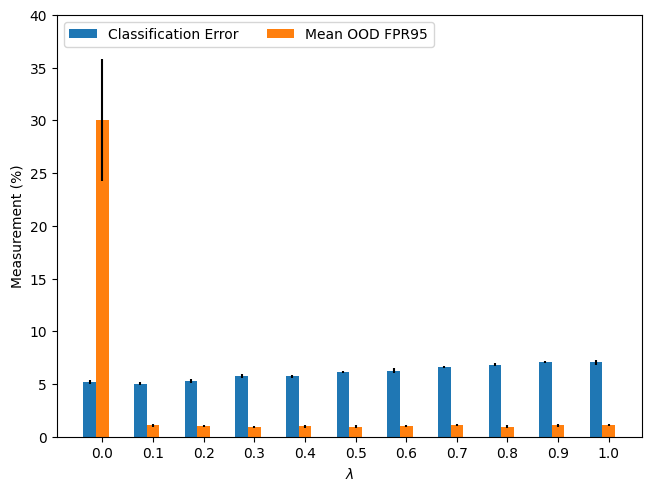}
    \caption{Tradeoff between classification error and Mean OOD FPR95 for different values of $\lambda$. Decreasing the value of $\lambda$ to $0.1$ improves the classification error while maintaining low OOD FPR95. $\lambda = 0$ (i.e., training only the ID classifier) achieves low classification error but dramatically increases the OOD FPR95.}
    \label{fig:lambda-ablation}
\end{figure*}

There is usually an inherent tradeoff between ID accuracy and OOD detection performance when employing OE methods. In practice one can always improve the tradeoff by using models with more capacity — in the extreme case practitioners can even train a separate ID network. Hence, the model selection process we employed  only considered the OOD detection performance and did not take the ID accuracy into account. To investigate if and how this tradeoff can be controlled by changing the hyperparameters of \ourmethod, we conduct the following experiment:

We (1) ablate the hyperparameter (the weight of the out-of-distribution loss) and run Hopfield Boosting on the CIFAR-10 benchmark; (2) select $\lambda$ from the range $[0, 1]$ with a step size of $0.1$; and (3) record the OOD detection performance (the mean FPR95 where the mean is taken over the OOD test data sets) and the ID classification error for the individual settings of $\lambda$.

The results indicate that decreasing the hyperparameter $\lambda$ improves the ID classification accuracy of Hopfield Boosting (Figure \ref{fig:lambda-ablation}). At the same time, the mean OOD AUROC is only moderately influenced: When setting, the hyperparameter setting reported in the original manuscript, the mean ID classification error is $5.98\%$, and the mean FPR95 is $0.92\%$. When decreasing $\lambda$ to 0.1, the mean ID classification error improves to $5.02\%$. Similarly, the FPR95 only slightly increases to $1.08\%$ (which is still substantially better than the second-best outlier exposed method, POEM, which achieves a mean FPR95 of $2.28\%$). Hence, practitioners can control the tradeoff between ID classification accuracy and OOD detection performance.

\subsection{Ablation on the Number of Patterns Stored in the Memories during Inference}
\label{sec:appendix-ablation-no-patterns}

\begin{figure}
\begin{subfigure}{.52\textwidth}
  \centering
  \includegraphics[width=\linewidth]{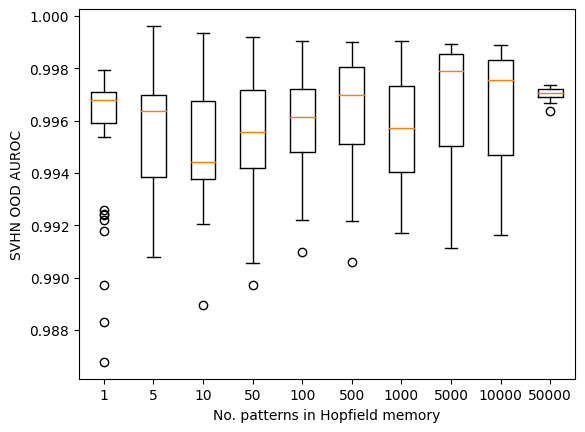}
  \caption{Balanced Hopfield networks}
  \label{fig:ablation_no_patterns_hopfield}
\end{subfigure}%
\begin{subfigure}{.52\textwidth}
  \centering
  \includegraphics[width=\linewidth]{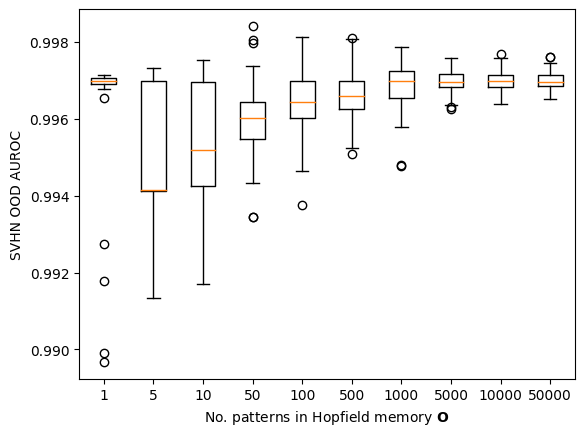}
  \caption{Imbalanced Hopfield networks}
  \label{fig:ablation_imbalanced_hopfield}
\end{subfigure}
\caption{Ablating the number of patterns stored in the Hopfield memories during inference. AUROC on SVHN based on the number of patterns in the Hopfield memory. In (a), $\BX$ and $\BO$ contain the same number of patterns; in (b) $\BX$ contains $50,000$ patterns, and we vary the number of patterns in $\BO$. The variability of the AUROC is reduced when $\BX$ and $\BO$ contain 50,000 patterns, respectively.}
\end{figure}

In our implementation of \ourmethod, we fill the memories $\BX$ and $\BO$ with $N\ =\ 50,000$ patterns to compute the score $s(\Bxi)$, respectively. To investigate the robustness of \ourmethod when changing the number of patterns $N$, we conduct the following experiments:

\begin{enumerate}
\item We train Hopfield Boosting on CIFAR-10 (ID data) and ImageNet (AUX data). During the weight update process, we store 50,000 patterns in the memories $\BX$ and $\BO$, and then ablate the number of patterns stored in the memories for computing the score $s(\Bxi)$ at inference time. We evaluate the discriminative power of $s(\Bxi)$ on SVHN with 1, 5, 10, 50, 100, 500, 1,000, 5,000, 10,000, and 50,000 patterns stored in the memories $\BX$ and $\BO$. To investigate the influence of the stochastic process of sampling $N$ patterns from the ID and AUX data sets, we conduct 50 runs for all of the and create boxplots of the runs. The results (Figure \ref{fig:ablation_no_patterns_hopfield}) show that sampling $50,000$ patterns has the lowest variability of the individual trials. We argue that the reason for this is that by this time the entire ID data set is stored in the Hopfield memory — which effectively eliminates stochasticity from randomly selecting $N$ patterns from the ID data.
\item To verify that we can use $s(\Bxi)$ when the number of patterns in $\BX$ and $\BO$ is imbalanced, we fill $\BX$ with all 50,000 data instances of CIFAR-10 and fill $\BO$ with 1, 5, 10, 50, 100, 500, 1000, 5000, 10,000, and 50,000 data instances of the \aux data set. Then, we evaluate the discriminative power of $s(\Bxi)$ for the different instances. Our results (Figure \ref{fig:ablation_imbalanced_hopfield}) show that Hopfield Boosting is robust to an imbalance in the number of samples in $\BX$ and $\BO$. The setting with 50,000 samples in both memories (which is the setting we use in the experiments in our original manuscript) incurs the least variability.
\end{enumerate}

\subsection{Compute Ressources}
\label{sec:appendix-compute-ressources}

Our experiments were conducted on an internal cluster equipped with a variety of different GPU types (ranging from the NVIDIA Titan V to the NVIDIA A100-SXM-80GB). For our experiments on ImageNet-1K, we additionally used resources of an external cluster that is equipped with NVIDIA A100-SXM-64GB GPUs.

For our experiments with \ourmethod on CIFAR-10 and CIFAR-100, one run (100 epochs) of \ourmethod trained for about 8.0 hours on a single NVIDIA RTX 2080 Ti GPU and required 4.3 GB of VRAM. Fnding the hyperparameters required 160h of compute for CIFAR-10 and CIFAR-100, respectively. These were divided across four RTX 2080 Ti. Estimating the standard deviation required 40 hours of compute on a single RTX 2080 Ti for CIFAR-10 and CIFAR-100 respectively.

For ImageNet-1K, one run (4 epochs) of \ourmethod trained for about 4.4 hours on a single NVIDIA A-100-SXM64GB GPU and required 26.9 GB of VRAM. Finding the optimal hyperparameters required a total of 86h of compute, divided across 20 NVIDIA A-100-SXM64GB GPUs. Estimating the standard deviation required 22 hours of compute, divided across 5 NVIDIA A-100-SXM64GB GPUs.

The amount of resources reported above cover the compute for obtaining the results of \ourmethod reported in the paper. The total amount of compute resources for the project is substantially higher. Notable additional compute expenses are preliminary training runs during the development of \ourmethod, and the training runs for tuning the hyperparameters and evaluating the results of the methods we compare \ourmethod to.

\newpage

\subsection{Data Sets and Licenses}
\label{sec:appendix-licenses}

We provide a list of the data sets we used in our experiments and, where applicable, specify their licenses:

\begin{itemize}
    \item CIFAR-10 \citep{Krizhevsky:09}: License unknown
    \item CIFAR-100 \citep{Krizhevsky:09}: License unknown
    \item ImageNet-RC \citep{Chrabaszcz:17}: Custom License\footnote{\label{fn:imagenet-license}\url{https://image-net.org/download.php}}
    \item SVHN \citep{Netzer:11}: Creative Commons (CC)
    \item Textures \citep{Cimpoi:14}: Custom License\footnote{\url{https://www.robots.ox.ac.uk/~vgg/data/dtd/index.html}}
    \item iSUN \citep{Xu:15}: License unknown
    \item Places 365 \citep{Lopez:20}: License unknown
    \item LSUN \citep{Yu:15}: License unknown
    \item ImageNet-1K \citep{Russakovsky:15Imagenet}: Custom License\footnoteref{fn:imagenet-license}
    \item ImagetNet-21K \citep{Ridnik:21}: Custom License\footnoteref{fn:imagenet-license}
    \item SUN \citep{Isola:11SUN}: License unknown
    \item iNaturalist \citep{Van:18}: Custom License\footnote{\url{https://github.com/visipedia/inat_comp/tree/master/2017}}
\end{itemize}

\clearpage

\subsection{Non-OE Baselines}
\label{sec:appendix-non-oe-baselines}

To confirm the prevailing notion that OE methods can improve the OOD detection capability in general, we compare \ourmethod to 3 training methods \citep{Sehwag:21SSD, Tao:23NPOS, Lu:24PALM} and 5 post-hoc methods \citep{Hendrycks:17, Hendrycks:18, Liu:20, Liu:23, Djurisic:22ASH}. For all methods, we train a ResNet-18 on CIFAR-10. For \ourmethod, we use the same training setup as described in section \ref{sec:exp_setup}.  For the post-hoc methods, we do not use the auxiliary outlier data. For the training methods, we use the training procedures described in the respective publications for 100 epochs. Notably, all training methods employ stronger augmentations than the OE or the post-hoc methods. The OE and post-hoc methods use the following augmentations (denoted as ``Weak''):

\begin{enumerate}
    \item RandomCrop (32x32), padding 4
    \item RandomHorizontalFlip
\end{enumerate}

The training methods use the following augmentations (denoted as ``Strong''):

\begin{enumerate}
    \item RandomResizedCrop (32x32), scale 0.2-1
    \item RandomHorizontalFlip
    \item ColorJitter applied with probability 0.8
    \item RandomGrayscale applied with probability 0.2
\end{enumerate}

Table \ref{tab:cifar-10-non-oe} shows the results of the comparison of \ourmethod to the post-hoc and training methods. \ourmethod is better at OOD detection than all non-OE baselines on CIFAR-10 in terms of both mean AUROC and mean FPR95 by a large margin. Further, \ourmethod achieves the best OOD detection on all OOD data sets in terms of FPR95 and AUROC, except for SVHN and LSUN-Crop, where PALM \citep{Lu:24PALM} shows better AUROC results. An interesting avenue for future work is to combine one of the non-OE based training methods with the OE method \ourmethod.

\begin{table*}[t]
\vskip 0.15in
\centering
\caption{OOD detection performance on CIFAR-10. We compare results from \ourmethod, PALM \citep{Lu:24PALM}, NPOS \citep{Tao:23NPOS}, SSD+ \citep{Sehwag:21SSD}, ASH \citep{Djurisic:22ASH}, GEN \citep{Liu:23}, EBO \citep{Liu:20}, MaxLogit \citep{Hendrycks:19}, and MSP \citep{Hendrycks:17} on ResNet-18. $\downarrow$ indicates ``lower is better'' and $\uparrow$ ``higher is better''. All values in $\%$. Standard deviations are estimated across five training runs.}
\label{tab:cifar-10-non-oe}
\resizebox{\textwidth}{!}{%
\begin{tabular}{ll|l|l|l|l|l|l|l|l|l}
&&HB (ours)&PALM&NPOS&SSD+&ASH&GEN&EBO&MaxLogit&MSP\\
\midrule
\rowcolor{gray!25}
&FPR95 $\downarrow$&$\mathbf{0.23^{\pm 0.08}}$&$1.24^{\pm 0.49}$&$9.04^{\pm 1.13}$&$3.05^{\pm 0.22}$&$25.17^{\pm 9.55}$&$33.26^{\pm 5.99}$&$32.10^{\pm 6.41}$&$33.27^{\pm 6.18}$&$49.41^{\pm 3.77}$\\
\rowcolor{gray!25}
\multirow{-2}{*}{SVHN}&AUROC $\uparrow$&$99.57^{\pm 0.06}$&$\mathbf{99.70^{\pm 0.12}}$&$98.37^{\pm 0.23}$&$99.41^{\pm 0.06}$&$94.86^{\pm 2.09}$&$93.53^{\pm 1.42}$&$93.43^{\pm 1.60}$&$93.29^{\pm 1.57}$&$92.48^{\pm 0.93}$\\
\rowcolor{white}
&FPR95 $\downarrow$&$\mathbf{0.82^{\pm 0.17}}$&$1.21^{\pm 0.27}$&$5.52^{\pm 0.50}$&$2.83^{\pm 1.10}$&$13.13^{\pm 1.81}$&$19.40^{\pm 2.22}$&$17.25^{\pm 2.30}$&$18.50^{\pm 2.24}$&$38.32^{\pm 2.61}$\\
\rowcolor{white}
\multirow{-2}{*}{LSUN-Crop}&AUROC $\uparrow$&$99.40^{\pm 0.04}$&$\mathbf{99.65^{\pm 0.05}}$&$98.97^{\pm 0.04}$&$99.37^{\pm 0.16}$&$97.33^{\pm 0.36}$&$96.48^{\pm 0.46}$&$96.73^{\pm 0.46}$&$96.52^{\pm 0.47}$&$94.37^{\pm 0.53}$\\
\rowcolor{gray!25}
&FPR95 $\downarrow$&$\mathbf{0.00^{\pm 0.00}}$&$27.01^{\pm 5.82}$&$26.85^{\pm 3.14}$&$34.30^{\pm 2.17}$&$38.18^{\pm 5.78}$&$31.50^{\pm 3.92}$&$30.69^{\pm 4.03}$&$31.64^{\pm 4.01}$&$45.82^{\pm 3.48}$\\
\rowcolor{gray!25}
\multirow{-2}{*}{LSUN-Resize}&AUROC $\uparrow$&$\mathbf{99.98^{\pm 0.02}}$&$95.41^{\pm 0.74}$&$95.68^{\pm 0.36}$&$94.78^{\pm 0.25}$&$90.39^{\pm 2.00}$&$94.04^{\pm 0.84}$&$94.02^{\pm 0.86}$&$93.90^{\pm 0.86}$&$92.84^{\pm 0.80}$\\
\rowcolor{white}
&FPR95 $\downarrow$&$\mathbf{0.16^{\pm 0.02}}$&$17.32^{\pm 2.50}$&$27.72^{\pm 2.55}$&$21.20^{\pm 2.20}$&$46.08^{\pm 6.22}$&$44.62^{\pm 4.14}$&$44.67^{\pm 4.46}$&$44.97^{\pm 4.44}$&$55.04^{\pm 2.86}$\\
\rowcolor{white}
\multirow{-2}{*}{Textures}&AUROC $\uparrow$&$\mathbf{99.84^{\pm 0.01}}$&$96.82^{\pm 0.71}$&$95.36^{\pm 0.35}$&$96.46^{\pm 0.35}$&$88.32^{\pm 2.08}$&$90.12^{\pm 1.32}$&$89.61^{\pm 1.50}$&$89.56^{\pm 1.48}$&$90.10^{\pm 0.92}$\\
\rowcolor{gray!25}
&FPR95 $\downarrow$&$\mathbf{0.00^{\pm 0.00}}$&$25.71^{\pm 4.83}$&$26.90^{\pm 3.52}$&$35.71^{\pm 2.27}$&$42.41^{\pm 6.28}$&$35.85^{\pm 4.05}$&$34.99^{\pm 4.33}$&$36.02^{\pm 4.18}$&$49.10^{\pm 3.06}$\\
\rowcolor{gray!25}
\multirow{-2}{*}{iSUN}&AUROC $\uparrow$&$\mathbf{99.97^{\pm 0.02}}$&$95.60^{\pm 0.65}$&$95.74^{\pm 0.38}$&$94.49^{\pm 0.25}$&$89.06^{\pm 2.26}$&$93.05^{\pm 0.84}$&$92.99^{\pm 0.90}$&$92.88^{\pm 0.90}$&$91.99^{\pm 0.74}$\\
\rowcolor{white}
&FPR95 $\downarrow$&$\mathbf{4.28^{\pm 0.23}}$&$22.97^{\pm 2.17}$&$32.62^{\pm 0.13}$&$24.99^{\pm 1.21}$&$48.03^{\pm 2.04}$&$45.82^{\pm 1.07}$&$44.87^{\pm 1.11}$&$45.63^{\pm 1.26}$&$57.58^{\pm 0.97}$\\
\rowcolor{white}
\multirow{-2}{*}{Places 365}&AUROC $\uparrow$&$\mathbf{98.51^{\pm 0.10}}$&$94.95^{\pm 0.53}$&$93.76^{\pm 0.12}$&$94.93^{\pm 0.22}$&$85.65^{\pm 0.77}$&$88.68^{\pm 0.28}$&$88.53^{\pm 0.30}$&$88.42^{\pm 0.29}$&$88.06^{\pm 0.25}$\\
\midrule\rowcolor{white}
&FPR95 $\downarrow$&$\mathbf{0.92}$&$15.91$&$21.44$&$20.35$&$35.50$&$35.07$&$34.09$&$35.00$&$49.21$\\
\rowcolor{white}
\multirow{-2}{*}{Mean}&AUROC $\uparrow$&$\mathbf{99.55}$&$97.02$&$96.31$&$96.57$&$90.94$&$92.65$&$92.55$&$92.43$&$91.64$\\
\midrule
\rowcolor{gray!25}
\multicolumn{2}{l}{Method type}&OE&Training&Training&Training&Post-hoc&Post-hoc&Post-hoc&Post-hoc&Post-hoc\\
\rowcolor{gray!25}
\multicolumn{2}{l}{Augmentations}&Weak&Strong&Strong&Strong&Weak&Weak&Weak&Weak&Weak\\
\rowcolor{gray!25}
\multicolumn{2}{l}{Auxiliary outlier data}&\cmark&\xmark&\xmark&\xmark&\xmark&\xmark&\xmark&\xmark&\xmark\\
\end{tabular}
}
\end{table*}

\clearpage
\section{Informativeness of Sampling with High Boundary Scores} \label{sec:appendix-high-boundary-scores}

This section adopts and expands the arguments of \citet{Ming:22} on sampling with high boundary scores.

We assume the extracted features of a trained deep neural network to approximately equal a Gaussian mixture model with equal class priors:

\begin{align}
    p(\Bxi) \ &= \ \frac{1}{2} \cN(\Bxi; \Bmu, \sigma^2\BI) + \frac{1}{2} \cN(\Bxi; -\Bmu, \sigma^2\BI)\\
    p_\text{ID}(\Bxi) = p(\Bxi | \text{ID}) \ &= \ \cN(\Bxi; \Bmu, \sigma^2\BI)\\
    p_\text{AUX}(\Bxi) = p(\Bxi | \text{AUX}) \ &= \ \cN(\Bxi; -\Bmu, \sigma^2\BI)
\end{align}

Using the MHE and sufficient data from those distributions, we can estimate the densities $p(\Bxi)$, $p(\Bxi | \text{ID})$ and $p(\Bxi | \text{AUX})$.

\begin{lemma}
(see Lemma E.1 in \citet{Ming:22}) Assume the M sampled data points $\Bo_i \sim p_\text{AUX}$ satisfy the following constraint on high boundary scores $\rE_b(\Bxi)$

\begin{align}
    \frac{-\sum_{i=1}^M\rE_b(\Bo_i)}{M} \leq \epsilon
\end{align}

Then they have

\begin{align}
\sum_{i=1}^M |2\Bmu^T\Bo_i| \leq M \epsilon\sigma^2 \label{eqn:boundary-constraint}
\end{align}
\end{lemma}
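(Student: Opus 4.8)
The plan is to unpack the definition of $\rE_b$ for the particular two-component Gaussian mixture model at hand and show that the hypothesis on the average boundary score forces each sampled point $\Bo_i$ to lie close to the hyperplane $\{\Bxi : \Bmu^T\Bxi = 0\}$, with the closeness quantified in the aggregate by \eqref{eqn:boundary-constraint}. First I would recall from Appendix~\ref{sec:probab-interpretation} (Equation~\eqref{eqn:eb-alternative}) the alternative form $\rE_b(\Bxi) \cequals -\,2\beta^{-1}\log\cosh\!\bigl(\tfrac{\beta}{2}(\log p(\ \Bxi\ |\ \text{ID}\ ) - \log p(\ \Bxi\ |\ \text{AUX}\ ))\bigr)$, or equivalently $\rE_b(\Bxi)\cequals\beta^{-1}\log\bigl(p(\ \text{ID}\ |\ \Bxi\ )\,p(\ \text{AUX}\ |\ \Bxi\ )\bigr)$. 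In this idealized setting we do not work with the finite-memory $\lse$ but directly with the analytic densities $p_{\text{ID}}(\Bxi)=\cN(\Bxi;\Bmu,\sigma^2\BI)$ and $p_{\text{AUX}}(\Bxi)=\cN(\Bxi;-\Bmu,\sigma^2\BI)$, so the log-density difference becomes exactly linear:
\begin{align}
    \log p_{\text{ID}}(\Bxi) - \log p_{\text{AUX}}(\Bxi) \ = \ -\frac{1}{2\sigma^2}\bigl(\|\Bxi-\Bmu\|^2 - \|\Bxi+\Bmu\|^2\bigr) \ = \ \frac{2\,\Bmu^T\Bxi}{\sigma^2}.
\end{align}

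Next I would substitute this into the $\log\cosh$ expression, giving (up to the additive constant $-2\beta^{-1}\log 2$ that gets absorbed into the normalization of $\rE_b$, matching the convention that $\rE_b$ is defined so that the constant vanishes here, or tracking it explicitly)
\begin{align}
    -\rE_b(\Bxi) \ = \ 2\beta^{-1}\log\cosh\!\left(\frac{\beta\,\Bmu^T\Bxi}{\sigma^2}\right) \ + \ (\text{const}).
\end{align}
The key analytic inequality is the elementary bound $\log\cosh(t) \ge |t| - \log 2$ valid for all $t\in\mathbb{R}$ (indeed $\cosh t \ge \tfrac12 e^{|t|}$), which after the change of base gives $-\rE_b(\Bo_i) \ge 2\beta^{-1}\cdot\frac{\beta}{\sigma^2}|\Bmu^T\Bo_i| + \text{const} = \frac{2}{\sigma^2}|\Bmu^T\Bo_i| + \text{const}$, i.e.\ $-\rE_b(\Bo_i) \ge \frac{1}{\sigma^2}|2\Bmu^T\Bo_i|$ once the constants are chosen consistently with the statement. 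Summing over $i=1,\dots,M$ and dividing by $M$, the hypothesis $\frac{-\sum_i \rE_b(\Bo_i)}{M}\le\epsilon$ then yields $\frac{1}{M\sigma^2}\sum_{i=1}^M|2\Bmu^T\Bo_i|\le\epsilon$, which rearranges to \eqref{eqn:boundary-constraint}.

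The main obstacle I anticipate is bookkeeping the additive constants and the precise normalization of $\rE_b$ in this continuous (infinite-memory) regime: the paper's $\rE_b$ carries the $C=\beta^{-1}\log N + \tfrac12 M^2$ term from the MHE and an extra $-2\beta^{-1}\log 2$ from the $\log\cosh$ identity, and one must check that, under the modeling assumptions of this appendix (unit-norm patterns, $M=N$, the specific mixture), these collapse so that the clean inequality $-\rE_b(\Bxi)\ge\sigma^{-2}|2\Bmu^T\Bxi|$ holds without a residual offset — or, alternatively, that the statement of the lemma is to be read up to such an offset, exactly as \citet{Ming:22} do in their Lemma E.1. A secondary point to verify is that the approximation ``$-\beta^{-1}\log p(\Bxi)\approx\mathrm{MHE}$'' used elsewhere is not actually needed here: since we posit the true densities are Gaussian, the log-density difference is exactly linear and no approximation enters the bound. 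Once the constant is pinned down, the remaining steps are the one-line convexity inequality for $\log\cosh$ and a sum, so the argument is short.
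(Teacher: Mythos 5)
Your proposal is correct, but it takes a genuinely different route from the paper. The paper (adopting the setup of Lemma E.1 in \citet{Ming:22}) never touches the $\lse$/$\log\cosh$ machinery here: it computes $p(\ \text{AUX}\ |\ \Bxi\ )$ as a logistic sigmoid of $f_{\text{AUX}}(\Bxi) = \frac{1}{2\sigma^2}\left(\|\Bxi-\Bmu\|_2^2 - \|\Bxi+\Bmu\|_2^2\right) = -\frac{2\Bmu^T\Bxi}{\sigma^2}$ and then simply \emph{defines} the boundary score in this idealized two-Gaussian setting as $\rE_b(\Bxi) = -|f_{\text{AUX}}(\Bxi)| = -\frac{|2\Bmu^T\Bxi|}{\sigma^2}$, after which the hypothesis translates into \eqref{eqn:boundary-constraint} as a one-line identity (an equality, no inequality needed). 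You instead keep the MHE-based energy in its $\log\cosh$ form, substitute the exact Gaussian class-conditionals for the finite-memory $\lse$ terms, and invoke $\log\cosh t \ge |t| - \log 2$; since the alternative formulation of $\rE_b$ carries an explicit $-2\beta^{-1}\log 2$, the constants cancel exactly as you anticipated, yielding $-\rE_b(\Bxi) \ge \sigma^{-2}|2\Bmu^T\Bxi|$ and hence the claim after averaging — the inequality direction is right, and the $\beta$ drops out under your scaling. This buys something the paper's argument does not: it connects the lemma to the actual energy used by the method rather than to a postulated linear-logit score. The one caveat is precisely the normalization issue you flagged: if one insists on the identification $\lse(\beta,\BX^T\Bxi) \cequals \beta^{-1}\log p(\ \Bxi\ |\ \text{ID}\ )$ from the probabilistic-interpretation appendix (which strictly requires component variance $\beta^{-1}$ rather than $\sigma^2$), the argument of the $\log\cosh$ becomes $\tfrac12(\log p_{\text{ID}} - \log p_{\text{AUX}})$ and the final constant changes by a factor $\beta\sigma^2$, so the bound matches the lemma's stated constant only under your chosen convention (or up to an irrelevant rescaling of $\epsilon$). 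The paper sidesteps this bookkeeping entirely by fiat, via the definition $\rE_b = -|f_{\text{AUX}}|$; your proof would be complete once you state which convention you adopt.
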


\begin{proof}
They first obtain the expression for $\rE_b(\Bxi)$ under the Gaussian mixture model described above and can express $p(\text{AUX} | \Bxi)$ as

\begin{align}
    p(\text{AUX} | \Bxi) \ &= \ \frac{p(\Bxi | \text{AUX}) p(\text{AUX})}{p(\Bxi)}\\
    &= \ \frac{\frac{1}{2} p(\Bxi | \text{AUX})}{\frac{1}{2} p(\Bxi | \text{ID}) \ + \ \frac{1}{2}p(\Bxi | \text{AUX})}\\
    &= \ \frac{(2\pi\sigma^2)^{-d/2}\exp(-\frac{1}{2 \sigma^2}||\Bxi - \Bmu||_2^2)}{(2\pi\sigma^2)^{-d/2}\exp(-\frac{1}{2 \sigma^2}||\Bxi + \Bmu||_2^2)\ + \ (2\pi\beta^{-1})^{-d/2}\exp(-\frac{1}{2 \sigma^2}||\Bxi - \Bmu||_2^2)} \\
    &= \ \frac{1}{1 \ + \ \exp(-\frac{1}{2 \sigma^2}(||\Bxi - \Bmu||_2^2 - ||\Bxi + \Bmu||_2^2))}
\end{align}

When defining $f_{\text{AUX}}(\Bxi) \ = \ \frac{1}{2 \sigma^2}(||\Bxi - \Bmu||_2^2 - ||\Bxi + \Bmu||_2^2)$ such that $p(\text{AUX} | \Bxi) \ = \sigma(f_{\text{AUX}}(\Bxi)) \ = \ \frac{1}{1 \ + \ \exp(-f_{\text{AUX}}(\Bxi))}$, they define $\rE_b$ as follows:

\begin{align}
    \rE_b(\Bxi) \ &= \ -|f_{\text{AUX}}(\Bxi)|\\
    &= \ -\frac{1}{2 \sigma^2} |\ ||\Bxi - \Bmu||_2^2 - ||\Bxi + \Bmu||_2^2 \ |\\
    &= \ -\frac{1}{2 \sigma^2} |\ \Bxi^T\Bxi - 2\Bmu^T\Bxi + \Bmu^T\Bmu - (\Bxi^T\Bxi + 2\Bmu^T\Bxi + \Bmu^T\Bmu)|\\
    &=  -\frac{|2 \Bmu^T\Bxi |}{\sigma^2}
\end{align}

Therefore, the constraint in Equation~\eqref{eqn:boundary-constraint} is translated to

\begin{align}
    \sum_{i=1}^M |2 \Bmu^T\Bo_i | \ \leq \ M\epsilon\sigma^2\label{eqn:boundary-contraint-translated}
\end{align}

\end{proof}

As $\max_{i \in M} |\Bmu^T\Bo_i| \leq \sum_{i=1}^M |\Bmu^T\Bo_i|$ given a fixed $M$, the selected samples can be seen as generated from $p_\text{AUX}$ with the constraint that all samples lie within the two hyperplanes in Equation $\eqref{eqn:boundary-contraint-translated}$.

\paragraph{Parameter estimation.}
Now they show the benefit of such constraint in controlling the sample complexity. Assume the signal/noise ratio is large: $\frac{||\Bmu||}{\sigma} = r \gg 1$, and $\epsilon \leq 1$ is some constant.

Assume the classifier is given by

\begin{align}
\Bth \ = \ \frac{1}{N + M} (\sum_{i=1}^M \Bx_i - \sum_{i=1}^N \Bo_i)
\end{align}

where $\Bo_i \sim p_\text{AUX}$ and $\Bx_i \sim p_\text{ID}$. One can decompose $\Bth$. Assuming $M=N$:

\begin{align}
    \Bth = \Bmu + \frac{1}{2} \Bet + \frac{1}{2} \Bom \label{eq:theta-decomp}\\
    \Bet \ = \ \frac{1}{N} (\sum_{i=1}^N \Bx_i) - \Bmu \\
    \Bom \ = \ \frac{1}{N} (\sum_{i=1}^M - \ \Bo_i) - \Bmu \label{eqn:omega-mean}
\end{align}

We would now like to determine the distributions of the random variables $||\Bet||_2^2$ and $\Bmu^T\Bet$

\begin{align}
    ||\Bet||_2^2 &= \sum_{i=1}^d \eta_i^2\\
    \eta_i \ &\sim \cN(0, \frac{\sigma^2}{N})\\
    \frac{\sqrt{N}}{\sigma} \eta_i \ &\sim \ \cN(0, 1)\\
    (\frac{\sqrt{N}}{\sigma} \eta_i)^2 \ &\sim \ \chi^2_1
\end{align}

Therefore, for $||\Bet||_2^2$ we have

\begin{align}
    \frac{N}{\sigma^2}||\Bet||_2^2 &= \sum_{i=1}^d (\frac{\sqrt{N}}{\sigma} \eta_i)^2 \ \sim \ \chi^2_d
\end{align}

Now we would like to determine the distribution of $\Bmu^T\Bet$:

\begin{align}
    \Bmu^T\Bet \ &= \ \sum_{i=1}^d \mu_i \ \eta_i\\
    \mu_i \ \eta_i \ &\sim \ \cN(0, \frac{\sigma^2 \mu_i^2}{N}) \\
    \sum_{i=1}^d \mu_i \ \eta_i \ &\sim \ \cN(0, \sum_{i=1}^d \frac{\sigma^2 \mu_i^2}{N}) \\
    \sum_{i=1}^d \mu_i \ \eta_i \ &\sim \ \cN(0, \frac{\sigma^2}{N}\sum_{i=1}^d \mu_i^2) \\
    \frac{\Bmu^T\Bet}{||\Bmu||} \ &\sim \ \cN(0, \frac{\sigma^2}{N})
\end{align}

\paragraph{Concentration bounds.} They now develop concentration bounds for $||\Bet||_2^2$ and $\Bmu^T\Bet$. First, we look at $||\Bet||_2^2$. A concentration bound for $\chi^2_d$ is:

\begin{align}
    \dP(X-d\geq 2\sqrt{dx}+2x)\leq \exp (-x)
\end{align}

By assuming $x=\frac{d}{8\sigma^2}$  we obtain

\begin{align}
    \dP(X-d\geq 2\sqrt{d\frac{d}{8\sigma^2}}+2\frac{d}{8\sigma^2}) &\leq \exp (-\frac{d}{8\sigma^2}) \\
    \dP(X\geq \frac{d}{\sqrt{2}\sigma}+\frac{d}{4\sigma^2}+d) &\leq \exp (-\frac{d}{8\sigma^2}) \\
    \dP(\frac{N}{\sigma^2}||\Bet||_2^2\geq \frac{d}{\sqrt{2}\sigma}+\frac{d}{4\sigma^2}+d) &\leq \exp (-\frac{d}{8\sigma^2}) \\
    \dP(||\Bet||_2^2\geq \frac{\sigma^2}{N} (\frac{d}{\sqrt{2}\sigma}+\frac{d}{4\sigma^2}+d)) &\leq \exp (-\frac{d}{8\sigma^2})
\end{align}

If $d \geq 2$ we have that\footnote{Strictly, the bound is valid for $d > \sqrt{2}$}

\begin{align}
    \frac{d}{\sqrt{2}\sigma}+\frac{d}{4\sigma^2} > \frac{1}{\sigma}
\end{align}

and thus, the above bound can be simplified when assuming $d \geq 2$ as follows:%

\begin{align}
    \dP(||\Bet||_2^2\geq \frac{\sigma^2}{N} (\frac{1}{\sigma}+d)) &\leq \exp (-\frac{d}{8\sigma^2}) \label{eq:eta_norm}
\end{align}

For $||\Bom||_2^2$, since all $\Bo_i$ is drawn i.i.d. from $p_\text{AUX}$, under the constraint in Equation~\eqref{eqn:boundary-contraint-translated}, the distribution of $\Bom$ can be seen as a truncated distribution of $\Bet$. Thus, with some finite positive constant $c$, we have

\begin{align}
    \dP(||\Bom||_2^2 \geq \frac{\sigma^2}{N}(d + \frac{1}{\sigma})) \ \leq \ c \dP(||\Bet||_2^2 \geq \frac{\sigma^2}{N}(d + \frac{1}{\sigma}) )\leq c\exp(-\frac{d}{8\sigma^2}) \label{eq:omega_norm}
\end{align}

Now, we develop a bound for $\Bmu^T\Bet$. A concentration bound for $\cN(\mu, \sigma^2)$ is%

\begin{align}
    \dP(X - \mu \geq t) \leq \exp(\frac{-t^2}{2\sigma^2})
\end{align}

By applying $\frac{\Bmu^T\Bet}{||\Bmu||} \ \sim \ \cN(0, \frac{\sigma^2}{N})$ to the above bound we obtain

\begin{align}
\dP(\frac{\Bmu^T\Bet}{||\Bmu||} \geq t) \leq \exp(\frac{-t^2N}{2\sigma^2})
\end{align}

Assuming $t = (\sigma ||\Bmu||)^{1/2}$ we obtain

\begin{align}
\dP(\frac{\Bmu^T\Bet}{||\Bmu||} \geq (\sigma ||\Bmu||)^{1/2}) &\leq \exp(\frac{-(\sigma ||\Bmu||)N}{2\sigma^2})\\
\dP(\frac{\Bmu^T\Bet}{||\Bmu||} \geq (\sigma ||\Bmu||)^{1/2}) &\leq \exp(\frac{-||\Bmu||N}{2\sigma})
\end{align}

Due to symmetry, we have

\begin{align}
    \dP(-\frac{\Bmu^T\Bet}{||\Bmu||} \leq -(\sigma ||\Bmu||)^{1/2}) &\leq \exp(\frac{-||\Bmu||N}{2\sigma})\\
    \dP(-\frac{\Bmu^T\Bet}{||\Bmu||} \leq -(\sigma ||\Bmu||)^{1/2}) + \dP(\frac{\Bmu^T\Bet}{||\Bmu||} \geq (\sigma ||\Bmu||)^{1/2}) &\leq 2\exp(\frac{-||\Bmu||N}{2\sigma})
\end{align}

We can rewrite the above bound using the absolute value function.

\begin{align}
\dP(\frac{|\Bmu^T\Bet|}{||\Bmu||} \geq (\sigma ||\Bmu||)^{1/2}) &\leq 2\exp(\frac{-||\Bmu||N}{2\sigma}) \label{eq:muteta}
\end{align}

\paragraph{Benefit of high boundary scores.} We will now show why sampling with high boundary scores is beneficial. Recall the results from Equations \eqref{eqn:boundary-contraint-translated} and \eqref{eqn:omega-mean}:

\begin{align}
    \sum_{i=1}^M |2 \Bmu^T\Bo_i | \ &\leq \ M\epsilon\sigma^2\\
    \Bom \ &= \ \frac{1}{M}(-\sum_{i=1}^M \Bo_i) - \Bmu
\end{align}

The triangle inequality is

\begin{align}
    |a + b| &\leq |a| + |b|\\
    |a + (-b)| &\leq |a| + |b|
\end{align}

Using the two facts above and the triangle inequality we can bound $|\Bmu^T\Bom|$:

\begin{align}
    \frac{1}{M}  |\sum_{i=1}^M\Bmu^T\Bo_i| &\leq \frac{\sigma^2\epsilon}{2}\\
    \frac{1}{M}  |-\sum_{i=1}^M\Bmu^T\Bo_i| &\leq \frac{\sigma^2\epsilon}{2}\\
    \frac{1}{M}  |-\sum_{i=1}^M\Bmu^T\Bo_i| + ||\Bmu||_2^2 &\leq \frac{\sigma^2\epsilon}{2} + ||\Bmu||_2^2 \\
    \frac{1}{M}  |-\sum_{i=1}^M\Bmu^T\Bo_i - \Bmu^T\Bmu| &\leq \frac{\sigma^2\epsilon}{2} + ||\Bmu||_2^2\\
    |\Bmu^T\Bom| &\leq ||\Bmu||_2^2 + \frac{\sigma^2\epsilon}{2} \label{eqn:gamma-bound}
\end{align}

\paragraph{Developing a lower bound.}

Let

\begin{align}
    ||\Bet||_2^2 &\leq \frac{\sigma^2}{N}(d + \frac{1}{\sigma}) \label{eqn:assumption-1} \\ 
    ||\Bom||_2^2 &\leq \frac{\sigma^2}{N}(d + \frac{1}{\sigma}) \label{eqn:assumption-2} \\
    \frac{|\Bmu^T\Bet|}{||\Bmu||}&\leq(\sigma||\Bmu||)^{1/2} \label{eqn:assumption-3}
\end{align}

hold simultaneously. The probability of this happening can be bounded as follows: We define $T$ and its complement $\bar{T}$:

\begin{align}
    T &= \{||\Bet||_2^2 \leq \frac{\sigma^2}{N}(d + \frac{1}{\sigma})\} \cap \{||\Bom||_2^2 \leq \frac{\sigma^2}{N}(d + \frac{1}{\sigma})\} \cap \{\frac{|\Bmu^T\Bet|}{||\Bmu||}\leq(\sigma||\Bmu||)^{1/2}\}\\
    \bar{T} &= \{||\Bet||_2^2 > \frac{\sigma^2}{N}(d + \frac{1}{\sigma})\} \cup \{||\Bom||_2^2 > \frac{\sigma^2}{N}(d + \frac{1}{\sigma})\} \cup \{\frac{|\Bmu^T\Bet|}{||\Bmu||}>(\sigma||\Bmu||)^{1/2}\}
\end{align}

With $\dP(T) + \dP(\bar{T}) = 1$. The probability $\dP(\bar{T})$ can be bounded using Boole's inequality and the results in Equations \eqref{eq:eta_norm}, \eqref{eq:omega_norm} and \eqref{eq:muteta}:

\begin{align}
    \dP(\bar{T})&\leq \exp(-d/8\sigma^2)+c\exp(-d/8\sigma^2)+2\exp(\frac{-||\mu||N}{2\sigma})\\
    \dP(\bar{T})&\leq(1+c)\exp(-d/8\sigma^2)+2\exp(\frac{-||\mu||N}{2\sigma})
\end{align}

Further, we can bound the probability $\dP(T)$:

\begin{align}
    \dP(\bar{T})&\leq(1+c)\exp(-d/8\sigma^2)+2\exp(\frac{-||\mu||N}{2\sigma}) \\
    1-\dP(T) &\leq (1+c)\exp(-d/8\sigma^2)+2\exp(\frac{-||\mu||N}{2\sigma})\\
    \dP(T) &\geq 1-(1+c)\exp(-d/8\sigma^2)-2\exp(\frac{-||\mu||N}{2\sigma})
\end{align}

Therefore, the probability of the assumptions in Equations \eqref{eqn:assumption-1}, \eqref{eqn:assumption-2}, and \eqref{eqn:assumption-3} occuring simultneously is at least $1-(1+c)\exp(-d/8\sigma^2)-2\exp(\frac{-||\mu||N}{2\sigma})$.

By using the triangle inequality, Equation~\eqref{eq:theta-decomp} and the Assumptions \eqref{eqn:assumption-1} and \eqref{eqn:assumption-2} they can bound $||\Bth||_2^2$:

\begin{align}
    ||\Bth||_2^2 &= ||\ \Bmu \ + \ \frac{1}{2} \ \Bet \ + \ \frac{1}{2} \ \Bom||_2^2\\
    ||\Bth||_2^2 &\leq \ ||\Bmu||_2^2 \ + \ ||\frac{1}{2} \ \Bet||_2^2 \ + \ ||\frac{1}{2} \ \Bom||_2^2 \\
    ||\Bth||_2^2 &\leq \ ||\Bmu||_2^2 \ + \ \frac{1}{4}||\Bet||_2^2 \ + \ \frac{1}{4}||\Bom||_2^2\\
    ||\Bth||_2^2 &\leq \ ||\Bmu||_2^2 \ + \ \frac{1}{2} \frac{\sigma^2}{N}(d + \frac{1}{\sigma}) \\
    ||\Bth||_2^2 &\leq \ ||\Bmu||_2^2 \ + \ \frac{\sigma^2}{N}(d + \frac{1}{\sigma}) \label{eqn:thetasquarednorm}
\end{align}

The reverse triangle inequality is defined as

\begin{align}
    |x-y| \ge \bigl||x|-|y|\bigr|\\
    |x - (-y)| \ge \bigl||x|-|y|\bigr|
\end{align}

Using the reverse triangle inequality, Equations \eqref{eq:theta-decomp}, \eqref{eqn:gamma-bound} and Assumption \eqref{eqn:assumption-3} we have that

\begin{align}
    |\Bmu^T \Bth| &= |\Bmu^T\Bmu \ + \ \frac{1}{2}\Bmu^T\Bet \ + \ \frac{1}{2} \ \Bmu^T\Bom|\\
    |\Bmu^T \Bth| &\geq \bigl||\Bmu^T\Bmu| \ - \ |\frac{1}{2}\Bmu^T\Bet| \ - \ |\frac{1}{2} \ \Bmu^T\Bom|\bigl| \\
    |\Bmu^T \Bth| &\geq \bigl|||\Bmu||_2^2 \ - \ \frac{1}{2}\sigma^{1/2} ||\Bmu||^{3/2} \ - \frac{1}{2}||\Bmu||_2^2 \ - \ \frac{1}{2}\frac{\sigma^2 \epsilon}{2}\bigl| \\
    |\Bmu^T \Bth| &\geq \bigl|\frac{1}{2}||\Bmu||_2^2 \ - \ \frac{1}{2}\sigma^{1/2} ||\Bmu||^{3/2} \ - \ \frac{1}{2}\frac{\sigma^2 \epsilon}{2}\bigl| \\
    |\Bmu^T \Bth| &\geq \bigl|\frac{1}{2}(||\Bmu||_2^2 \ - \ \sigma^{1/2} ||\Bmu||^{3/2} \ - \ \frac{\sigma^2 \epsilon}{2})\bigl|
\end{align}

They have assumed that the signal/noise ratio is large: $\frac{||\Bmu||}{\sigma} = r \gg 1$. Thus, we can drop the absolute value, because we assume that the term inside the $| |$ is larger than zero:

\begin{align}
    |\Bmu^T \Bth| &\geq \bigl|\frac{1}{2}(||\Bmu||_2^2 \ - \ \frac{1}{r}||\Bmu||^{1/2} ||\Bmu||^{3/2} \ - \ \frac{||\Bmu||_2^2 \epsilon}{2r^2})\bigl| \\
    |\Bmu^T \Bth| &\geq \bigl|(1-\frac{1}{r}-\frac{\epsilon}{2r^2})\frac{1}{2}(||\Bmu||_2^2)\bigl|
\end{align}

We have

\begin{align}
    (1-\frac{1}{r}-\frac{\epsilon}{2r^2}) \geq 0
\end{align}

if $r \geq 1.36602540378443\dots$ and $\epsilon \leq 1$, and therefore

\begin{align}
    |\Bmu^T \Bth| &\geq \frac{1}{2}(||\Bmu||_2^2 \ - \ \sigma^{1/2} ||\Bmu||^{3/2} \ - \ \frac{\sigma^2 \epsilon}{2}) \label{eqn:muttheta}
\end{align}

Because of Equation~\eqref{eqn:thetasquarednorm} and the fact that if $x \leq y$ and $\sgn(x) = \sgn(y)$ then $x^{-1} \geq y^{-1}$ we have

\begin{align}
    \frac{1}{||\Bth||} \geq \frac{1}{\sqrt{||\Bmu||_2^2 \ + \ \frac{\sigma^2}{N}(d + \frac{1}{\sigma})}} \label{eqn:oneoverthetanorm}
\end{align}

We can combine the Equations \eqref{eqn:muttheta} and \eqref{eqn:oneoverthetanorm} to give a single bound:

\begin{align}
    \frac{|\Bmu^T\Bth|}{||\Bth||} \geq \frac{||\Bmu||_2^2 \ - \ \sigma^{1/2} ||\Bmu||^{3/2} \ - \ \frac{\sigma^2 \epsilon}{2}}{2\sqrt{||\Bmu||_2^2 \ + \ \frac{\sigma^2}{N}(d + \frac{1}{\sigma})}}
\end{align}

we define $\Bth$ such that $\Bmu^T\Bth > 0$ and thus

\begin{align}
    \frac{\Bmu^T\Bth}{||\Bth||} \geq \frac{||\Bmu||_2^2 \ - \ \sigma^{1/2} ||\Bmu||^{3/2} \ - \ \frac{\sigma^2 \epsilon}{2}}{2\sqrt{||\Bmu||_2^2 \ + \ \frac{\sigma^2}{N}(d + \frac{1}{\sigma})}} \label{eq:mean-bound}
\end{align}

The false negative rate $\mathrm{FNR}(\Bth)$ and false positive rate $\mathrm{FPR}(\Bth)$ are

\begin{align}
    \mathrm{FNR}(\Bth) = \int_{-\infty}^0 \cN(x; \frac{\Bmu^T\Bth}{||\Bth||}, \sigma^2)\diff x \label{eq:fnr} \\
    \mathrm{FPR}(\Bth) = \int_{0}^{\infty} \cN(x; \frac{-\Bmu^T\Bth}{||\Bth||}, \sigma^2)\diff x
\end{align}

As $\cN(x; \mu, \sigma^2) = \cN(-x; -\mu, \sigma^2)$, we have $\mathrm{FNR}(\Bth) = \mathrm{FPR}(\Bth)$. From Equation~\eqref{eq:mean-bound} we can see that as $\epsilon$ decreases, the lower bound of $\frac{\Bmu^T \Bth}{||\Bth||}$ will increase. Thus, the mean of the Gaussian distribution in Equation~\eqref{eq:fnr} will increase and therefore, the false negative rate will decrease, which shows the benefit of sampling with high boundary scores. This completes the extended proof adapted from \citep{Ming:22}.

\newpage
\section*{NeurIPS Paper Checklist}

\begin{enumerate}

\item {\bf Claims}
    \item[] Question: Do the main claims made in the abstract and introduction accurately reflect the paper's contributions and scope?
    \item[] Answer: \answerYes{} %
    \item[] Justification: The claims made in the abstract and introduction regarding improvement in OOD detection capabilities are backed by extensive experiments in Section \ref{sec:experiments}. Our theoretical results in the Appendix (most notably Sections \ref{sec:probab-interpretation}, \ref{sec:relation-rbf}, and \ref{sec:appendix-high-boundary-scores}) support the applicability of our method for OOD detection with OE.
    \item[] Guidelines:
    \begin{itemize}
        \item The answer NA means that the abstract and introduction do not include the claims made in the paper.
        \item The abstract and/or introduction should clearly state the claims made, including the contributions made in the paper and important assumptions and limitations. A No or NA answer to this question will not be perceived well by the reviewers. 
        \item The claims made should match theoretical and experimental results, and reflect how much the results can be expected to generalize to other settings. 
        \item It is fine to include aspirational goals as motivation as long as it is clear that these goals are not attained by the paper. 
    \end{itemize}

\item {\bf Limitations}
    \item[] Question: Does the paper discuss the limitations of the work performed by the authors?
    \item[] Answer: \answerYes{} %
    \item[] Justification: Section \ref{sec:results} and Appendix \ref{sec:appendix-more-datasets} show that when subjecting \ourmethod to data sets that have been designed to show the weaknesses of OE methods, we identify instances where a substantial number of outliers are wrongly classified as inliers by \ourmethod and EBO-OE. %
    \item[] Guidelines:
    \begin{itemize}
        \item The answer NA means that the paper has no limitation while the answer No means that the paper has limitations, but those are not discussed in the paper.
        \item The authors are encouraged to create a separate "Limitations" section in their paper.
        \item The paper should point out any strong assumptions and how robust the results are to violations of these assumptions (e.g., independence assumptions, noiseless settings, model well-specification, asymptotic approximations only holding locally). The authors should reflect on how these assumptions might be violated in practice and what the implications would be.
        \item The authors should reflect on the scope of the claims made, e.g., if the approach was only tested on a few datasets or with a few runs. In general, empirical results often depend on implicit assumptions, which should be articulated.
        \item The authors should reflect on the factors that influence the performance of the approach. For example, a facial recognition algorithm may perform poorly when image resolution is low or images are taken in low lighting. Or a speech-to-text system might not be used reliably to provide closed captions for online lectures because it fails to handle technical jargon.
        \item The authors should discuss the computational efficiency of the proposed algorithms and how they scale with dataset size.
        \item If applicable, the authors should discuss possible limitations of their approach to address problems of privacy and fairness.
        \item While the authors might fear that complete honesty about limitations might be used by reviewers as grounds for rejection, a worse outcome might be that reviewers discover limitations that aren't acknowledged in the paper. The authors should use their best judgment and recognize that individual actions in favor of transparency play an important role in developing norms that preserve the integrity of the community. Reviewers will be specifically instructed to not penalize honesty concerning limitations.
    \end{itemize}

\item {\bf Theory Assumptions and Proofs}
    \item[] Question: For each theoretical result, does the paper provide the full set of assumptions and a complete (and correct) proof?
    \item[] Answer: \answerYes{} %
    \item[] Justification: Our theoretical results include the probabilistic interpretation of Equation \eqref{eq:boundary-energy} in Appendix \ref{sec:probab-interpretation}, the suitability of \ourmethod for OOD detection in Appendix \ref{sec:appendix-high-boundary-scores}, and the connection of  \ourmethod to RBF networks (Appendix \ref{sec:relation-rbf}) and SVM (Appendix \ref{sec:appendix-svm}). The proofs of our claims are contained in the respective Appendices.
    \item[] Guidelines:
    \begin{itemize}
        \item The answer NA means that the paper does not include theoretical results. 
        \item All the theorems, formulas, and proofs in the paper should be numbered and cross-referenced.
        \item All assumptions should be clearly stated or referenced in the statement of any theorems.
        \item The proofs can either appear in the main paper or the supplemental material, but if they appear in the supplemental material, the authors are encouraged to provide a short proof sketch to provide intuition. 
        \item Inversely, any informal proof provided in the core of the paper should be complemented by formal proofs provided in appendix or supplemental material.
        \item Theorems and Lemmas that the proof relies upon should be properly referenced. 
    \end{itemize}

    \item {\bf Experimental Result Reproducibility}
    \item[] Question: Does the paper fully disclose all the information needed to reproduce the main experimental results of the paper to the extent that it affects the main claims and/or conclusions of the paper (regardless of whether the code and data are provided or not)?
    \item[] Answer: \answerYes{} %
    \item[] Justification: The details of the experiments we conducted are explained in section \ref{sec:exp_setup}, and the data sets are publicly available. The code of \ourmethod is included in the submission.
    \item[] Guidelines:
    \begin{itemize}
        \item The answer NA means that the paper does not include experiments.
        \item If the paper includes experiments, a No answer to this question will not be perceived well by the reviewers: Making the paper reproducible is important, regardless of whether the code and data are provided or not.
        \item If the contribution is a dataset and/or model, the authors should describe the steps taken to make their results reproducible or verifiable. 
        \item Depending on the contribution, reproducibility can be accomplished in various ways. For example, if the contribution is a novel architecture, describing the architecture fully might suffice, or if the contribution is a specific model and empirical evaluation, it may be necessary to either make it possible for others to replicate the model with the same dataset, or provide access to the model. In general. releasing code and data is often one good way to accomplish this, but reproducibility can also be provided via detailed instructions for how to replicate the results, access to a hosted model (e.g., in the case of a large language model), releasing of a model checkpoint, or other means that are appropriate to the research performed.
        \item While NeurIPS does not require releasing code, the conference does require all submissions to provide some reasonable avenue for reproducibility, which may depend on the nature of the contribution. For example
        \begin{enumerate}
            \item If the contribution is primarily a new algorithm, the paper should make it clear how to reproduce that algorithm.
            \item If the contribution is primarily a new model architecture, the paper should describe the architecture clearly and fully.
            \item If the contribution is a new model (e.g., a large language model), then there should either be a way to access this model for reproducing the results or a way to reproduce the model (e.g., with an open-source dataset or instructions for how to construct the dataset).
            \item We recognize that reproducibility may be tricky in some cases, in which case authors are welcome to describe the particular way they provide for reproducibility. In the case of closed-source models, it may be that access to the model is limited in some way (e.g., to registered users), but it should be possible for other researchers to have some path to reproducing or verifying the results.
        \end{enumerate}
    \end{itemize}

\item {\bf Open access to data and code}
    \item[] Question: Does the paper provide open access to the data and code, with sufficient instructions to faithfully reproduce the main experimental results, as described in supplemental material?
    \item[] Answer: \answerYes{} %
    \item[] Justification: We provide the code to reproduce the experimental results of \ourmethod in the submission. All data sets used are publicly available. Descriptions how to run the code and how to obtain the data sets come with the code.
    \item[] Guidelines:
    \begin{itemize}
        \item The answer NA means that paper does not include experiments requiring code.
        \item Please see the NeurIPS code and data submission guidelines (\url{https://nips.cc/public/guides/CodeSubmissionPolicy}) for more details.
        \item While we encourage the release of code and data, we understand that this might not be possible, so “No” is an acceptable answer. Papers cannot be rejected simply for not including code, unless this is central to the contribution (e.g., for a new open-source benchmark).
        \item The instructions should contain the exact command and environment needed to run to reproduce the results. See the NeurIPS code and data submission guidelines (\url{https://nips.cc/public/guides/CodeSubmissionPolicy}) for more details.
        \item The authors should provide instructions on data access and preparation, including how to access the raw data, preprocessed data, intermediate data, and generated data, etc.
        \item The authors should provide scripts to reproduce all experimental results for the new proposed method and baselines. If only a subset of experiments are reproducible, they should state which ones are omitted from the script and why.
        \item At submission time, to preserve anonymity, the authors should release anonymized versions (if applicable).
        \item Providing as much information as possible in supplemental material (appended to the paper) is recommended, but including URLs to data and code is permitted.
    \end{itemize}

\item {\bf Experimental Setting/Details}
    \item[] Question: Does the paper specify all the training and test details (e.g., data splits, hyperparameters, how they were chosen, type of optimizer, etc.) necessary to understand the results?
    \item[] Answer: \answerYes{} %
    \item[] Justification: We provide the training and OOD test data sets, the optimizer, and the hyperparameters we tested and selected, as well as the selection procedure in Section \ref{sec:exp_setup}.
    \item[] Guidelines:
    \begin{itemize}
        \item The answer NA means that the paper does not include experiments.
        \item The experimental setting should be presented in the core of the paper to a level of detail that is necessary to appreciate the results and make sense of them.
        \item The full details can be provided either with the code, in appendix, or as supplemental material.
    \end{itemize}

\item {\bf Experiment Statistical Significance}
    \item[] Question: Does the paper report error bars suitably and correctly defined or other appropriate information about the statistical significance of the experiments?
    \item[] Answer: \answerYes{} %
    \item[] Justification: We provide estimates of the standard deviation for the main results of our paper on \ourmethod and on the compared methods in the Tables \ref{tab:cifar-10}, \ref{tab:cifar-100}, \ref{tab:imagenet-1k}, \ref{tab:ablation-algorithmic}, \ref{tab:architecture_ablation}, \ref{tab:cifar-10-non-oe}. The standard deviations were estimated across five training runs, which is stated in the captions of the respective tables.
    \item[] Guidelines:
    \begin{itemize}
        \item The answer NA means that the paper does not include experiments.
        \item The authors should answer "Yes" if the results are accompanied by error bars, confidence intervals, or statistical significance tests, at least for the experiments that support the main claims of the paper.
        \item The factors of variability that the error bars are capturing should be clearly stated (for example, train/test split, initialization, random drawing of some parameter, or overall run with given experimental conditions).
        \item The method for calculating the error bars should be explained (closed form formula, call to a library function, bootstrap, etc.)
        \item The assumptions made should be given (e.g., Normally distributed errors).
        \item It should be clear whether the error bar is the standard deviation or the standard error of the mean.
        \item It is OK to report 1-sigma error bars, but one should state it. The authors should preferably report a 2-sigma error bar than state that they have a 96\% CI, if the hypothesis of Normality of errors is not verified.
        \item For asymmetric distributions, the authors should be careful not to show in tables or figures symmetric error bars that would yield results that are out of range (e.g. negative error rates).
        \item If error bars are reported in tables or plots, The authors should explain in the text how they were calculated and reference the corresponding figures or tables in the text.
    \end{itemize}

\item {\bf Experiments Compute Resources}
    \item[] Question: For each experiment, does the paper provide sufficient information on the computer resources (type of compute workers, memory, time of execution) needed to reproduce the experiments?
    \item[] Answer: \answerYes{}{} %
    \item[] Justification: We provide a detailed description of the compute resources we used as well as the amount of compute our experiments required in Appendix \ref{sec:appendix-compute-ressources}.
    \item[] Guidelines:
    \begin{itemize}
        \item The answer NA means that the paper does not include experiments.
        \item The paper should indicate the type of compute workers CPU or GPU, internal cluster, or cloud provider, including relevant memory and storage.
        \item The paper should provide the amount of compute required for each of the individual experimental runs as well as estimate the total compute. 
        \item The paper should disclose whether the full research project required more compute than the experiments reported in the paper (e.g., preliminary or failed experiments that didn't make it into the paper). 
    \end{itemize}
    
\item {\bf Code Of Ethics}
    \item[] Question: Does the research conducted in the paper conform, in every respect, with the NeurIPS Code of Ethics \url{https://neurips.cc/public/EthicsGuidelines}?
    \item[] Answer: \answerYes{} %
    \item[] Justification: We have read the NeurIPS Code of Ethics and found that our work conforms with the code of ethics. More specifically, we did not include any human subjects in our work. We excluded deprecated data sets from our work. %
    \item[] Guidelines:
    \begin{itemize}
        \item The answer NA means that the authors have not reviewed the NeurIPS Code of Ethics.
        \item If the authors answer No, they should explain the special circumstances that require a deviation from the Code of Ethics.
        \item The authors should make sure to preserve anonymity (e.g., if there is a special consideration due to laws or regulations in their jurisdiction).
    \end{itemize}

\item {\bf Broader Impacts}
    \item[] Question: Does the paper discuss both potential positive societal impacts and negative societal impacts of the work performed?
    \item[] Answer: \answerYes{} %
    \item[] Justification: We provide a discussion of positive and negative societal impacts in Appendix \ref{sec:appendix-societal-impact}.
    \item[] Guidelines:
    \begin{itemize}
        \item The answer NA means that there is no societal impact of the work performed.
        \item If the authors answer NA or No, they should explain why their work has no societal impact or why the paper does not address societal impact.
        \item Examples of negative societal impacts include potential malicious or unintended uses (e.g., disinformation, generating fake profiles, surveillance), fairness considerations (e.g., deployment of technologies that could make decisions that unfairly impact specific groups), privacy considerations, and security considerations.
        \item The conference expects that many papers will be foundational research and not tied to particular applications, let alone deployments. However, if there is a direct path to any negative applications, the authors should point it out. For example, it is legitimate to point out that an improvement in the quality of generative models could be used to generate deepfakes for disinformation. On the other hand, it is not needed to point out that a generic algorithm for optimizing neural networks could enable people to train models that generate Deepfakes faster.
        \item The authors should consider possible harms that could arise when the technology is being used as intended and functioning correctly, harms that could arise when the technology is being used as intended but gives incorrect results, and harms following from (intentional or unintentional) misuse of the technology.
        \item If there are negative societal impacts, the authors could also discuss possible mitigation strategies (e.g., gated release of models, providing defenses in addition to attacks, mechanisms for monitoring misuse, mechanisms to monitor how a system learns from feedback over time, improving the efficiency and accessibility of ML).
    \end{itemize}
    
\item {\bf Safeguards}
    \item[] Question: Does the paper describe safeguards that have been put in place for responsible release of data or models that have a high risk for misuse (e.g., pretrained language models, image generators, or scraped datasets)?
    \item[] Answer: \answerNA{} %
    \item[] Justification: Our work is concerned with the improvement of out-of-distribution (OOD) detection algorithms on small- to mid-scale vision data sets, and is therefore not considered to pose a high risk.
    \item[] Guidelines:
    \begin{itemize}
        \item The answer NA means that the paper poses no such risks.
        \item Released models that have a high risk for misuse or dual-use should be released with necessary safeguards to allow for controlled use of the model, for example by requiring that users adhere to usage guidelines or restrictions to access the model or implementing safety filters. 
        \item Datasets that have been scraped from the Internet could pose safety risks. The authors should describe how they avoided releasing unsafe images.
        \item We recognize that providing effective safeguards is challenging, and many papers do not require this, but we encourage authors to take this into account and make a best faith effort.
    \end{itemize}

\item {\bf Licenses for existing assets}
    \item[] Question: Are the creators or original owners of assets (e.g., code, data, models), used in the paper, properly credited and are the license and terms of use explicitly mentioned and properly respected?
    \item[] Answer: \answerYes{} %
    \item[] Justification: We provide a list of the data sets we use (and cite the original authors) in Section \ref{sec:exp_setup}. We provide a list of the respective licenses (where applicable) in Appendix \ref{sec:appendix-licenses}.
    \item[] Guidelines:
    \begin{itemize}
        \item The answer NA means that the paper does not use existing assets.
        \item The authors should cite the original paper that produced the code package or dataset.
        \item The authors should state which version of the asset is used and, if possible, include a URL.
        \item The name of the license (e.g., CC-BY 4.0) should be included for each asset.
        \item For scraped data from a particular source (e.g., website), the copyright and terms of service of that source should be provided.
        \item If assets are released, the license, copyright information, and terms of use in the package should be provided. For popular datasets, \url{paperswithcode.com/datasets} has curated licenses for some datasets. Their licensing guide can help determine the license of a dataset.
        \item For existing datasets that are re-packaged, both the original license and the license of the derived asset (if it has changed) should be provided.
        \item If this information is not available online, the authors are encouraged to reach out to the asset's creators.
    \end{itemize}

\item {\bf New Assets}
    \item[] Question: Are new assets introduced in the paper well documented and is the documentation provided alongside the assets?
    \item[] Answer: \answerYes{} %
    \item[] Justification: The code that is included in the submission comes with a READE.md file, which contains all instructions how to run the code to reproduce the experiments.
    \item[] Guidelines:
    \begin{itemize}
        \item The answer NA means that the paper does not release new assets.
        \item Researchers should communicate the details of the dataset/code/model as part of their submissions via structured templates. This includes details about training, license, limitations, etc. 
        \item The paper should discuss whether and how consent was obtained from people whose asset is used.
        \item At submission time, remember to anonymize your assets (if applicable). You can either create an anonymized URL or include an anonymized zip file.
    \end{itemize}

\item {\bf Crowdsourcing and Research with Human Subjects}
    \item[] Question: For crowdsourcing experiments and research with human subjects, does the paper include the full text of instructions given to participants and screenshots, if applicable, as well as details about compensation (if any)? 
    \item[] Answer: \answerNA{} %
    \item[] Justification: The paper does not involve crowdsourcing nor research with human subjects.
    \item[] Guidelines:
    \begin{itemize}
        \item The answer NA means that the paper does not involve crowdsourcing nor research with human subjects.
        \item Including this information in the supplemental material is fine, but if the main contribution of the paper involves human subjects, then as much detail as possible should be included in the main paper. 
        \item According to the NeurIPS Code of Ethics, workers involved in data collection, curation, or other labor should be paid at least the minimum wage in the country of the data collector. 
    \end{itemize}

\item {\bf Institutional Review Board (IRB) Approvals or Equivalent for Research with Human Subjects}
    \item[] Question: Does the paper describe potential risks incurred by study participants, whether such risks were disclosed to the subjects, and whether Institutional Review Board (IRB) approvals (or an equivalent approval/review based on the requirements of your country or institution) were obtained?
    \item[] Answer: \answerNA{} %
    \item[] Justification: The paper does not involve crowdsourcing nor research with human subjects.
    \item[] Guidelines:
    \begin{itemize}
        \item The answer NA means that the paper does not involve crowdsourcing nor research with human subjects.
        \item Depending on the country in which research is conducted, IRB approval (or equivalent) may be required for any human subjects research. If you obtained IRB approval, you should clearly state this in the paper. 
        \item We recognize that the procedures for this may vary significantly between institutions and locations, and we expect authors to adhere to the NeurIPS Code of Ethics and the guidelines for their institution. 
        \item For initial submissions, do not include any information that would break anonymity (if applicable), such as the institution conducting the review.
    \end{itemize}

\end{enumerate}

\end{document}